\newtheorem{Language}{Language}
\newtheorem{principle}[Language]{Principle}
\newtheorem{assumption}[Language]{Assumption}
\newtheorem{formalization}[Language]{Formalization}
\newtheorem{parametrization}[Language]{Parametrization}
\newtheorem{theorem}{Theorem}[section]
\newtheorem{lemma}[theorem]{Lemma}
\newtheorem{corollary}[theorem]{Corollary}
\newtheorem{proposition}[theorem]{Proposition}
\theoremstyle{definition} 
\newtheorem{definition}{Definition}[section]
\newtheorem{example}{Example}[section]
\newtheorem{notation}{Notation}[section]
\newtheorem{remark}{Remark}[section]
\newtheorem*{case}{Case}
\DeclareMathOperator{\MLN}{MLN}
\DeclareMathOperator{\Causal}{Causal}
\DeclareMathOperator{\CONST}{constraint}
\DeclareMathOperator{\causal}{Causal}
\DeclareMathOperator{\comp}{comp}
\DeclareMathOperator{\pa}{pa}
\DeclareMathOperator{\Do}{do}
\DeclareMathOperator{\effect}{effect}
\DeclareMathOperator{\graph}{graph}
\DeclareMathOperator{\Pa}{Pa}
\DeclareMathOperator{\error}{error}
\DeclareMathOperator{\Error}{Error}
\DeclareMathOperator{\LP}{LP}
\DeclareMathOperator{\explains}{explains}
\DeclareMathOperator{\constraint}{constraint}
\DeclareMathOperator{\sufficient}{sufficient}
\DeclareMathOperator{\explanatory}{explanatory}
\DeclareMathOperator{\EXP}{explanatory}
\DeclareMathOperator{\cause}{cause}
\DeclareMathOperator{\necessary}{necessary}
\journal{International Journal of Approximate Reasoning}
\newcommand{\greek}[1]{{\selectlanguage{polutonikogreek}#1}}
\begin{document}

\begin{frontmatter}



\title{How Artificial Intelligence Leads to Knowledge Why: \\ An Inquiry Inspired by Aristotle’s \textit{Posterior Analytics}}  
\author[tuebingen_phil]{Guus Eelink}
\author[tuebingen_math]{Kilian R\"{u}ckschlo\ss}
\author[guds,ifi]{Felix Weitk\"amper}

\affiliation[tuebingen_phil]{
               organization=
               {\mbox{Universität Tübingen}},
               addressline={\mbox{Bursagasse 1}},
               city={Tübingen},
               postcode={72070},
               country={Germany}}

\affiliation[tuebingen_math]{
               organization={\mbox{Universität Tübingen}},
               addressline={\mbox{Auf der Morgenstelle 10 (C-Bau)}},
               city={Tübingen},
               postcode={72076},
               country={Germany}}

\affiliation[guds]{
               organization={\mbox{German University of Digital Science}},
               addressline={\mbox{Marlene-Dietrich-Allee 14}},
               city={Potsdam},
               postcode={14482},
               country={Germany}}

 \affiliation[ifi]{
               organization={\mbox{Fakultät für Informatik der LMU München}},
               addressline={\mbox{Oettingenstr. 67}},
               city={München},
               postcode={80538},
               country={Germany}}




\begin{abstract}
Bayesian networks and causal models provide frameworks for reasoning about external interventions, enabling tasks that go beyond what probability distributions alone can support. Although these formalisms are often informally described as encoding causal knowledge, there is a lack of a formal theory that characterizes the kind of knowledge required to predict the effects of such interventions. This work introduces the theoretical framework of \emph{causal systems} to implement Aristotle's distinction between knowledge-\emph{that} and knowledge-\emph{why} within the setting of artificial intelligence. By interpreting existing AI technologies as causal systems, it examines the corresponding forms of knowledge they embody. Finally, it argues that predicting the effects of external interventions is possible only with knowledge-\emph{why}, offering a more precise account of the assumptions underlying this capacity.
\end{abstract}



\begin{keyword}
Causality \sep Do-Calculus \sep Bayesian Networks \sep Causal Models \sep XAI



\end{keyword}

\end{frontmatter}





\section{Introduction}
\label{sec: Introduction}

Causality has been a central topic in philosophy for over two thousand years. More recently, \cite{Causality} brought it into the mainstream of artificial intelligence research. His key insight is that causal knowledge goes beyond descriptive knowledge by supporting queries about the effects of external interventions.

\begin{example}
    Let $h_1$ and $h_2$ be neighboring houses. A fire may break out in either. House~$h_1$ typically catches fire if House~$h_2$ is burning, and vice versa.

    Observing that House~$h_1$ is not burning, one might infer that $h_2$ is not burning either. However, this conclusion is not justified if fires are actively prevented in House~$h_1$, for instance by a sprinkler system. Predicting such intervention effects requires causal---rather than merely descriptive---knowledge.
    \label{example - houses introduction}
\end{example}



Since evaluating the effects of possible actions is a core motivation for modeling, Pearl's theory has been widely adopted across diverse domains \citep{ArifM22, GaoZWFHL24, WuPLZSLQLG24}. 

While causality in artificial intelligence is primarily studied through queries about interventions and counterfactuals, in philosophy it is more often examined in terms of explaining why things happen.

\begin{example}
    In Example~\ref{example - houses introduction}, concluding from a fire in House~$h_1$ \textit{that} a fire broke out in House~$h_1$ or $h_2$ runs against the direction of cause and effect. Such reasoning fails to provide a causal explanation or knowledge \textit{why} a fire occurred.

    By contrast, concluding from a fire breaking out in House~$h_1$ to a fire in House~$h_2$ respects the causal direction. If the origin of the initial fire lies beyond the scope of the model, this yields causal knowledge \textit{why} House~$h_2$ burns.
    \label{example - explanations}
\end{example}

\cite{Causality} develops his theory entirely within his own formalisms -- Bayesian networks and structural causal models -- which he claims capture causal knowledge. Yet, from a philosophical perspective, it remains unclear in what sense these formalisms are genuinely causal.

This ambiguity becomes problematic when Pearl’s ideas are applied to other frameworks, such as probabilistic logic programming~\citep{BaralH07, cplogic, ruckschloss2021exploiting}. Pearl derives causal knowledge by solving systems of defining equations -- an approach that primarily addresses the non-Boolean case. In the cyclic Boolean setting, logic programming provides various semantics for such equations~\citep{kowalski,StableModelSemantics,Clark}. Without a clear formalization of causal knowledge and explanation -- particularly the kind required to predict intervention effects -- such transfers risk misinterpretation and inconsistency.

To address this, the present work connects Pearl's theory to philosophical notions of causal explanation and causal knowledge from epistemology.


\subsection{The Notion of Knowledge in Aristotle's \emph{Posterior Analytics}}
\label{sec: The Notion of Knowledge in Aristotle's Posterior Analytics}

In the \textit{Posterior Analytics} Aristotle sets out his theory of scientific knowledge~(\greek{ἐπιστήμη}).\footnote{\citet{Barnes1995} translates ``\greek{ἐπιστήμη}'' with ``understanding''.} His account provides three key insights into the logic of causal explanations, which lie at the basis of this contribution.

\subsubsection{Knowledge by Demonstration}
\label{subsubsec: Knowledge by demonstration}

In an Aristotelian science facts are explained by way of a so-called \textit{demonstration} (\greek{ἀπόδειξις}), which is a type of deduction displaying the scientific explanation of a fact by deducing it from causally fundamental premises. Within Aristotle's logical theory, demonstrations are a proper subset of syllogisms, the valid deductions which Aristotle characterizes and classifies in his \textit{Prior Analytics}.\footnote{\citet{Barnes1995} also has a translation of the \textit{Prior Analytics}.} A demonstration is therefore a valid deduction which follows the causal order.
\footnote{Cf. \textit{Posterior Analytics} 1.2, 71b16-72b4, translated by \citet{Barnes1995}, pp. 115-116.}
Hence, Aristotle provides the following principles: 

\begin{principle}[Consistency with Deduction]
    Causal explanations or \emph{demonstrations} are an instance of logical deduction, i.e., \emph{syllogisms}.
    \label{principle - consistency with logical reasoning}
\end{principle}

\begin{principle}[Directionality]
    Causal explanations or \emph{demonstrations} proceed exclusively from causes to effects.
    \label{principle - directionality}
\end{principle}

\subsubsection{Indemonstrable knowledge}
\label{subsubsec: Indemonstrable knowledge}

Aristotle holds that demonstrations cannot be infinite or cyclic arguments.

\begin{principle}[Non-Circularity]
    Causal explanations must not be cyclic or result in infinite regress.
    \label{principle - non-circularity}
\end{principle}

\begin{example}
    In Example \ref{example - houses introduction}, concluding from a fire in House $h_1$ to a fire in House $h_2$, and back to a fire in House $h_1$, does not yield a causal explanation.
    \label{example - cyclic explanation}
\end{example}

Therefore, demonstrations must have as their starting-points premises which themselves cannot be further demonstrated.\footnote{Cf. \textit{Posterior Analytics} 1.3, 72b5-73a20, translated by \citet{Barnes1995}, pp. 117-118.} In this light Aristotle argues that there is also \textit{indemonstrable} knowledge. It is obtained from the essences of things, which in Aristotle's ontology are the fundamental constituents of reality to which all causal explanations in science should be traced back. To acquire indemonstrable knowledge a scientist needs to possess insight into these essences, which is called \textit{nous} (\greek{νοῦς}).\footnote{Aristotle discusses \textit{nous} in the last chapter of the \textit{Posterior Analytics}, 2.19, translated by \citet{Barnes1995}, pp. 165-166, who renders ``\greek{νοῦς}'' as ``comprehension''.} 
 As the meaning of \textit{nous} in artificial intelligence is unclear, this work avoids \textit{nous} and relies on Aristotle's subordination of sciences. 

\subsubsection{Knowledge-\emph{that}, knowledge-\emph{why} and the Subordination of Sciences}
\label{subsubsec: Knowledge that and knowledge why}

A third insight from Aristotle's theory of science is that the facts can be established even if one does not yet have scientific explanations of them. Aristotle allows for this by distinguishing between knowledge of the \textit{that} (\greek{ὅτι}) and knowledge of the \textit{why} (\greek{διότι}). The scientist first makes observations and collects data and thus acquires knowledge-\textit{that} of a set of facts without yet knowing the scientific explanations of those facts —  thus not yet having knowledge-\textit{why}. In order to acquire knowledge-\textit{why}, she must subsequently gain an understanding of the underlying essences and determine which facts follow directly from these essences, and are thus indemonstrable, and which facts can be demonstrated by way of those indemonstrable facts. On this basis she can then, in the final stage of her research, construct demonstrations and obtain knowledge-\textit{why}. 

As illustrated in Example \ref{example - explanations}, knowledge-\textit{that} may itself come with a kind of explanation which falls short of being scientific and therefore does not yield knowledge-\textit{why}.\footnote{Cf. \textit{Posterior Analytics} 1.13, 78a28-78b4, translated by \citet{Barnes1995}, pp. 127-128.} Such an explanation involved in knowledge-\textit{that} is deficient in that it does not follow the causal order of things. 

The distinction between knowledge-\textit{that} and knowledge-\textit{why} also plays a role in Aristotle's subordination of areas of scientific inquiry. Aristotle holds that certain areas of science are subordinated to others, which means that the premises used by the subordinate areas —  for instance, optics —  are explained by the superordinate areas —  for instance, geometry, in the case of optics.  The subordinate area of science can take as premises for its demonstrations the results of the superordinate area and on this basis explain the phenomena it is concerned with and thus yield knowledge-\textit{why}.\footnote{Cf. \textit{Posterior Analytics} 1.13, 79a10-16, translated by \citet{Barnes1995}, pp. 128-129.} 

This idea of a subordination of sciences is useful within the context of artificial intelligence. Instead of needing \textit{nous} to obtain knowledge of the premises of demonstrations, an intelligent system can obtain the premises from a superordinate science and on their basis construct demonstrations, thus yielding knowledge-\textit{why} within the given area of science. 

\begin{principle}[Causal Foundation]
     Knowledge-\emph{why} is obtained within an area of science. \textit{Causal explanations} or \emph{demonstrations} that yield knowledge-\emph{why} must originate from external premises \( \mathcal{E} \) that are further demonstrated in a superordinate area of science. 
     \label{principle - causal foundation}
\end{principle}

This work interprets formalisms from artificial intelligence as areas of science that are entirely grounded in superordinate areas of science and thus do not rely on \textit{nous}. It transfers the notions of \emph{demonstration} and knowledge-\emph{why} into this setting, and argues that this form of knowledge is embodied in Pearl’s formalisms, underlying their ability to support reasoning about interventions. To attain genuine scientific knowledge in the Aristotelean sense through this approach, however, one must gain insight into the essences of things—potentially through human--machine interaction.

\subsection{Outline of the Paper}

Section~\ref{sec: Knowledge in Deterministic Systems} explores causal reasoning in a deterministic Boolean setting:

Section~\ref{subsec: Preliminaries} introduces the logical theory of causality from \cite{Bochman}, which this work builds on. Section~\ref{subsec: Critique of Bochman's Logical Theory of Causality} critiques this formalism, highlighting issues with causal cycles. Section~\ref{subsec: Causal Systems} addresses these issues by proposing \emph{deterministic causal systems} as a general framework for reasoning about \mbox{knowledge-\textit{why}}. Section~\ref{subsec: Interpreting Artificial Intelligence Frameworks as Deterministic Causal Systems} analyzes the causal models of \cite{Causality} within this framework, Section~\ref{subsubsec: External Interventions in Causal Systems} extends the treatment of interventions, and Section~\ref{sec: The Constraint and Explanatory Content of Causal Reasoning} prepares the framework for probabilistic generalization.

Section~\ref{sec: Knowledge under Uncertainty} introduces uncertainty into this theory:

Section~\ref{subsec: Preliminaries for Uncertainty} presents LogLinear models~\mbox{\citep{MLN}}, Bayesian networks, and probabilistic causal models, along with Pearl's notion of intervention~\citep{Causality}. Section~\ref{subsec: Causal Systems: A Generic Representation of Causal Reasoning} introduces \emph{maximum entropy causal systems} for reasoning about knowledge-\textit{why} under uncertainty. Section~\ref{subsec: Interpreting Current Artificial Intelligence Technologies as Causal Systems} interprets the technologies introduced earlier within this framework, analyzing their knowledge content and generalizing the notion of intervention.

Finally, Section~\ref{sec: Conclusion} concludes the paper, and~\ref{section: Glossary} provides a glossary.

\subsubsection{Bochman’s Theory: A Starting Point in the Acyclic Case}
\label{subsec: Applying Aristotle's Notion of Knowledge in Artificial Intelligence}

\cite{Bochman} applies the idea that causal relations are typically expressed in the form of rules or laws. As noted in Chapter~1 of \mbox{\cite{Hulswit}}, this idea was first articulated by Descartes:

\begin{principle}[Causal Rules]
``...we can obtain knowledge of the rules or laws of nature, which are the secondary and particular causes...''~(Ren\'e Descartes: Principles of Philosophy II:37; translation by \cite{Miller2009})
\label{principle - causal rules}
\end{principle}   

\cite{Bochman} concludes that causal knowledge should be expressed by a \emph{causal theory}~$\Delta$, consisting of a set of causal rules of the form~$\phi \Rightarrow \psi$, where~$\phi$ and~$\psi$ are statements. The expression~$\phi \Rightarrow \psi$ is read as \mbox{``$\phi$ \emph{causes} $\psi$''}. Such a causal rule indicates that there exists a demonstration of~$\psi$ based on the premise~$\phi$. Consequently, \mbox{knowledge-\textit{why}}~$\phi$ gives rise to \mbox{knowledge-\textit{why}}~$\psi$.


\begin{example}
    Consider a road passing through a field with a sprinkler in it. The sprinkler is switched on by a weather sensor if it is sunny. Suppose that it rains whenever it is cloudy and that the road is wet if either it rains or the sprinkler is turned on. Finally, suppose that a wet road is slippery. 
    
    Denote by $\textit{cloudy}$ the event that the weather is cloudy, by $\textit{sprinkler}$ the event that the sprinkler is on, by $\textit{rain}$ the event of rainy weather, by $\textit{wet}$ the event that the road is wet, and by $\textit{slippery}$ the event that the road is slippery. 
    
    The described causal knowledge leads to the following causal rules:  
    \begin{align}
        & \textit{cloudy} \Rightarrow \textit{rain} && 
        \label{equation - first rule sprinkler} 
        \neg \textit{cloudy} \Rightarrow \textit{sprinkler} && 
        && \textit{rain} \Rightarrow \textit{wet} && 
        \\&
        \textit{sprinkler} \Rightarrow \textit{wet} 
        && \textit{wet} \Rightarrow \textit{slippery} 
        \label{equation - second rule sprinkler}
    \end{align} 
    \label{example - introduction running example}
    \label{example - causal reasoning is captured in rules}
\end{example} 

Principle~\ref{principle - causal foundation} gives rise to a set of external premises~$\mathcal{E}$, consisting of statements~$\epsilon$ that, if observed, do not require further explanation or \emph{demonstration}. \mbox{\cite{Bochman}} introduces such external premises through \emph{default rules} of the form~$\phi \Rightarrow \phi$, which express that a statement~$\phi$ is self-explanatory. In doing so, he obtains a language for representing Boolean causal models of \cite{Causality}, where causal relationships are modeled by \emph{structural equations}.

\begin{example}
    Example~\ref{example - introduction running example} gives rise to the following default rules:
    \begin{align}
        & \textit{cloudy} \Rightarrow \textit{cloudy},~ \neg \textit{cloudy} \Rightarrow \neg \textit{cloudy}, 
          && \text{(it is either cloudy or not)} 
        \\
        & \neg \textit{sprinkler} \Rightarrow \neg \textit{sprinkler}, && \text{(the sprinkler initially is off)} \\
        & \neg \textit{rain} \Rightarrow \neg \textit{rain} ,~~~~~ 
        \neg \textit{wet} \Rightarrow \neg \textit{wet}, && 
        \neg \textit{slippery} \Rightarrow \neg \textit{slippery} 
        \label{equation - last rule sprinkler}
    \end{align}
    
    \cite{Bochman} models Example~\ref{example - introduction running example} using the causal theory~$\Delta$, consisting of Rules~(\ref{equation - first rule sprinkler})--(\ref{equation - last rule sprinkler}). It corresponds to the causal model $\mathcal{M}$ with structural equations:
    \begin{align}
        & \textit{rain} := \textit{cloudy}, 
        && \textit{sprinkler} := \neg \textit{cl...}, 
        && \textit{wet} := \textit{rain} \lor \textit{sp...}, 
        && \textit{slippery} := \textit{wet}.
        \label{equation - structural equations sprinkler}
    \end{align}
    Intervening and switching the sprinkler off yields  
    the modified model $\mathcal{M}_{\neg \textit{sprinkler}}$:
    \begin{align}
        & \textit{rain} := \textit{cloudy},  
        && \textit{sprinkler} := \textit{False},  
        && \textit{wet} := \textit{rain} \lor \textit{sp...},
        &&  \textit{slippery} := \textit{wet}.
        \label{equation - structural equations sprinkler after interventions}
    \end{align}
    It corresponds to the modified causal theory $\Delta_{\neg \emph{sprinkler}}$ that results from $\Delta$ by replacing the rule $\neg \emph{couldy} \Rightarrow \emph{sprinkler}$ with $\top \Rightarrow \neg \emph{sprinkler}$.
    \label{example - indemonstrable knowledge}
\end{example}


\cite{Bochman} uses propositional logic to reason about \emph{syllogisms}, interpreting the provability operator~${(\vdash)/2}$ as the acquisition of knowledge-\textit{that}. For a set of statements~$\Phi$ and a statement~$\psi$, the expression~${\Phi \vdash \psi}$ reads: ``\mbox{knowledge-\textit{that}}~$\Phi$ leads to knowledge-\textit{that}~$\psi$.''

To capture the acquisition of knowledge-\textit{why}, he extends causal theories~$\Delta$ to an explainability relation~${(\Rrightarrow_{\Delta})/2}$ defined by axioms grounded in \textit{consistency with deduction} in Principle~\ref{principle - consistency with logical reasoning}. \cite{Bochman} then interprets the expression~$\Phi \Rrightarrow_{\Delta} \psi$ as ``knowledge-\emph{that} $\Phi$ explains knowledge-\emph{why} $\psi$.''  

\begin{example}
    In Examples \ref{example - introduction running example} and \ref{example - indemonstrable knowledge}, suppose it is observed that the weather is cloudy and rainy, i.e., there is knowledge-\textit{that} it is cloudy and rainy. 
    
    Since ${\textit{cloudy} \Rightarrow \textit{rain}} \in \Delta$, there is a demonstration of rainy weather based on the premise that it is cloudy. As the default rule ${\textit{cloudy} \Rightarrow \textit{cloudy}} \in \Delta$, $\textit{cloudy}$ is an external premise. Hence, demonstrating~$\textit{cloudy}$ lies beyond the scope of the given area of science—one might argue that it belongs to meteorology. 
    
    Thus, observing cloudy weather and the demonstration of~$\textit{rain}$ from~$\textit{cloudy}$ provides \mbox{knowledge-\textit{why}}~$\textit{rain}$. It follows that~$\textit{cloudy} \Rrightarrow_{\Delta} \textit{rain}$.
\end{example}

To obtain a semantics for causal theories, \cite{Bochman} invokes the principle of \textit{natural necessity}, which Aquinas formulated as follows:

\begin{principle}[Natural Necessity]
    ``... given the existence of the cause, the effect must necessarily follow.'' (Thomas Aquinas: Summa Contra Gentiles II: 35.4; translation by \cite{Anderson})
    \label{principle - Aquinas}
\end{principle}   

\begin{example}
    In Example~\ref{example - introduction running example}, this means, for instance, that the road is wet whenever it rains.
    \label{example - Mill's covering law}
\end{example}

Furthermore, \cite{Bochman} asserts  the assumption of \textit{sufficient causation}, which Leibniz formulated as follows:

\begin{assumption}[Sufficient Causation]
    ``...there is nothing without a reason, or no effect without a cause.''  
    (Gottfried Wilhelm Leibniz: First Truths; translation by 
\cite{Leibniz}, p.~268)
    \label{principle - Leibniz}
\end{assumption}

\begin{example}
    In Example~\ref{example - introduction running example}, this implies that rain does not occur without a cause. Therefore, if rain is observed, it must be cloudy. Assumption~\ref{principle - Leibniz} ensures that all possible occurrences are explained by the given area of science.

    Principle \ref{principle - Aquinas} and Assumption~\ref{principle - Leibniz} imply that the causal theory~$\Delta$ in Example~\ref{example - indemonstrable knowledge} corresponds to the states described by the sets:
    $\omega_1 :=  \{ \textit{sprinkler},~ \textit{wet},~ \textit{slippery} \}$ and $\omega_2 := 
    \{ \textit{cloudy},~ \textit{rain},~ \textit{wet},~ \textit{slippery} \}$, i.e., the solutions of Equations (\ref{equation - structural equations sprinkler}).
    \label{example - causal worlds}
    \label{example - causal law}
\end{example}

\subsubsection{First Contribution: From Bochman's Theory to Cyclic Causal Relations}

The approach of \cite{Causality} and \mbox{\cite{Bochman}} leads to counterintuitive results in the presence of cyclic causal relationships.

\begin{example}
    In Example~\ref{example - houses introduction},  denote by $\textit{start\_fire}(h_i)$, $i = 1, 2$ that House~$h_i$ starts to burn, and by $\textit{fire}(h_i)$ that House~$h_i$ is burning.  Accepting $\textit{start\_fire}(h_i)$ as external premises, the situation is captured by the following causal theory~$\Delta$:
    
    \begin{align}
        & \textit{fire}(h_2) \Rightarrow \textit{fire}(h_1), && \textit{fire}(h_1) \Rightarrow \textit{fire}(h_2),
        \label{equation 1 - counterexample Bochman}\\
        & \textit{start\_fire}(h_1) \Rightarrow \textit{fire}(h_1), && \textit{start\_fire}(h_2) \Rightarrow \textit{fire}(h_2),
        \label{equation 2 - counterexample Bochman} \\
        & \textit{start\_fire}(h_1) \Rightarrow \textit{start\_fire}(h_1), && \textit{start\_fire}(h_2) \Rightarrow \textit{start\_fire}(h_2), \\
        & \neg \textit{fire}(h_1) \Rightarrow \neg \textit{fire}(h_1), && \neg \textit{fire}(h_2) \Rightarrow \neg \textit{fire}(h_2), \\
        & \neg \textit{start\_fire}(h_1) \Rightarrow \neg \textit{start\_fire}(h_1), && \neg \textit{start\_fire}(h_2) \Rightarrow \neg \textit{start\_fire}(h_2).
        \label{equation - last rule counterexample Bochman}
    \end{align}


    Rules~(\ref{equation 1 - counterexample Bochman}) imply both ${\textit{fire}(h_1) \Rrightarrow_{\Delta} \textit{fire}(h_1)}$ and ${\textit{fire}(h_2) \Rrightarrow_{\Delta} \textit{fire}(h_2)}$, contradicting Principles \ref{principle - non-circularity} and~\ref{principle - causal foundation}. 
    Furthermore, the causal theory~$\Delta$ and the corresponding causal model admit a possible world in which both houses burn, even though neither of them initially caught fire. This contradicts everyday intuition: houses do not catch fire merely because they can potentially ignite one another.
    \label{example - counterexample Bochman introduction}
\end{example}

To address this issue, areas of science are represented as \emph{causal systems}: 
\[
\textbf{CS} := (\Delta, \mathcal{E}, \mathcal{O}) \, .
\]

Here,~$\Delta$ is a causal theory that formalizes the causal knowledge within a given area of science, as illustrated in Example~\ref{example - introduction running example};~$\mathcal{E}$ is a set of statements interpreted as \emph{external premises}; and~$\mathcal{O}$ is a set of observations, i.e., \mbox{knowledge-\emph{that}}, which the system~$\textbf{CS}$ uses to reason within this area of science. 


\begin{remark}
    Causal theories $\Delta$ may also mention expressions like~${\top \Rightarrow \phi}$ for a statement $\phi$. The best interpretation of this construct is that the truth of $\phi$ is directly obtained from the essences in Aristotle's ontology. Thus, this statement expresses that \textit{nous} into the essence underlying $\phi$ is needed. 
    \label{remark - metaphysical justification}
\end{remark}

\begin{remark}
     A causal theory $\Delta$ may also mention expressions like~${\phi \Rightarrow \bot}$ for a statement $\phi$. Such expressions are excluded within an Aristotelian science, in which all the premises of demonstrations are true, as discussed in Section~\ref{subsubsec: Knowledge that and knowledge why}.
     \label{remark - observations}
\end{remark}

The system then determines possible states of the world according to Principles~\ref{principle - causal foundation} and~\ref{principle - Aquinas}, as well as Assumption~\ref{principle - Leibniz}. If it does so by additionally adhering to \emph{directionality} in Principle~\ref{principle - directionality}, the system is said to acquire knowledge-\emph{why}.

\begin{example}
    Example~\ref{example - houses introduction} is represented by the system~$\textbf{CS} := (\Delta, \mathcal{E}, \mathcal{O})$, where~$\Delta$ is the causal theory consisting of Rules~(\ref{equation 1 - counterexample Bochman}) and~(\ref{equation 2 - counterexample Bochman}) from Example~\ref{example - counterexample Bochman introduction}; the set of observations is~$\mathcal{O} = \emptyset$; and the external premises are given by
    \[
        \mathcal{E} := \left\{ 
            \textit{start\_fire}(X),~ \neg \textit{start\_fire}(X),~ \neg \textit{fire}(X) \mid X \in \{ h_1, h_2 \} 
        \right\} \, .
    \]

    Committing to Principle~\ref{principle - causal foundation}, the system~$\textbf{CS}$ does not consider the explanations~${\textit{fire}(h_1) \Rrightarrow_{\Delta} \textit{fire}(h_1)}$ and~${\textit{fire}(h_2) \Rrightarrow_{\Delta} \textit{fire}(h_1)}$ to be causal. Hence, \emph{sufficient causation} in Assumption~\ref{principle - Leibniz} rules out the world in which both houses are burning while neither has initially caught fire. 
    
    As the system $\textbf{CS}$ has no observations, it possesses knowledge-\emph{why}.
    \label{example - introducing causal systems}
\end{example}

\subsubsection{Second Contribution: On Feasibility of Interventions}

Let $\textbf{i}$ be an intervention such that the corresponding modified causal system~${\textbf{CS}_{\textbf{i}} := (\Delta_{\textbf{i}}, \mathcal{E}, \mathcal{O})}$ possesses knowledge-\emph{why}. This work then argues that the following principle of \emph{non-interference} is satisfied, which justifies the use of~$\textbf{CS}_{\textbf{i}}$ for predicting  the effect of Intervention $\textbf{i}$. 

\begin{principle}[Non-Interference]
    Effects of an intervention~$\textbf{i}$ propagate only along the causal direction and do not influence unrelated external premises.
    \label{principle - non-interference}
    \label{assumption - feasibility of external interventions}
\end{principle}

\begin{example}
    The causal system $\textbf{CS}$ in Example \ref{example - introducing causal systems} possesses knowledge about external interventions.
\end{example}

\subsubsection{Third Contribution: A Theory of Causal Reasoning Under Uncertainty}

Note that \emph{natural necessity} in Principle~\ref{principle - Aquinas} may fail in real-world situations:

 \begin{example}
     In Example \ref{example - causal reasoning is captured in rules}, suppose that it is cloudy and raining. The clouds are clearly a cause for the rain. However, it is not necessarily the case that clouds lead to rain, as one can easily imagine a heavily overcast day without~rain. 
 \end{example}

To weaken the notion of \textit{natural necessity} in Principle~\ref{principle - Aquinas}, this work introduces uncertainty about whether it applies to the rules in a causal theory~$\Delta$. Saying that \textit{natural necessity} applies to ${\phi \Rightarrow \psi}$ with probability~$\pi$ means that the material implication~${\phi \rightarrow \psi}$ holds with probability~$\pi$.
 
Assume that each statement~$\phi_i$,~$1 \leq i \leq n$ holds with probability~$\pi_i$. Instead of deducing knowledge-\textit{that} via provability~$(\vdash)/2$, a probabilistic analogue is obtained by applying the principle of \emph{maximum entropy} of~\cite{Shannon}.

Maximizing entropy~$H(\pi)$ under the constraint that each~$\phi_i$ holds with probability~$\pi_i$ for $1 \leq i \leq n$ generally does not yield a distribution~$\pi$ that can be easily expressed in terms of the~$\pi_i$. This work therefore adopts Parametrization~\ref{parametrization - maximum entropy model}, used in the LogLinear models of~\cite{MLN}, and represents uncertainty about the~$\phi_i$ through weights~${w_i \in \mathbb{R} \cup \{ \pm \infty \}}$.

These considerations lead to the notion of a \emph{maximum entropy causal system:}
\[
\textbf{CS} := (\Theta, \mathcal{E}, \mathcal{O}, \Sigma)
\]
Here,~$\Theta$ is a set of weighted causal rules~$(w, \phi \Rightarrow \psi)$, where~$w \in \mathbb{R} \cup \{ \pm \infty \}$ expresses uncertainty about the \emph{natural necessity} of~$\phi \Rightarrow \psi$ within an area of science;~$\mathcal{E}$ is a set of \emph{external premises};~$\mathcal{O}$ a set of observations; and~$\Sigma$ a set of weighted constraints~$(w, \phi)$, which represent results from superordinate areas of science, where~$\phi$ is a statement and~$w \in \mathbb{R} \cup \{ \pm \infty \}$ is its associated weight.

\begin{remark}
    The choice of the letters $\Theta$, $\mathcal{E}$, $\mathcal{O}$, and $\Sigma$ is mnemonic: they spell ``Theos,'' the Greek word for ``God.''
\end{remark}

In Bayesian networks, \cite{Williamson2001} notes that maximizing entropy conflicts with the following  consequence of \emph{directionality} in Principle~\ref{principle - directionality}:

\begin{principle}[Causal Irrelevance]
    Unobserved non-causes do not change~\mbox{beliefs}.
    \label{principle - causal irrelevance}
\end{principle}

This work follows \cite{Williamson2001} in resolving this conflict in Formalization \ref{formalization - causal irrelevance} and argues that an analogue to knowledge-\emph{why} is obtained if the entropy is maximized by additionally accounting for Principles \ref{principle - causal foundation}, \ref{principle - causal irrelevance} and Assumption~\ref{principle - Leibniz}. 

This leads to the \emph{causal semantics} of \emph{maximum entropy causal systems} in Definition \ref{definition - causal semantics} and Formalization \ref{formalization - knowledge-why under uncertainty}, which provide probabilistic counterparts to Aristotle’s notions of \emph{demonstration} and knowledge-\textit{why}.

To demonstrate the approach's effectiveness, this work analyzes the knowledge captured by the Bayesian networks and causal models of \cite{Causality}. In particular, \cite{Williamson2001} shows that a Bayesian network's distribution arises by extending its probabilistic information and greedily maximizing entropy along the causal order. Hence, Bayesian networks provide a probabilistic analogue of Aristotle's knowledge-\textit{why}, enabling them to answer queries about external interventions. 

Thus, implicit philosophical assumptions in \cite{Causality} are made explicit.

\begin{example}
Modify Example~\ref{example - introduction running example} by assuming the sprinkler is activated by a weather sensor with probability~$0.1$ if it is cloudy and~$0.7$ otherwise. It rains with probability~$0.6$ when cloudy. If it rains or the sprinkler is on, the pavement gets wet with probability~$0.9$. If the pavement is wet, the road is slippery with probability~$0.8$.

\cite{Causality} represents this mechanism by the causal model $\mathcal{M}$:
\begin{align*}
& \textit{cloudy} := u_1 
&& \textit{rain} := \textit{cl...} \land u_2  &&  \textit{sprinkler} := ( \textit{cl...}  \land u_3) \lor (\neg \textit{cl...} \land u_4)  \\
&&&\textit{wet}   := ( \textit{rain}  \lor \textit{spr...}) \land u_5    &&  \textit{slippery}   :=  \textit{wet}   \land u_6   
\end{align*}

To represent the uncertainties in the story, he specifies the probabilities: ${\pi(u_1) = 0.5}$, ${\pi(u_2) = 0.6}$, ${\pi(u_3) = 0.1}$, ${\pi(u_4) = 0.7}$, ${\pi(u_5) = 0.9}$ and ${\pi(u_6) = 0.8}$. 
Asserting that $u_1,...,u_6$ are mutually independent random variables defines a unique distribution $\pi$, resulting in the probabilistic causal model $\mathbb{M} := (\mathcal{M}, \pi)$. 

Here, uncertainty arises from hidden variables, implicitly represented in the error terms~$\textbf{U} := \{ u_1,\dots,u_6 \}$. For example,~$u_3$ summarizes potential causes—such as sensor failure—for the sprinkler being on despite cloudy weather. These factors are not modeled explicitly but are captured by~$\textbf{U}$ and~$\pi$.

The model $\mathbb{M}$ yields the maximum entropy causal system without observations ${\textbf{CS}(\mathbb{M}) := (\Theta, \mathcal{E}, \emptyset, \Sigma)}$, where $\Theta$ consists of rules with infinite~weight:
\begin{align}
& (+ \infty,  u_1 \Rightarrow \textit{cloudy})   
&& (+ \infty, \textit{cloudy} \land u_2 \Rightarrow  \textit{rain} )   \\
&  ( + \infty, \textit{cloudy}  \land u_3  \Rightarrow  \textit{sprinkler} )     
&&  ( + \infty, \neg \textit{cloudy} \land u_4  \Rightarrow  \textit{sprinkler} ) \\ &
( + \infty , \emph{rain} \Rightarrow \textit{wet}) &&  ( + \infty , \emph{sprinkler} \Rightarrow \textit{wet})   \\
&  (+ \infty,  \emph{wet} \land u_6 \Rightarrow \textit{slippery});  
\end{align}
the external premises are $\mathcal{E} := \{ u_1, \neg u_1,...,u_6, \neg u_6, \neg \textit{cloudy},..., \neg \textit{sprinkler}  \}$; and $\Sigma := \{ (\ln(0.5 \cdot 0.6 \cdot ... \cdot 0.8), u_1 \land ... \land u_6 ) ,...,
(\ln(0.5 \cdot 0.4 \cdot ... \cdot 0.2), \neg u_1 \land ... \land \neg u_6 )\}$. It possesses knowledge-\emph{why} and can reason on interventions as desired.

To avoid introducing hidden variables, \cite{Causality} models the situation with a Bayesian network~$\textbf{BN} := (G, \pi(\cdot \mid \pa(\cdot)))$, where~$G$ is a directed acyclic graph and~$\pi(\cdot \mid \pa(\cdot))$ the associated parameters:

\begin{equation}
\begin{tikzcd}
        & \textit{rain} \arrow[rd] &               &          \\
        \textit{cloudy} \arrow[ru] \arrow[r] & \textit{sprinkler} \arrow[r] & \textit{wet} \arrow[r] & \textit{slippery}        
    \end{tikzcd}
    \label{equation - causal structure}
\end{equation}
\begin{align}
        \nonumber \pi (\textit{cloudy}) &= 0.5, \\
        \pi (\textit{rain} \vert \textit{cloudy}) &= 0.6, \quad &\nonumber \pi (\textit{rain} \vert \neg \textit{cloudy}) &= 0, \\
        \nonumber \pi (\textit{sprinkler} \vert \textit{cloudy}) &= 0.1, \quad &\pi (\textit{sprinkler} \vert \neg \textit{cloudy}) &= 0.7, \\ \nonumber
        \pi (\textit{wet} \vert \textit{rain}, \textit{sprinkler}) &= 0.9, \quad &\pi (\textit{wet} \vert \neg \textit{rain}, \textit{sprinkler}) &= 0.9, \\ \nonumber
        \pi (\textit{wet} \vert \textit{rain}, \neg \textit{sprinkler}) &= 0.9, \quad &\pi (\textit{wet} \vert \neg \textit{rain}, \neg \textit{sprinkler}) &= 0, \\
        \pi (\textit{slippery} \vert \textit{wet}) &= 0.8, \quad &\pi (\textit{slippery} \vert \neg \textit{wet}) &= 0.
        \label{equation - parameters of a BN}
\end{align}
The Bayesian network $\textbf{BN}$ corresponds to the maximum entropy causal system without observations $\textbf{CS}(\textbf{BN}) := (\Theta, \mathcal{E}, \emptyset, \Sigma)$, where $\Sigma := \{ (\ln(0.5), \emph{cloudy}) \}$; $\mathcal{E} := \{ \emph{cloudy}, \neg \emph{cloudy} , \neg \emph{rain}, ..., \neg \emph{slippery}\}$;  and $\Theta$ is given by   
\begin{align*}
        &(w_1,~ \textit{cloudy} \Rightarrow \textit{rain})  && (- \infty,~ \neg \textit{cloudy} \Rightarrow \textit{rain} ), \\
        &(w_2,~ \textit{cloudy} \Rightarrow \textit{sprinkler})  && (w_3,~ \neg \textit{cloudy} \Rightarrow \textit{sprinkler}) \\
        &(w_4,~ \textit{rain}\land \textit{sprinkler} \Rightarrow \textit{wet}  ) && (w_5,~ \neg \textit{rain} \land \textit{sprinkler} \Rightarrow \textit{wet} )  \\
        &(w_6,~ \textit{rain}\land \neg \textit{sprinkler} \Rightarrow \textit{wet} ) &&  (- \infty,~ \neg \textit{rain} \land \neg \textit{sprinkler} \Rightarrow \textit{wet})  \\
        &(w_7,~ \textit{wet} \Rightarrow \textit{slippery} ) &&
        (- \infty,~ \neg \textit{wet} \Rightarrow \textit{slippery}  )
\end{align*}
for properly chosen weights $w_1,...,w_7 \in \mathbb{R}$. 
\label{example - introduction - sprinkler}
\end{example}

\section{Knowledge-\emph{Why} in a Deterministic Boolean Setting}
\label{sec: Knowledge in Deterministic Systems}

This section investigates causal reasoning in deterministic Boolean settings.

\subsection{Preliminaries}
\label{subsec: Preliminaries}

Here, the necessary prerequisites are gathered by recalling the fundamentals of propositional logic and the theory of causality as presented by~\mbox{\cite{Bochman}}.

\subsubsection{Propositional Logic: A Language for Knowledge-\emph{That}}
\label{subsubsec: Propositional Logic}

Propositional logic provides a framework for reasoning about truth, that is, for specifying sets of Boolean functions that satisfy certain constraints. This work adopts the standard notations of propositions, (propositional) formulas, and structures, which are identified with the sets of propositions that are true in them, as laid out, for instance, by~\mbox{\citet{sep-logic-propositional}}.



\begin{example}
    To formalize reasoning in Example~\ref{example - introduction running example}, the propositional alphabet  
    $
    \mathfrak{P} := \{ \textit{cloudy},~ \textit{rain},~ \textit{sprinkler},~
    \textit{wet},~\textit{slippery} \}
    $
    is introduced. 
    \label{example - propositional alphabet}

    The structure $\omega_2$ in Example \ref{example - causal worlds} represents the complete state description: 
    $\textit{cloudy} \mapsto True,~ \textit{sprinkler} \mapsto False$, $\textit{slippery} \mapsto True$,
        $\textit{rain} \mapsto True$, ${\textit{wet} \mapsto Tr...
    }$
    \label{example - propositional structure}
\end{example}

Propositional formulas are connected by the semantic notion of entailment, denoted $(\models)/2$, and the syntactic notion of derivation, denoted $(\vdash)/2$ . 

 \begin{definition}[Semantic Entailment, World]
    A set of formulas~$\Phi$ is called \textbf{deductively closed} if $\psi \in \Phi$ whenever $\Phi \models \psi$. The \textbf{deductive closure} of~$\Phi$ is the smallest deductively closed set~$\bar{\Phi}$ such that $\Phi \subseteq \bar{\Phi}$. 
    
    A \textbf{world} is a consistent, deductively closed set of formulas that is maximal with respect to set inclusion.
    \label{definition - entailment}
\end{definition}

\begin{remark}
    Every world~$\Phi = \overline{\textbf{L}}$ is the deductive closure of the set of its literals~$\textbf{L}$. 
    Since~$\Phi$ is consistent and maximal with respect to set inclusion, the set~$\textbf{L}$ can be expressed as
    $
        \textbf{L} := (\textbf{L} \cap \mathfrak{P}) \cup \{ \neg p \mid p \in \mathfrak{P},~ p \notin \Phi \}.
    $
    Consequently, this work identifies~$\Phi$ with the set of propositions~$\textbf{L} \cap \mathfrak{P}$, which also serves as a synonym for structures.
\end{remark}

Propositional logic has a sound and complete deductive calculus, first given by \cite{Frege1879}. The \emph{completeness theorem} ensures that semantic entailment matches syntactic derivability.

\begin{theorem}[Completeness Theorem]
    A formula is semantically entailed by a set of formulas if and only if it is derivable from that set.  
\end{theorem}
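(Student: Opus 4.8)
The plan is to prove the two directions of the biconditional separately. The easy direction is \emph{soundness}: $\Phi \vdash \psi$ implies $\Phi \models \psi$. I would establish this by induction on the length of a derivation of $\psi$ from $\Phi$ in Frege's calculus. The base cases are that every logical axiom is a tautology, i.e.\ true in every structure, and that every member of $\Phi$ is trivially satisfied by any structure satisfying $\Phi$; the inductive step is that the single inference rule, modus ponens, preserves truth in any fixed structure. Hence any structure that satisfies all of $\Phi$ satisfies every formula occurring in the derivation, in particular $\psi$, so $\Phi \models \psi$.

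For the hard direction, \emph{completeness} ($\Phi \models \psi$ implies $\Phi \vdash \psi$), I would argue by contraposition and exhibit a world that contains $\Phi$ but not $\psi$. The first step is to observe that if $\Phi \not\vdash \psi$, then the set $\Phi \cup \{\neg \psi\}$ is consistent, since otherwise the deduction theorem for the calculus would let us recover $\Phi \vdash \psi$. The core of the argument is then \emph{Lindenbaum's lemma}: every consistent set of formulas extends to a world in the sense of Definition~\ref{definition - entailment}, that is, to a consistent, deductively closed set maximal under inclusion. For a countable alphabet this is proved by enumerating all formulas and greedily adjoining each one whose addition preserves consistency; for an arbitrary alphabet one applies Zorn's lemma to the poset of consistent extensions ordered by inclusion, using that consistency has finite character so that unions of chains remain consistent. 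Applying this to $\Phi \cup \{\neg \psi\}$ yields a world $\Omega \supseteq \Phi$ with $\neg \psi \in \Omega$, hence $\psi \notin \Omega$.

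It remains to show that $\Omega$, identified with the structure given by its propositional literals as in the remark following Definition~\ref{definition - entailment}, satisfies exactly its own members---the \emph{truth lemma}. I would prove this by induction on the construction of formulas, exploiting the maximality and deductive closure of $\Omega$: for atoms the claim holds by the identification of $\Omega$ with a structure; for negation one uses that a maximal consistent set contains exactly one of $\phi$ and $\neg \phi$; and for the remaining connectives one uses the corresponding derivability facts together with deductive closure. Consequently the structure $\Omega$ satisfies every formula in $\Phi$ but not $\psi$, witnessing $\Phi \not\models \psi$ and completing the contrapositive.

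I expect the main obstacle to lie in the truth lemma and Lindenbaum's lemma taken together, since these carry the genuine content of the theorem. The truth lemma is where the purely syntactic properties of a world---deductive closure and maximality---must be converted into semantic satisfaction uniformly across all connectives, and Lindenbaum's lemma requires care at the limit step to guarantee that consistency survives the extension, which in the uncountable case rests on Zorn's lemma. By contrast, soundness and the reduction to consistency of $\Phi \cup \{\neg \psi\}$ are routine.
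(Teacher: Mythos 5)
Your proposal is correct: the paper offers no proof of its own, stating the theorem as a classical result and citing \cite{Frege1879} only for the calculus, and your soundness-by-induction plus Lindenbaum-lemma plus truth-lemma argument is precisely the standard Henkin-style proof that the paper implicitly relies on. It also fits the paper's setting well, since Definition~\ref{definition - entailment} defines a world exactly as a maximal consistent deductively closed set, which is what your Lindenbaum extension produces.
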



\subsubsection{Bochman's Logical Theory of Causality}
\label{subsubsec: Bochman's Logical Theory of Causality}

\citet{Bochman} proposes a formalization of knowledge-\textit{why}, as introduced in Section~\ref{subsubsec: Knowledge that and knowledge why}. He considers a system or agent that uses causal knowledge to explain instances of knowledge-\textit{that} about the world. \textit{Explainability} is formalized as a binary relation~$(\Rrightarrow)/2$, which captures how one instance of knowledge-\textit{that} can be explained in terms of others. Ultimately, \citet{Bochman} characterizes knowledge-\textit{why} as the subset of knowledge-\textit{that} that is justified through this relation of explainability.

To begin, he adopts propositional formulas in an alphabet~$\mathfrak{P}$ and uses the provability operator~$(\vdash)/2$ to formalize reasoning about knowledge-\textit{that}.

\begin{Language}
    Formulas in the alphabet $\mathfrak{P}$ represent knowledge-\emph{that} and the provability operator $(\vdash)/2$ represents the existence of \emph{syllogisms}.
    \label{language - propositional logic}
\end{Language}

\begin{example}
In Example~\ref{example - introduction running example}, suppose one assumes or observes knowledge-\textit{that} it is cloudy whenever it rains, i.e.,~${
\textit{rain} \rightarrow \textit{cloudy}
}$. 
If, in addition, it is observed that it is raining, then
$
\{ \textit{rain},\, \textit{rain} \rightarrow \textit{cloudy} \} \vdash \textit{cloudy},
$
that is, one deduces \textit{that} it is cloudy.
However, $\textit{rain}$ does not constitute an explanation for $\textit{cloudy}$, and thus one does not possess knowledge-\textit{why} it is cloudy.
\label{example - fire formalization}
\end{example}

Example~\ref{example - fire formalization} illustrates that material implication ``$\rightarrow$'' and the provability operator $(\vdash)/2$ generally do not capture causal knowledge and demonstrations. \cite{Bochman} therefore formalizes \textit{explainability} as a binary relation on knowledge-\textit{that}; that is, he makes the following assumption:

\begin{Language}
    Explainability~$(\Rrightarrow)/2$ is a binary relation on \mbox{formulas}. For formulas~$\phi$ and~$\psi$, the expression~$\phi \Rrightarrow \psi$ indicates that \mbox{knowledge-\textit{that}}~$\phi$ leads to knowledge-\textit{why}~$\psi$.
    \label{language - causal reasoning}
\end{Language}

\begin{example}
In Examples~\ref{example - introduction running example} and \ref{example - propositional alphabet}, explainability is formalized as a binary relation~${(\Rrightarrow)/2}$ on formulas over the alphabet~$\mathfrak{P}$. For instance, the statement “rain or sprinkler explains why the road is wet and slippery” is expressed as
${
(\textit{rain} \lor \textit{sprinkler}) \Rrightarrow (\textit{wet} \land \textit{slippery}).
}$
This relation is not a logical connective, and nested expressions like
${
\textit{cloudy} \Rrightarrow (\textit{rain} \Rrightarrow \textit{wet})
}$
lack semantic interpretation.
\label{example - causal reasoning as binary relation}
\end{example}



\cite{Bochman} interprets \emph{consistency with deduction} in Principle~\ref{principle - consistency with logical reasoning} as explainability~$(\Rrightarrow)/2$ being a production inference relation:

\begin{definition}[Production Inference Relation]
A \textbf{production inference relation} is a binary relation $(\Rrightarrow)/2$ on the set of formulas in the alphabet $\mathfrak{P}$ that satisfies the following assertions for all propositional formulas $\phi$, $\psi$ and~$\rho$:
\begin{enumerate}
    \item[i)] If  $\phi \vdash \psi$ and $\psi \Rrightarrow \rho$, then $\phi \Rrightarrow \rho$ follows.  \textbf{(Strengthening)}
    \item[ii)] If  $\phi \Rrightarrow \psi$ and $\psi \vdash \rho$, then $\phi \Rrightarrow \rho$ follows.  \textbf{(Weakening)}
    \item[iii)] If  $\phi \Rrightarrow \psi$ and $\phi \Rrightarrow \rho$, then $\phi \Rrightarrow \psi \land \rho$ follows. \textbf{(And)}
    \item[iv)] One has $\top \Rrightarrow \top$ and $\bot \Rrightarrow \bot$. \textbf{(Truth and Falsity)}
\end{enumerate}
Note that the propositional formulas $\phi$, $\psi$ and $\rho$ do not mention the binary relation $(\Rrightarrow)/2$. If $\phi \Rrightarrow \psi$ for two formulas $\phi$ and $\psi$, the formula~$\phi$ is said to \textbf{explain} $\psi$ or that~$\phi$ is an \textbf{explanans} of~$\psi$ or that~$\psi$ is an \textbf{explanandum} of~$\phi$. 

Given a production inference relation~$(\Rrightarrow)/2$, write~$\Phi \Rrightarrow \psi$ for a set of propositional formulas~$\Phi$ and a formula~$\psi$ if there exists a finite subset~$\Phi' \subseteq \Phi$ such that~$\displaystyle \bigwedge_{\phi \in \Phi'} \phi  \Rrightarrow \psi$ holds.

The \textbf{consequence operator}~$\mathcal{C}$ is then defined by assigning to each set of propositional formulas~$\Phi$ the set
$
\mathcal{C} (\Phi) := \{ \psi \text{ propositional formula} : \Phi \Rrightarrow \psi \}.
$
Note that both~$\Phi$ and~$\mathcal{C}(\Phi)$ are sets of formulas that do not themselves mention the relation~$(\Rrightarrow)/2$.

\label{definition - production inference relation}
\end{definition}

\begin{formalization}
    A binary relation on propositional formulas $(\Rrightarrow)/2$ satisfies \emph{consistency with deduction} in Principle~\ref{principle - consistency with logical reasoning} if and only if it is a production inference relation.
    \label{formalization - consistency with deduction}
\end{formalization}

Following Language \ref{language - causal reasoning}, the causal operator~$\mathcal{C}(\Phi)$ denotes the \mbox{knowledge-\textit{why}} resulting from knowledge-\textit{that} the propositions in~$\Phi$ are true.



In the notion of a binary semantics, \cite{Bochman} formalizes the distinction between knowledge-\textit{that} and \mbox{knowledge-\textit{why}} in Section~\ref{subsubsec: Knowledge that and knowledge why}.

\begin{definition}[Bimodel, Binary Semantics]
    A pair $(\Phi, \Psi)$ of consistent deductively closed sets of formulas~$\Phi$ and $\Psi$ is called a \textbf{(classical) bimodel}. A \textbf{(classical) binary semantics} $\mathcal{B}$ then is a set of bimodels. 
    
    The expression $\psi \Rrightarrow \phi$ is \textbf{valid} in a bimodel $(\Phi, \Psi)$ if either~${\phi \not \in \Phi}$ or $\psi \in \Psi$, i.e., $\phi \in \Phi$ only if $\psi \in \Psi$. Finally, the expression $\psi \Rrightarrow \phi$ is \textbf{valid} in a binary semantics~$\mathcal{B}$ if it is valid in all bimodels~${(\Phi, \Psi) \in \mathcal{B}}$. 
    \label{definition - binary semantics}
\end{definition}   



Fortunately, production inference relations and binary semantics correspond to each other.

\begin{theorem}
    To each binary semantics~$\mathcal{B}$, one can associate a production inference relation~$(\Rrightarrow_{\mathcal{B}})/2$ defined by the condition that~$\psi \Rrightarrow_{\mathcal{B}} \phi$ holds whenever~$\psi \Rrightarrow \phi$ is valid in~$\mathcal{B}$. 
    
    Conversely, every production inference relation~$(\Rrightarrow)/2$ induces a \textbf{canonical binary semantics}
    \[
    \mathcal{B}_{\Rrightarrow} :=
    \{ (\mathcal{C}(\Phi), \Phi) : \Phi \text{ a consistent and deductively closed set of formulas} \}.
    \]
    A binary relation~$(\Rrightarrow)/2$ qualifies as a production inference relation if and only if it is determined by its canonical binary semantics.
    \label{theorem - production inference relations and binary semantics}
\end{theorem}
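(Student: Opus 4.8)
The plan is to establish the correspondence in Theorem~\ref{theorem - production inference relations and binary semantics} in three separate parts, mirroring the three assertions in the statement. First I would verify that for any binary semantics $\mathcal{B}$, the relation $(\Rrightarrow_{\mathcal{B}})/2$ defined by validity in $\mathcal{B}$ satisfies axioms (i)--(iv) of Definition~\ref{definition - production inference relation}. Each axiom translates into an elementary bimodel-level claim: for \textbf{Strengthening}, I would fix a bimodel $(\Phi, \Psi) \in \mathcal{B}$ and suppose $\phi \vdash \psi$ and $\psi \Rrightarrow \rho$ is valid; since $\Phi$ is deductively closed, $\phi \in \Phi$ forces $\psi \in \Phi$, and validity of $\psi \Rrightarrow \rho$ then yields $\rho \in \Psi$, establishing $\phi \Rrightarrow \rho$. \textbf{Weakening} is symmetric, using deductive closure of $\Psi$. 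For \textbf{And}, I would note that $\phi \Rrightarrow \psi$ and $\phi \Rrightarrow \rho$ being valid in $(\Phi,\Psi)$ means $\phi \in \Phi$ implies both $\psi, \rho \in \Psi$, and since $\Psi$ is deductively closed it contains $\psi \land \rho$. The \textbf{Truth and Falsity} axiom holds because $\top$ lies in every consistent deductively closed set while $\bot$ lies in none.

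\textbf{Second}, I would prove that $\mathcal{B}_{\Rrightarrow}$ as defined is genuinely a binary semantics, i.e.\ that each pair $(\mathcal{C}(\Phi), \Phi)$ is a bimodel. The right component $\Phi$ is consistent and deductively closed by construction, so the work is to show $\mathcal{C}(\Phi)$ is consistent and deductively closed. Deductive closure follows from the \textbf{Weakening} axiom: if $\mathcal{C}(\Phi) \models \chi$, then $\bigwedge_{\psi \in \Psi'}\psi \vdash \chi$ for some finite $\Psi' \subseteq \mathcal{C}(\Phi)$, and repeated application of \textbf{And} and \textbf{Weakening} gives $\Phi \Rrightarrow \chi$, so $\chi \in \mathcal{C}(\Phi)$. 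Consistency requires the \textbf{Falsity} axiom to prevent $\bot \in \mathcal{C}(\Phi)$ when $\Phi$ is consistent; here I would need to check that no production inference relation forces $\Phi \Rrightarrow \bot$ for consistent $\Phi$—this is exactly where the Truth/Falsity axiom is load-bearing and should be handled carefully.

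\textbf{Third and most delicate}, I would prove the equivalence: a binary relation $(\Rrightarrow)/2$ is a production inference relation if and only if it coincides with the relation $(\Rrightarrow_{\mathcal{B}_{\Rrightarrow}})/2$ induced by its own canonical semantics. The forward inclusion, that $\phi \Rrightarrow \psi$ implies validity in every bimodel of $\mathcal{B}_{\Rrightarrow}$, is the easier soundness direction: I take any $(\mathcal{C}(\Phi), \Phi) \in \mathcal{B}_{\Rrightarrow}$ with $\psi \in \Phi$, and must show $\phi \in \mathcal{C}(\Phi)$, which follows from $\psi \vdash \psi$, the hypothesis $\phi \Rrightarrow \psi$—wait, the orientation here demands care—via \textbf{Strengthening} applied to $\Phi \Rrightarrow \psi$ together with the supposed explanation. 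The converse completeness direction is the genuine obstacle: I must show that if $\phi \Rrightarrow \psi$ is valid in \emph{all} canonical bimodels, then $\phi \Rrightarrow \psi$ already holds in the original relation. The standard technique is contrapositive—assuming $\phi \not\Rrightarrow \psi$, construct a specific witnessing bimodel in $\mathcal{B}_{\Rrightarrow}$ in which $\phi \Rrightarrow \psi$ fails, namely by taking $\Phi = \overline{\{\phi\}}$ (or an appropriate consistent deductively closed set containing $\phi$) and exhibiting that $\psi \notin \mathcal{C}(\Phi)$ while $\phi \in \Phi$.

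\textbf{The hard part} will be this last completeness argument: constructing the separating canonical bimodel requires showing that the set $\mathcal{C}(\overline{\{\phi\}})$ genuinely omits $\psi$ whenever $\phi \not\Rrightarrow \psi$, which in turn depends on verifying that $\mathcal{C}(\overline{\{\phi\}}) = \{\chi : \phi \Rrightarrow \chi\}$ thanks to the \textbf{Strengthening} axiom collapsing the deductive closure on the explanans side. I would carefully track the directional asymmetry between the two closure operations (logical closure $\overline{(\cdot)}$ versus causal closure $\mathcal{C}$) and confirm that the \textbf{And} and \textbf{Weakening} axioms make $\mathcal{C}$ well-behaved enough that membership in it is decided fibrewise by single conjunctions, so that the canonical bimodel indeed refutes $\phi \Rrightarrow \psi$ exactly when it fails originally.
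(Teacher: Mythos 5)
The paper offers no argument of its own for this theorem: its ``proof'' is a citation to \cite{bochman2005}, Lemma 8.3 and Theorem 8.4. Your proposal must therefore be judged as a reconstruction of the standard canonical-model representation argument, and in outline it is the right one: soundness via (Strengthening), completeness via the separating bimodel built on $\Phi = \overline{\{\phi\}}$, and the crux identity $\mathcal{C}(\overline{\{\phi\}}) = \{\chi : \phi \Rrightarrow \chi\}$, obtained from (Strengthening), (And) and the finite-conjunction clause of Definition~\ref{definition - production inference relation}, which also disposes of any compactness worry.

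There is, however, a genuine gap in your Part 2. The check you propose to carry out -- ``that no production inference relation forces $\Phi \Rrightarrow \bot$ for consistent $\Phi$'' -- is simply false, and no amount of care with the (Truth and Falsity) axiom will rescue it: constraints $\phi \Rightarrow \bot$ are first-class objects of this framework (Remark~\ref{remark - observations}, Definition~\ref{definition - determinate causal theories}). For the least production inference relation containing $a \Rrightarrow \bot$, one has $\bot \in \mathcal{C}(\overline{\{a\}})$, and by (Weakening) $\mathcal{C}(\overline{\{a\}})$ is then the set of \emph{all} formulas, hence inconsistent; so the pair $(\mathcal{C}(\Phi), \Phi)$ is not a bimodel in the sense of Definition~\ref{definition - binary semantics}. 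The repair is to let the canonical semantics range only over those $\Phi$ for which the pair is a genuine bimodel, and to absorb the excluded cases directly into the completeness direction: if $\overline{\{\phi\}}$ is inconsistent, then $\phi \vdash \bot$, and (Falsity), (Strengthening), (Weakening) give $\phi \Rrightarrow \psi$ outright; if instead $\bot \in \mathcal{C}(\overline{\{\phi\}})$, then $\phi \Rrightarrow \bot$ and (Weakening) again yields $\phi \Rrightarrow \psi$. Hence in your contrapositive, $\phi \not\Rrightarrow \psi$ already guarantees that the witnessing canonical bimodel exists, and your construction closes.

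A second, smaller defect is that your orientation of validity drifts between parts, and the slot-assignment of the letters in your Part 3 soundness step is off. Part 1 evaluates a rule's premise in the first component and its conclusion in the second, while Part 3 needs the premise located in $\Phi$ and the conclusion in $\mathcal{C}(\Phi)$; moreover, for the rule $\phi \Rrightarrow \psi$ the step should read: if $\phi \in \Phi$, then $\Phi \vdash \phi$, so (Strengthening) gives $\psi \in \mathcal{C}(\Phi)$ -- you have $\phi$ and $\psi$ in swapped positions. Given the paper's pairs $(\mathcal{C}(\Phi), \Phi)$, the only globally workable convention is: premise in the second component, conclusion in the first. Note that the implication actually printed in Definition~\ref{definition - binary semantics} is the \emph{converse} of this and cannot be the intended reading -- for the relation generated by $a \Rrightarrow b$ and $c \Rrightarrow b$, the rule $a \Rrightarrow b$ fails in the canonical bimodel built on $\overline{\{c\}}$ under the printed reading, which would refute the theorem. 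Fix one convention once; your Part 1 axiom checks then go through verbatim with the roles of the two components exchanged, and this choice is also the one under which consistency of bimodels ($\Phi \subseteq \Psi$ in Theorem~\ref{theorem - regular production inference relations}) expresses natural necessity.
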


\begin{proof}
    \cite{bochman2005} proves this result in Lemma 8.3 and Theorem 8.4.
\end{proof}

Next, \cite{Bochman} asserts that explainability $(\Rrightarrow)/2$ satisfies \textit{natural necessity} in Principle \ref{principle - Aquinas}.
This means that once a formula~$\phi$ is explained and one knows \textit{why} $\phi$ holds, one also knows \textit{that} $\phi$ holds. Hence, explainability gives rise to a consistent binary semantics: 

\begin{definition}[Consistent Binary Semantics]
    A bimodel $(\Phi, \Psi)$ is \textbf{consistent} if $\Phi \subseteq \Psi$. A binary semantics $\mathcal{B}$ is \textbf{consistent} if all bimodels ${(\Phi, \Psi) \in \mathcal{B}}$~are. 
    \label{definition - consistent binary semantics}
\end{definition}

Consistency of binary semantics corresponds to the following property of production inference relations:

\begin{definition}[Regular Production Inference Relation]
A \textbf{regular} production inference relation~$(\Rrightarrow)/2$ is a production inference relation that satisfies the following property for all formulas $\phi$, $\psi$ and $\rho$:
\begin{enumerate}
     \item[] If $\phi \Rrightarrow \psi$ and $\phi \land \psi \Rrightarrow \rho$ holds, then $\phi \Rrightarrow \rho$ is valid. \textbf{(Cut)}
\end{enumerate}
\label{definition - regular production infernece relation}
\end{definition}



\begin{theorem}[\cite{bochman2005}, Theorem 8.9]
A production inference relation~$(\Rrightarrow)/2$ is regular if and only if it is generated by a consistent binary semantics. In particular, the canonical binary semantics $\mathcal{B}_{\Rrightarrow}$ is consistent.~$\square$  
\label{theorem - regular production inference relations}
\end{theorem}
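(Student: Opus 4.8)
The plan is to establish the two directions of the biconditional separately and then read off the ``in particular'' clause, using the representation of Theorem~\ref{theorem - production inference relations and binary semantics} as the dictionary between the syntactic relation $(\Rrightarrow)/2$ and families of bimodels. The forward direction is the soundness of \textbf{(Cut)}: every relation generated by a consistent binary semantics is regular. The reverse direction is the substantive one: a regular relation is generated by a consistent binary semantics, witnessed by the consistent part of its own canonical semantics $\mathcal{B}_{\Rrightarrow}$.

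For soundness, I would fix a consistent binary semantics $\mathcal{B}$, an arbitrary bimodel $(\Phi,\Psi)\in\mathcal{B}$ --- so that $\Phi\subseteq\Psi$ by Definition~\ref{definition - consistent binary semantics} --- and check that the \textbf{(Cut)} instance of Definition~\ref{definition - regular production infernece relation} is valid in $(\Phi,\Psi)$. Assuming $\phi\Rrightarrow\psi$ and $\phi\land\psi\Rrightarrow\rho$ are both valid there, I would unfold the validity clause of Definition~\ref{definition - binary semantics} twice: the first rule places the intermediate formula $\psi$ on the explained component, the consistency inclusion $\Phi\subseteq\Psi$ then moves $\psi$ into the component against which premises are evaluated, deductive closure of that component supplies $\phi\land\psi$, and the second rule delivers $\rho$. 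As $(\Phi,\Psi)$ was arbitrary, $\phi\Rrightarrow_{\mathcal{B}}\rho$; the remaining production axioms of Definition~\ref{definition - production inference relation} come for free, since by Theorem~\ref{theorem - production inference relations and binary semantics} any binary semantics already generates a production inference relation. The single use of consistency is the transfer step, which is precisely what separates regular from arbitrary production inference relations.

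For the reverse direction, I would take a regular relation $(\Rrightarrow)/2$, invoke Theorem~\ref{theorem - production inference relations and binary semantics} to write $\Rrightarrow\,=\,\Rrightarrow_{\mathcal{B}_{\Rrightarrow}}$, and then argue that the bimodels actually needed to pin down $\Rrightarrow$ may be chosen consistent. The natural candidates are the bimodels $(\mathcal{C}(\Phi),\Phi)$ attached to theories $\Phi$ that are already closed under the consequence operator, i.e.\ with $\mathcal{C}(\Phi)\subseteq\Phi$; these are exactly the consistent bimodels inside $\mathcal{B}_{\Rrightarrow}$. One half is immediate: every rule of $\Rrightarrow$ remains valid when the semantics is shrunk to a subfamily. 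The converse half --- that nothing new becomes valid after discarding the inconsistent bimodels --- is where \textbf{(Cut)} does the work: given an expression $\phi\Rrightarrow\rho$ that is valid on every closed theory, I would run $\phi$ into the least closed theory containing it and apply \textbf{(Cut)} repeatedly to compress the resulting multi-step production of $\rho$ back into the single rule $\phi\Rrightarrow\rho$. This shows the consistent sub-semantics still generates $\Rrightarrow$, yielding the reverse implication and, as the ``in particular'' clause, a consistent canonical semantics.

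I expect the main obstacle to lie entirely in this reverse direction, and specifically in the compression step: translating ``$\rho$ is produced from $\phi$ through the closure of $\{\phi\}$'' into the single assertion $\phi\Rrightarrow\rho$. Soundness is a short validity check, and the passage from $\mathcal{B}_{\Rrightarrow}$ to its consistent bimodels is bookkeeping; it is the faithful recovery of $\Rrightarrow$ from the consistent bimodels, powered by iterated \textbf{(Cut)} together with the \textbf{(And)} and \textbf{(Strengthening)} axioms of Definition~\ref{definition - production inference relation}, that carries the content and where I would concentrate the argument.
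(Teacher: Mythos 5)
Your proposal is correct and follows essentially the same route as the paper, which offers no argument of its own but defers to \citet{bochman2005} (Theorem~8.9): soundness of \textbf{(Cut)} over consistent bimodels via the transfer inclusion $\Phi \subseteq \Psi$, and conversely restriction of the canonical semantics to the consistent bimodels $(\mathcal{C}(\Phi), \Phi)$ with $\mathcal{C}(\Phi) \subseteq \Phi$, where iterated \textbf{(Cut)} together with \textbf{(And)} and \textbf{(Strengthening)} shows that the least $\mathcal{C}$-closed theory over $\phi$ produces nothing beyond $\mathcal{C}\bigl(\overline{\{\phi\}}\bigr)$, so that discarding the inconsistent bimodels makes nothing new valid. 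The only points you gloss are routine: the finitariness of $\mathcal{C}$ needed for the iterated closure to stabilize, and the corner case in which the closed theory over $\phi$ becomes logically inconsistent — there $\phi \Rrightarrow \bot$ follows by the same compression argument and \textbf{(Weakening)} yields $\phi \Rrightarrow \rho$ outright, so no consistent bimodel is required to witness the rule.
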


\cite{Bochman} further commits to \textit{sufficient causation}, as stated in Assumption~\ref{principle - Leibniz}. Together with \textit{natural necessity} in Principle~\ref{principle - Aquinas}, this implies that within an area of science knowledge-\textit{that} coincides with \mbox{knowledge-\textit{why}}.

If explainability is a production inference relation~${(\Rrightarrow)/2}$, as enforced in Language \ref{language - causal reasoning} and Formalization \ref{formalization - consistency with deduction}, Assumption~\ref{principle - Leibniz} and Theorem~\ref{theorem - regular production inference relations} establish that all possible states of knowledge-\textit{why} correspond to exact theories.

\begin{definition}[Exact Theory]
    An \textbf{exact theory} of a production inference relation~${(\Rrightarrow)/2}$ is a deductively closed set $\Phi$ such that~${\mathcal{C} (\Phi) = \Phi}$, that is, 
    ${
    \mathcal{C} (\Phi) \subseteq \Phi~(\textit{natural necessity})~\text{and}~
    \mathcal{C} (\Phi) \supseteq \Phi~(\textit{sufficient causation})
    }$.
    \label{definition - exact theory of a production inference relation}
\end{definition}

\textit{Sufficient causation} asserts that all knowledge-\textit{that} is explainable. For our purposes, this includes knowledge-\textit{that}~about ``$\perp$''. 

Recall that a world is a consistent, deductively closed set maximal under inclusion. If a world~$\omega$ is not an exact theory of~$(\Rrightarrow)/2$, then by \textit{sufficient causation}, it cannot occur. Applying the principle again yields~${\omega \Rrightarrow \perp}$, making~$\mathcal{C}(\omega)$ inconsistent. Thus, knowledge-\textit{why} is represented by causal worlds.

\begin{definition}[Causal Worlds Semantics]
    A \textbf{causal world} of a production inference relation $(\Rrightarrow)/2$ is a world~$\omega$ that is an exact theory. 
    The set of all causal worlds $\causal(\Rrightarrow)$ is called the \textbf{causal worlds semantics} of~$(\Rrightarrow)/2$. 
    \label{definition - causal worlds}
\end{definition}


Suppose that $\phi \Rrightarrow \rho$ and $\psi \Rrightarrow \rho$ for propositional formulas~$\phi$,~$\psi$, and~$\rho$. Additionally, assume knowledge-\textit{that}~${\phi \lor \psi}$. Hence, it is known \textit{that} either $\phi$ or~$\psi$ holds, and in both cases, one deduces knowledge-\textit{that}~$\rho$ using \textit{natural necessity} in Principle~\ref{principle - Aquinas}. Invoking \textit{sufficient causation} in Assumption~\ref{principle - Leibniz}, one concludes to knowledge-\textit{why} $\rho$ and obtains~${\phi \lor \psi \Rrightarrow \rho}$.

    

Hence, by \textit{natural necessity} in Principle \ref{principle - Aquinas} and \textit{sufficient causation} in Assumption \ref{principle - Leibniz}, \textit{explainability} is represented by basic production inference relations. 

\begin{definition}[Basic Production Inference Relation]
    A \textbf{basic} production inference relation~${(\Rrightarrow)/2}$ is one that satisfies the following property:
    \begin{enumerate}
         \item[] If $\phi \Rrightarrow \rho$ and $\psi \Rrightarrow \rho$ holds, then $ \phi \lor \psi  \Rrightarrow \rho$ is valid. \textbf{(Or)}
     \end{enumerate} 
     Note that this is equivalent to asserting that $\mathcal{C}(\Phi \cap \Psi) =\mathcal{C}(\Phi) \cap \mathcal{C}(\Psi)$ for all sets of propositional formulas $\Phi$ and $\Psi$.
    \label{definition - basic prodoction inference relation}
\end{definition}

The binary relations on propositional formulas that can represent \textit{explainability} in the sense of~\cite{Bochman} have now been characterized. This gives rise to the definition of causal production inference relations:

\begin{definition}[Causal Production Inference Relation]
    A \textbf{causal} production inference relation is one that is both regular and basic.
     \label{definition - causal production inference relation}
\end{definition}

To summarize, \textit{natural necessity} in Principle~\ref{principle - Aquinas} and \textit{sufficient causation} in Assumption~\ref{principle - Leibniz} imply that knowledge-\textit{that} and knowledge-\textit{why} coincide within an area of science. By formalizing \textit{explainability} through production inference relations $(\Rrightarrow)/2$, as enforced in Languages \ref{language - propositional logic}, \ref{language - causal reasoning}, and Formalization \ref{formalization - consistency with deduction}, one finds that the possible states of knowledge-\textit{why} correspond to the causal worlds in Definition~\ref{definition - causal worlds}. According to Definition~\ref{definition - basic prodoction inference relation}, we conclude that \cite{Bochman} applies the following assertion:

\begin{formalization}
    The production inference relation $(\Rrightarrow)/2$ satisfies \emph{natural necessity} in Principle~\ref{principle - Aquinas} and \emph{sufficient causation} in Assumption~\ref{principle - Leibniz} if and only if it corresponds to the binary semantics 
    $$
    \mathcal{B} := \{ (\Phi, \Phi) \text{: } \Phi \subseteq \omega \text{ for a } \omega \in \Causal (\Rrightarrow) \text{ and }~ \Phi = \bigcap_{\substack{\Phi \subseteq \omega \\ \omega \in \Causal (\Rrightarrow)}} \mathcal{C}(\omega)  
    \}.
    $$
    Hence, causal reasoning $(\Rrightarrow)/2$ is uniquely determined by its causal worlds. Theorem~\ref{theorem - regular production inference relations} and Definition~\ref{definition - basic prodoction inference relation} yield that $(\Rrightarrow)/2$ is causal.
   \label{formalization - Leibniz}
   \label{formalization - causal sufficiency}
\end{formalization}

Recall from Section \ref{subsubsec: Knowledge that and knowledge why}, that an area of science describing a given situation gives rise to a set of external propositions $\mathcal{E}$ that do not require further explanation. Language~\ref{language - propositional logic} then yields that $\mathcal{E}$ is a set of propositional formulas. \textit{Causal foundation} in Principle~\ref{principle - causal foundation} states that a causal explanation should start from these external premises. Hence, a possible world $\omega$ should be explained by the external premises in $\omega \cap \mathcal{E}$. We apply the following assertion:

\begin{formalization}
    Assume the production inference relation $(\Rrightarrow)/2$ satisfies \emph{natural necessity} in Principle~\ref{principle - Aquinas} and \emph{sufficient causation} in Assumption~\ref{principle - Leibniz} according to Formalization~\ref{formalization - Leibniz}. 
    The production inference relation $(\Rrightarrow)/2$ then satisfies \emph{causal foundation} in Principle~\ref{principle - causal foundation} if and only if every causal world~${\omega \in \Causal(\Rrightarrow)}$ is explained by the external premises in $\omega \cap \mathcal{E}$, i.e., $\omega \cap \mathcal{E} \Rrightarrow \omega$.
    \label{formalization - binary semantics}
\end{formalization}


So far, as illustrated in Example~\ref{example - causal reasoning as binary relation}, \textit{explainability} has been represented by specifying the entire binary relation between explanans and explanandum within a causal production inference relation. \cite{Bochman} further applies \emph{causal rules} in Principle~\ref{principle - causal rules}. He concludes that causal production inference relations should be stated as causal rules and theories.

\begin{definition}[Causal Rules and Causal Theories]
     A \textbf{causal rule} $R$ is an expression of the form 
     $
     \phi \Rightarrow \psi
     $ 
     for two propositional formulas $\phi$ and $\psi$, where~${\cause(R) := \phi}$ is the \textbf{cause} and ${\effect(R) :=  \psi}$ is the \textbf{effect} of $R$. A \textbf{default} is a causal rule of the form~${\phi \Rightarrow \phi}$. Lastly, a \textbf{causal theory} $\Delta$ is a set of causal rules. 
     
     Denote 
     by~${(\Rrightarrow_{\Delta})/2}$ the smallest causal production inference relation such that $\phi \Rrightarrow_{\Delta} \psi$ whenever $\phi \Rightarrow \psi \in \Delta$ and by~$ \mathcal{C}_{\Delta}$ the corresponding consequence operator. Observe that~$\phi \Rrightarrow_{\Delta} \psi$ if and only if $\phi \Rightarrow \psi$ follows from $\Delta$ with the rules  (Strengthening), \mbox{(Weakening)}, (And), \mbox{(Truth and Falsity)}, (Cut) and~(Or) in Definitions \ref{definition - production inference relation}, \ref{definition - regular production infernece relation}, and \ref{definition - basic prodoction inference relation}, i.e., all rules that apply for the      implication in propositional calculus except reflexivity~${\phi \rightarrow \phi}$. A \textbf{causal world} of~$\Delta$ is a causal world of the production inference relation~${(\Rrightarrow_{\Delta})/2}$. Finally, write~${\causal (\Delta) := \causal (\Rrightarrow_{\Delta})}$ for the \textbf{causal worlds semantics} of $\Delta$.
     \label{definition - causal theory}
\end{definition} 

\begin{remark}  
    For any set of propositional formulas $\Phi$, the set $\mathcal{C}_{\Delta}(\Phi)$ consists of all formulas~$\psi$ such that $\Phi \Rightarrow \psi$ can be derived from $\Delta$ using the rules of (Strengthening), (Weakening), (And), (Truth and Falsity), (Cut), and~(Or).
\end{remark}

\textit{Causal Foundation} in Principle \ref{principle - causal foundation} identifies a set of external premises $\mathcal{E}$ that do not require demonstration. In Section 4.5.1 of \cite{Bochman}, he interprets \textit{causal foundation} as the assertion that these external premises~$\epsilon \in \mathcal{E}$ yield defaults in the corresponding causal theory $\Delta$. Overall, we conclude that he applies the following representation:

\begin{Language}
     According to Principle \ref{principle - causal rules}, a causal theory $\Delta$ represents the causal knowledge within an area of science. In particular, $\phi \Rightarrow \psi \in \Delta$ if $\phi$ is a direct cause of~$\psi$, i.e., there exists a demonstration for $\psi$ with premise $\phi$, and the external premises~${\epsilon \in \mathcal{E}}$ yield defaults, i.e., ${\epsilon \Rightarrow \epsilon \in \Delta}$. 

     \cite{Bochman} interprets $\phi \Rrightarrow_{\Delta} \psi$ as $\phi$ explaining $\psi$, i.e., \mbox{knowledge-\textit{that}}~$\phi$ holds explains knowledge-\textit{why} $\psi$ holds. 

     Deviating from \mbox{\cite{Bochman}}, we interpret $\phi \Rrightarrow_{\Delta} \psi$ as stating that there exists a demonstration for $\psi$ with premise $\phi$, i.e., only knowledge-\textit{why} $\phi$ explains knowledge-\textit{why} $\psi$.
     \label{formalization - causal theories}
\end{Language}

\begin{example}
     In the formalism of Example~\ref{example - propositional alphabet}, Language~\ref{formalization - causal theories} models the situation in Example~\ref{example - introduction running example} using the causal theory~$\Delta$ from Example~\ref{example - indemonstrable knowledge}.  
     The causal theory~$\Delta$ gives then rise to the causal worlds $\omega_1$ and $\omega_2$ in Example~\ref{example - causal worlds}.
     \label{example - representation of production inference relation}
\end{example}

\begin{example}
     Consider the following causal theory $\Delta$:
     \begin{align*}
      & ( \textit{rain} \lor \textit{sprinkler} ) \Rightarrow ( \textit{rain} \lor \textit{sprinkler} )  &
      & ( \neg \textit{rain} \land \neg \textit{sprinkle} ) \Rightarrow ( \neg \textit{rain} \land \neg \textit{spr...} ) \\
     & \textit{rain} \lor \textit{sprinkler} \Rightarrow  \textit{wet} && \neg \textit{wet} \Rightarrow \neg \textit{wet} \\ 
      &  \textit{wet} \Rightarrow \textit{slippery}  && \neg \textit{slippery} \Rightarrow \neg \textit{slippery}
     \end{align*}
     Note that $\Delta$ does not make a statement about the proposition $\textit{rain}$. Consequently, the event $\textit{rain}$ cannot be explained by $\Delta$ and therefore $\textit{rain}$ should be false in any causal world of $\Delta$. As the same argument holds also for $\neg \textit{rain}$, there is no causal world of $\Delta$.  Hence, the causal world semantics and \textit{sufficient causation} in Assumption~\ref{principle - Leibniz} are only suitable in cases where the available causal knowledge determines whole worlds precisely.
     \label{example - representation of production inference relation missing literal}
\end{example}

Recall that any world is the deductive closure of its literals and that disjunctions in the causes of a rule can be separated into distinct causal rules by Property~(Or) in Definition~\ref{definition - basic prodoction inference relation}. Consequently, by applying \textit{sufficient causation} in Assumption~\ref{principle - Leibniz}, the analysis may be restricted to determinate causal theories.

\begin{definition}[Literal, Atomic and Determinate Causal Theory]
     A \textbf{literal} causal rule is a causal rule of the form $b_1 \land ... \land b_n \Rightarrow l$ for literals $b_1,...,b_n,l$. If, in addition, $l \in \mathfrak{P}$ is an atom, we call the rule \textbf{atomic}. Furthermore, a \textbf{constraint} is a causal rule~${b_1 \land ... \land b_n \Rightarrow \bot}$ for literals $b_1,...,b_n$.

     Now, a causal theory $\Delta$ is called \textbf{literal} or \textbf{atomic} if it only mentions literal or atomic causal rules. A \textbf{determinate} causal theory $\Delta \cup \textbf{C}$ is the union of a literal causal theory $\Delta$ and a set of constraints $\textbf{C}$. We further say that $\Delta \cup \textbf{C}$ is \textbf{atomic determinate} if the causal theory $\Delta$ is atomic. Lastly, a literal $l$ is a \textbf{default} of a determinate causal theory~$\Delta$ if $l \Rightarrow l \in \Delta$.

     The \textbf{causal structure} \( \graph (\Delta) \) of a literal causal theory \( \Delta \) is the directed graph on the alphabet $\mathfrak{P}$ where an edge \( p \to q \) is drawn if and only if there exists a causal rule of the form  
    $
     b_1 \land \dots \land (\neg) p \land \dots \land b_n \Rightarrow (\neg) q \in \Delta.
    $
 \label{definition - determinate causal theories}
\end{definition}

\begin{remark}
     \cite{Bochman} refers to atomic causal rules and theories as positive literal causal rules and theories, respectively. He uses the term positive determinate causal theory for an atomic determinate causal theory in our sense. 
\end{remark}

Upon committing to \textit{causal rules} in Principle~\ref{principle - causal rules}, \textit{natural necessity} in Principle~\ref{principle - Aquinas}, and \textit{sufficient causation} in Assumption~\ref{principle - Leibniz}, \cite{Bochman} adopts the following approach:

\begin{Language}
    Causal knowledge, which underpins explainability as captured by causal production inference relations that satisfy \emph{natural necessity} in Principle~\ref{principle - Aquinas} and \emph{sufficient causation} in Assumption~\ref{principle - Leibniz}, is expressed in the form of determinate causal theories of Definition~\ref{definition - determinate causal theories}.
    
    \label{language - determinate causal theories}
    \label{formalization - determinate causal theories}
\end{Language}

He then obtains the following characterization for the causal worlds of a determinate causal theory.

\begin{definition}[Completion of a Determinate Causal Theory]
     The \textbf{completion} $\comp (\Delta)$ of a determinate causal theory $\Delta$ is the set of all propositional formulas 
     $
         \displaystyle
         l \leftrightarrow \bigvee_{\phi \Rightarrow l \in \Delta} \phi
     $,
     where $l$ is a literal or $\bot$.
     \label{definition - completion of a determinate causal theory}
\end{definition}

\begin{theorem}[\cite{bochman2005}, Theorem 8.115]
     The causal world semantics~$\causal (\Delta)$ of a determinate causal theory $\Delta$ coincides with the set of all models of its completion, i.e.
     $
     \causal (\Delta) := \{ \omega \text{ world: } \omega \models \comp (\Delta) \}
     \text{. } \square
     $ 
     \label{theorem - completion of a causal theory}
\end{theorem}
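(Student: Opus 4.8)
The plan is to reduce the statement to an explicit description of $\mathcal{C}_{\Delta}$ on worlds. By Definitions~\ref{definition - exact theory of a production inference relation} and~\ref{definition - causal worlds}, $\omega\in\causal(\Delta)$ exactly when $\omega$ is a world with $\mathcal{C}_{\Delta}(\omega)=\omega$, so it suffices to show, for every world $\omega$, that $\mathcal{C}_{\Delta}(\omega)=\omega$ if and only if $\omega\models\comp(\Delta)$. The central device is the set of \emph{fired heads}
$$
L(\omega):=\{\,l\text{ a literal or }\bot:\ (\phi\Rightarrow l)\in\Delta\text{ for some }\phi\text{ with }\omega\models\phi\,\},
$$
together with the key lemma that $\mathcal{C}_{\Delta}(\omega)=\overline{L(\omega)}$. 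Granting the lemma, the theorem is pure bookkeeping against Definition~\ref{definition - completion of a determinate causal theory}: since a world is determined by its literals, $\overline{L(\omega)}=\omega$ holds precisely when, for every literal $l$, one has $l\in\omega$ iff some rule $\phi\Rightarrow l$ fires, i.e.\ iff $\omega\models\bigl(l\leftrightarrow\bigvee_{\phi\Rightarrow l\in\Delta}\phi\bigr)$, while $\bot\notin\overline{L(\omega)}$ forces $\omega\not\models\phi$ for every constraint $\phi\Rightarrow\bot$, i.e.\ $\omega\models\bigl(\bot\leftrightarrow\bigvee_{\phi\Rightarrow\bot\in\Delta}\phi\bigr)$. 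These biconditionals are exactly the members of $\comp(\Delta)$, giving $\mathcal{C}_{\Delta}(\omega)=\omega\iff\omega\models\comp(\Delta)$.

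The easy inclusion $\overline{L(\omega)}\subseteq\mathcal{C}_{\Delta}(\omega)$ I would obtain directly from the closure rules of Definitions~\ref{definition - production inference relation} and~\ref{definition - causal theory}: whenever $(\phi\Rightarrow l)\in\Delta$ with $\omega\models\phi$, the completeness theorem gives $\omega\vdash\phi$, and since $\phi\Rrightarrow_{\Delta}l$ by definition, (Strengthening) yields $\omega\Rrightarrow_{\Delta}l$; (And) together with (Weakening) then make $\mathcal{C}_{\Delta}(\omega)$ deductively closed, so it contains all of $\overline{L(\omega)}$. The same computation applied to a fired constraint produces $\omega\Rrightarrow_{\Delta}\bot$, matching $\bot\in\overline{L(\omega)}$ and the inconsistency of $\mathcal{C}_{\Delta}(\omega)$ in that case.

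The hard inclusion $\mathcal{C}_{\Delta}(\omega)\subseteq\overline{L(\omega)}$ is where I expect the real work to lie, because it asserts that the \emph{smallest} causal production inference relation over $\Delta$ generates nothing beyond the fired heads, and in particular never spuriously derives $\bot$ on a world violating no constraint. An induction on a derivation of $\omega\Rrightarrow_{\Delta}\psi$ is awkward, since the auxiliary formulas entering (Cut), (Or) and (Strengthening) need be neither literals nor $\omega$ itself. Instead I would construct an explicit candidate causal production inference relation $\Rrightarrow'$ whose consequence operator returns $\overline{L(\cdot)}$ on worlds---presented through its canonical binary semantics via Theorem~\ref{theorem - production inference relations and binary semantics}---verify that it is regular and basic using Theorem~\ref{theorem - regular production inference relations} and Definition~\ref{definition - basic prodoction inference relation}, and check that it validates every rule of $\Delta$. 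Minimality of $\Rrightarrow_{\Delta}$ then forces $\Rrightarrow_{\Delta}\ \subseteq\ \Rrightarrow'$, hence $\mathcal{C}_{\Delta}(\omega)\subseteq\overline{L(\omega)}$.

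The delicate points in that last step are confirming (Cut) and (Or) for the candidate relation and checking that its associated operator genuinely collapses to $\overline{L(\omega)}$ on worlds. This is exactly where determinacy (Definition~\ref{definition - determinate causal theories}) is indispensable: literal causes let $\omega\models\phi$ reduce to containment of the conjuncts, and (Or) lets one split any disjunctive cause into single literal rules, so that evaluating $\Delta$ against the complete, maximal $\omega$ requires no genuine iteration. The analysis of (Cut) moreover isolates the constraint case cleanly---if a fired literal lies outside $\omega$ or a constraint fires, then $\omega\wedge l\vdash\bot$ propagates through (Strengthening), (Weakening) and (Cut) to give $\omega\Rrightarrow_{\Delta}\bot$, collapsing $\mathcal{C}_{\Delta}(\omega)$ to an inconsistent set and excluding $\omega$ from $\causal(\Delta)$, in perfect agreement with the failure of the corresponding biconditional of $\comp(\Delta)$.
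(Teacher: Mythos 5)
The paper offers no proof of this theorem---it is imported from Bochman (2005, Theorem~8.115), as the~$\square$ in the statement signals---so your argument can only be judged on its own terms; your skeleton (reduce to showing, for worlds~$\omega$, that $\mathcal{C}_{\Delta}(\omega)=\omega$ iff $\omega\models\comp(\Delta)$, then compute the consequence operator on worlds via the fired heads~$L(\omega)$) is a sensible reconstruction, and your easy inclusion $\overline{L(\omega)}\subseteq\mathcal{C}_{\Delta}(\omega)$ is correct. However, your key lemma $\mathcal{C}_{\Delta}(\omega)=\overline{L(\omega)}$ is false as stated. Take the determinate theory $\Delta=\{p\Rightarrow q\}$ and any world~$\omega$ with $p,\neg q\in\omega$: then $L(\omega)=\{q\}$ and $\overline{L(\omega)}$ is consistent, yet in the smallest causal production inference relation one derives $p\land\neg q\Rrightarrow_{\Delta}q$ by (Strengthening), $(p\land\neg q)\land q\Rrightarrow_{\Delta}\bot$ by (Strengthening) from (Truth and Falsity), hence $p\land\neg q\Rrightarrow_{\Delta}\bot$ by (Cut), and by (Weakening) $\mathcal{C}_{\Delta}(\omega)$ is the set of \emph{all} formulas, which differs from $\overline{L(\omega)}$. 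Your own closing paragraph derives exactly this collapse, in contradiction with the lemma it is meant to support. The same derivation defeats your plan for the hard inclusion: every causal production inference relation validating~$\Delta$ is forced by (Cut) to produce~$\bot$ from $p\land\neg q$, so no candidate~$\Rrightarrow'$ can simultaneously be regular, validate every rule of~$\Delta$, and return the consistent set $\overline{L(\omega)}$ on worlds where a fired head falls outside~$\omega$; the relation you set out to construct does not exist.

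The flaw is local, because the theorem only needs the biconditional $\mathcal{C}_{\Delta}(\omega)=\omega\iff\overline{L(\omega)}=\omega$, and on the offending worlds both sides fail: $L(\omega)\not\subseteq\omega$ already forces $\overline{L(\omega)}\neq\omega$ (a world is the deductive closure of its literals, and $\overline{L(\omega)}$ is either inconsistent or contains a literal outside~$\omega$), while the (Cut)-collapse makes $\mathcal{C}_{\Delta}(\omega)$ inconsistent, hence $\neq\omega$. So restate the lemma with the side condition $L(\omega)\subseteq\omega$---since $\bot\notin\omega$, this also excludes fired constraints---and run your bimodel construction only there: the binary semantics pairing each such world~$\omega$ with the explained set~$\overline{L(\omega)}$, in the sense of Theorem~\ref{theorem - production inference relations and binary semantics}, consists of consistent bimodels (as $\overline{L(\omega)}\subseteq\omega$), hence yields a regular relation by Theorem~\ref{theorem - regular production inference relations}; (Or) from Definition~\ref{definition - basic prodoction inference relation} holds because worlds decide every disjunction; and each rule of~$\Delta$ is valid in each bimodel by the very definition of~$L(\omega)$, so minimality of $\Rrightarrow_{\Delta}$ gives $\mathcal{C}_{\Delta}(\omega)\subseteq\overline{L(\omega)}$ on these worlds. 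Worlds with $L(\omega)\not\subseteq\omega$ are then dispatched directly by the three-step derivation of $\omega\Rrightarrow_{\Delta}\bot$ above. With that case split, your final bookkeeping against Definition~\ref{definition - completion of a determinate causal theory} goes through essentially as you wrote it.
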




Assume that the production inference relation $(\Rrightarrow)/2$ satisfies \textit{sufficient causation} in Assumption~\ref{principle - Leibniz} and expresses complete causal knowledge, thereby determining a set of causal worlds. Let $\omega$ be a world such that $\omega \not\Rrightarrow p$ for some proposition $p \in \mathfrak{P}$. In this case, $\omega$ is  either  a causal world, meaning~${\omega \Rrightarrow \neg p}$, or~$\omega$ is not a causal world, meaning $\omega \Rrightarrow \bot$. 

Thus, the causal world semantics of $(\Rrightarrow)/2$ can be characterized through the negative completion of an atomic determinate causal theory.

\begin{definition}[Negative Completion and Default Negation]
    The \textbf{negative completion}~$\Delta^{nc}$ of the atomic determinate causal theory $\Delta$ is given by
     $$
        \Delta^{nc} := \Delta \cup  \{ \neg p \Rightarrow \neg p \text{ : } p \in \mathfrak{P} \}.
     $$
     A causal theory has \textbf{default negation} if it is the negative completion of all its atomic causal rules and constraints. 
     \label{definition - negative completion and default negation}
\end{definition}

Restricting attention to causal theories with default negation amounts to treating negations as self-evident priors. This reflects the modeling assumption that parameters possess a default state, which -- without loss of generality -- may be taken as \textit{false}. In this way, the dynamic nature of causality is captured by explaining how values deviate from their defaults.

\begin{example}
    In Example~\ref{example - houses introduction}, houses typically do not burn; that is, only a fire requires an explanation.

    To model such scenarios, one begins with atomic causal rules that identify the direct causes of each proposition~$p$. For instance, in Example~\ref{example - introduction running example}, one may posit~${\textit{rain} \Rightarrow \textit{wet}}$ and ${\textit{sprinkler} \Rightarrow \textit{wet}}$ to express that either rain or the sprinkler can cause the road to be wet. If these atomic causal rules do not explain the proposition $p$, this is interpreted as an explanation for the falsity of~$p$, that is, for~$\neg p$. Example~\ref{example - indemonstrable knowledge} illustrates this principle by modeling Example~\ref{example - introduction running example} via the causal theory with default negation~$\Delta$.

    \label{example - defaults}
\end{example}

 To summarize, causal theories with default negation implement the following modeling assumption: 

\begin{assumption}[Default Negation]
    Every negative literal~$\neg p$ is an external premise of the area of science under consideration, i.e., $\neg p \in \mathcal{E}$.
    \label{principle - default negation}
    \label{assumption - default negation}
\end{assumption}

Committing to Bochman's version of Language  \ref{formalization - causal theories} and Language \ref{language - determinate causal theories}, Assumption \ref{assumption - default negation} is expressed as follows:

\begin{formalization}
    Assumption~\ref{assumption - default negation} means that, the given area of science yields a causal theory with default negation.
    \label{formalization - default negation}
\end{formalization}

\subsection{Analysis and Critique of Bochman's Logical Theory of Causality}
\label{subsec: Critique of Bochman's Logical Theory of Causality}

In Language \ref{language - propositional logic} and \ref{language - causal reasoning}, \cite{Bochman} decides to represent explainability as a binary relation $(\Rrightarrow)/2$ on formulas in a propositional alphabet $\mathfrak{P}$, which represent knowledge-\textit{that}.  \mbox{Formalizations \ref{formalization - consistency with deduction} and \ref{formalization - causal sufficiency}}, expressing Principles \ref{principle - consistency with logical reasoning}, \ref{principle - Aquinas},  and Assumption \ref{principle - Leibniz}, yield that $(\Rrightarrow)/2$ is a causal production inference relation. Formalization \ref{formalization - Leibniz}, expressing Principle \ref{principle - Aquinas} and Assumption \ref{principle - Leibniz}, further yields that the corresponding knowledge-\textit{why} is captured in the resulting causal worlds. 

Committing to Principle \ref{principle - causal rules} and Languages~\ref{formalization - causal theories}, and~\ref{language - determinate causal theories}, \cite{Bochman}  represents explainability as determinate causal theories. Finally, Formalization \ref{formalization - default negation}, expressing \textit{default negation} in Assumption \ref{principle - default negation}, yields that~$\Delta$ is a causal theory with default negation. Overall, we showed the following theorem:

\begin{theorem}
    Applying the choices in Language \ref{language - propositional logic}, \ref{language - causal reasoning}, \ref{formalization - causal theories} and \ref{language - determinate causal theories} as well as Formalization \ref{formalization - causal sufficiency}, expressing Principles~ \ref{principle - consistency with logical reasoning}, \ref{principle - causal rules},~\ref{principle - Aquinas}, and Assumption \ref{principle - Leibniz}, yield that explainability gives rise to a causal production inference relation~$(\Rrightarrow_{\Delta})/2$, which is represented by a determinate causal theory $\Delta$. 
    
    The possible states of \mbox{knowledge-\emph{why}} are represented by the causal world semantics $\Causal (\Delta)$. Finally, applying Formalization \ref{formalization - default negation}, expressing  Assumption~\ref{principle - default negation}, leads to a causal theory~$\Delta$ with default negation.~$\square$
    \label{theorem - completeness of causal theories}
\end{theorem}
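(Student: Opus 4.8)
The plan is to assemble the statement from the formalizations already established in this section, treating it as the endpoint of the chain of modeling commitments rather than as a fresh mathematical claim; the ``proof'' is essentially a verification that these commitments compose, in the stated order, into the asserted conclusion. I would proceed in the same sequence in which the ingredients were introduced, checking at each step that the relevant formalization applies to the object produced by the previous one.

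First I would invoke Languages \ref{language - propositional logic} and \ref{language - causal reasoning} to fix explainability as a binary relation $(\Rrightarrow)/2$ on propositional formulas over $\mathfrak{P}$. Applying Formalization \ref{formalization - consistency with deduction}, which encodes \emph{consistency with deduction} in Principle \ref{principle - consistency with logical reasoning}, upgrades this relation to a production inference relation. Next I would feed in \emph{natural necessity} (Principle \ref{principle - Aquinas}) and \emph{sufficient causation} (Assumption \ref{principle - Leibniz}) through Formalization \ref{formalization - causal sufficiency}: this simultaneously shows that $(\Rrightarrow)/2$ is regular, via Theorem \ref{theorem - regular production inference relations}, and basic, via the (Or)-argument preceding Definition \ref{definition - basic prodoction inference relation}, hence causal by Definition \ref{definition - causal production inference relation}; it also yields that the relation is determined by its causal worlds, so that the admissible states of knowledge-\emph{why} are exactly $\Causal(\Rrightarrow)$.

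Then I would pass from the abstract relation to its syntactic presentation. Committing to \emph{causal rules} in Principle \ref{principle - causal rules} together with Languages \ref{formalization - causal theories} and \ref{language - determinate causal theories} lets me represent $(\Rrightarrow)/2$ as $(\Rrightarrow_{\Delta})/2$ for a causal theory $\Delta$; the reduction to a \emph{determinate} theory is the one place where something must actually be checked, and I would justify it as in the remark before Definition \ref{definition - determinate causal theories}: every world is the deductive closure of its literals, and disjunctive causes split into separate rules by (Or), so that under \emph{sufficient causation} no generality is lost. Identifying $\Causal(\Rrightarrow)$ with $\Causal(\Delta)$ then gives the clause about knowledge-\emph{why}. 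Finally, layering Formalization \ref{formalization - default negation} on top forces $\neg p \in \mathcal{E}$ for every atom, which by Definition \ref{definition - negative completion and default negation} is precisely the statement that $\Delta$ has default negation, establishing the last clause.

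The main obstacle is not a computation but bookkeeping: since the result is a synthesis, the real work is confirming that the formalizations compose in the stated order without hidden circularity—in particular that the determinate-theory reduction genuinely rests on \emph{sufficient causation} in Assumption \ref{principle - Leibniz} and does not quietly presuppose the stronger \emph{default negation} in Assumption \ref{assumption - default negation}, which is imposed only at the final step. I would therefore keep Assumption \ref{principle - default negation} separate from Assumption \ref{principle - Leibniz} throughout, so that the first two conclusions hold for an arbitrary external-premise set $\mathcal{E}$ and only the concluding clause specializes to the default-negation regime.
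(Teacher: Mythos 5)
Your proposal is correct and takes essentially the same route as the paper, whose own argument is exactly this composition: Languages \ref{language - propositional logic} and \ref{language - causal reasoning} with Formalizations \ref{formalization - consistency with deduction} and \ref{formalization - causal sufficiency} to obtain a causal production inference relation determined by its causal worlds, then Principle \ref{principle - causal rules} with Languages \ref{formalization - causal theories} and \ref{language - determinate causal theories} for the determinate-theory representation (justified, as you do, by the literal-closure and (Or)-splitting argument preceding Definition \ref{definition - determinate causal theories}), and finally Formalization \ref{formalization - default negation} for the default-negation clause. Your bookkeeping point---that the determinate reduction rests only on \emph{sufficient causation} in Assumption \ref{principle - Leibniz} while Assumption \ref{assumption - default negation} enters only in the last step---is likewise how the paper structures it.
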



We identify two potential limitations of Bochman's theory: First, Principle~\ref{principle - causal foundation}, as formalized in Formalization~\ref{formalization - binary semantics}, fails in the presence of cyclic causal relations. Second, causal rules with compound effects may give rise to problematic cases.

\subsubsection{Cyclic Causal Relations}
\label{subsubsec: Cyclic Causal Relations}

Let us take a closer look at cyclic causal relations in the context of the notion of knowledge-\textit{why} proposed by \mbox{\cite{Bochman}}.

\begin{example}
    According to Theorem~\ref{theorem - completion of a causal theory}, the causal theory $\Delta$ in Example \ref{example - counterexample Bochman introduction},  has the causal worlds:
    \begin{align*}
        & \omega_1 := \emptyset, &&
         \omega_3 := \{ \textit{start\_fire}(h_1),~ \textit{fire}(h_1), \textit{fire}(h_2) \},  \\ 
        & \omega_2 := \{ \textit{fire}(h_1),~ \textit{fire}(h_2) \}
        && \omega_4 := \{ \textit{start\_fire}(h_2),~ \textit{fire}(h_1),~ \textit{fire}(h_2) \}, \\
        &&& \omega_5 := \{ \textit{start\_fire}(h_1), \textit{start\_fire}(h_2),\textit{fire}(h_1), \textit{fire}(h_2) \}.
    \end{align*}

    In the causal world $\omega_2$, both Houses $h_1$ and $h_2$ catch fire even though neither of them started burning. This contradicts everyday causal reasoning, as we do not expect houses to catch fire because they potentially influence each other. 

    In particular, we observe that (Strengthening) and (Cut) in Definitions~\ref{definition - production inference relation} and~\ref{definition - regular production infernece relation} imply that $\textit{fire}(h_1) \Rrightarrow_{\Delta} \textit{fire}(h_1)$, even though $\textit{fire}(h_1)$ cannot be demonstrated from the external premises in $\neg \textit{start\_fire}(h_1),~\neg \textit{start\_fire}(h_2) \in \omega_2$. 

    This contradicts Principle \ref{principle - causal foundation}, as expressed in Formalization~\ref{formalization - binary semantics}, if we adopt Bochman's version of Language~\ref{formalization - causal theories}.
    \label{example - counterexample Bochman}
\end{example}

Example~\ref{example - counterexample Bochman} illustrates a drawback of the approach in~\cite{Bochman}, where Principle~\ref{principle - causal foundation}, as expressed in Formalization~\ref{formalization - binary semantics}, fails in the presence of cyclic causal relations, leading to circular ``explanations'' and counterintuitive~results.

\subsubsection{Compound Effects}
\label{subsubsec: Compund Effects}

 Aristotle did not study causal relations involving disjunctions or implications in the effect. In particular, it is unclear what it means to have a \emph{demonstration} of a (logical) implication.  As the following example illustrates, the approach in \cite{Bochman} leads to ``demonstrations'' that allow conclusions to be drawn against the direction of cause and effect:

\begin{example}
    Let $\Delta$ be a causal theory consisting of the causal rules:
    \begin{align*}
        & a \Rightarrow a, && a \Rightarrow (a \rightarrow b), && b \Rightarrow a, 
        && \neg a \Rightarrow \neg a, && \neg b \Rightarrow \neg b.
    \end{align*}
    In this case, (And) in Definition~\ref{definition - production inference relation} yields $a \Rrightarrow_{\Delta} a \land (a \rightarrow b)$, and \mbox{(Weakening)} in Definition~\ref{definition - production inference relation} yields $a \Rrightarrow_{\Delta} b$. This seems problematic, as the ``demonstration'' of~$b$ with premise $a$ relies on the implication ``$a \rightarrow b$'', which contradicts the causal direction.
    \label{example - implication in effects}
\end{example}

Since it is unclear what it means for an implication to be caused or whether entailment $(\vdash)/2$ in classical propositional logic is the appropriate choice in the (Weakening) axiom of Definition~\ref{definition - production inference relation}, Example~\ref{example - implication in effects} highlights a potential issue with general compound effects.

\subsection{Causal Systems: A Generic Representation of Causal Reasoning}
\label{subsec: Causal Systems}

Fix a propositional alphabet $\mathfrak{P}$. To address the issues raised in Remark~\ref{remark - observations} and Section~\ref{subsec: Critique of Bochman's Logical Theory of Causality}, we propose the notion of a deterministic causal system:

\begin{definition}[Causal System]
      A \textbf{(deterministic) causal system} is a tuple $\textbf{CS} := (\Delta, \mathcal{E}, \mathcal{O})$, where:
     \begin{itemize}
         \item $\Delta$ is a literal causal theory called the \textbf{causal knowledge} of $\textbf{CS}$.
         \item $\mathcal{E}$ is a set of literals called the \textbf{external premises} of $\textbf{CS}$.
         \item $\mathcal{O}$ is a set of formulas called the \textbf{observations} of $\textbf{CS}$.
     \end{itemize}
      The causal system $\textbf{CS}$ is \textbf{without observations} if $\mathcal{O} = \emptyset$. Otherwise, the causal system $\textbf{CS}$ \textbf{observes something}. It applies \textbf{default negation}  if every negative literal $\neg p$ for $p \in \mathfrak{P}$ is an external premise, i.e.,~${\neg p \in \mathcal{E}}$ and no external premise is an effect of a causal rule in $\Delta$.

      The \textbf{causal structure} of $\textbf{CS}$ is given by $\graph(\textbf{CS}) := \graph(\Delta)$.
     \label{definition - abductive causal theory}
     \label{definition - causal system}
\end{definition}

\begin{example}
     Let $\textbf{CS}_1 := (\Delta, \mathcal{E}, \emptyset)$ be the causal system without observations in Example \ref{example - introducing causal systems}. 
     If we additionally observe a fire in House $h_1$, i.e.,~${\mathcal{O} := \{ \textit{fire}(h_1) \}}$, this yields the causal system  
     $
     \textbf{CS}_2 := \left( \Delta, \mathcal{E}, \{ \textit{fire}(h_1) \} \right)
     $. Note that both causal systems $\textbf{CS}_1$ and $\textbf{CS}_2$ apply default negation.
     \label{example - causal system}
\end{example}

We use Definition \ref{definition - causal system} together with the following guideline:

\begin{Language}
    \emph{Causal foundation} in Principle~\ref{principle - causal foundation} gives rise to a set of external premises~$\mathcal{E}$ that do not require further explanation. According to \emph{causal rules} in Principle~\ref{principle - causal rules}, causal knowledge is captured by a causal theory~$\Delta$, which contains a causal rule~${\phi \Rightarrow \psi}$ whenever the formula~$\phi$ is a direct cause of the formula~$\psi$, i.e., there exists a \emph{demonstration} of~$\psi$ from the premise~$\phi$. All observations are formalized as a set of formulas~$\mathcal{O}$, yielding the causal system~${\mathbf{CS} := (\Delta, \mathcal{E}, \mathcal{O})}$.
    \label{language - causal systems}
    \label{formalization - causal systems}
\end{Language}

Hereby, we address the concerns in Remark \ref{remark - observations} and Section \ref{subsubsec: Compund Effects} by committing to the following assumtion:

\begin{assumption}
    The causal theory $\Delta$ in Language \ref{formalization - causal systems} is literal.
    \label{assumption - literal causal theory}
\end{assumption}

According to \textit{causal foundation} in Principle \ref{principle - causal foundation}, causal explanations or \emph{demonstrations} should start with \textit{external premises} in $\mathcal{E}$. 

\begin{definition}[Semantics of Causal Systems]
     Let ${\mathbf{CS} := (\Delta, \mathcal{E}, \mathcal{O})}$ be a causal system. The \textbf{explanatory closure} of~$\mathbf{CS}$ is the causal theory 
     $$
     \Delta (\mathbf{CS}) := \Delta \cup \{ l \Rightarrow l \mid l \in \mathcal{E} \}.
     $$
     The \textbf{consequence operator}~$\mathcal{C}$ of~$\mathbf{CS}$ is the consequence operator of the explanatory closure~$\Delta (\mathbf{CS})$.
     
     A \textbf{causal world}~$\omega$ is a world that satisfies~$\mathcal{C} (\omega \cap \mathcal{E}) = \omega$ and~$\omega \models \mathcal{O}$. The set of all causal worlds~$\Causal(\mathbf{CS})$ is called the \textbf{causal world semantics} of~$\mathbf{CS}$. 

    For a formula~$\phi$, the system~$\mathbf{CS}$ has \textbf{knowledge-\textit{that}}~$\phi$, denoted~$\mathbf{CS} \stackrel{\textit{that}}{\models} \phi$, if~$\phi \in \omega$ for all causal worlds~$\omega \in \Causal(\mathbf{CS})$.

     \label{definition - semantics of causal systems}
\end{definition}

\begin{example}
    In the situation of Examples~\ref{example - counterexample Bochman} and~\ref{example - causal system}, we find that the causal system~$\mathbf{CS}_1 := (\Delta, \mathcal{E}, \emptyset)$ has the causal world semantics:
    $$
    \Causal (\mathbf{CS}_1) = \{ \omega_1,  \omega_3, \omega_4, \omega_5 \}.
    $$
    
    Applying \textit{causal foundation} in Principle \ref{principle - causal foundation} and \textit{sufficient causation} in Assumption~\ref{principle - Leibniz}, the system~$\mathbf{CS}_1$ assumes that the causal production inference relation~${(\Rrightarrow_{\Delta(\textbf{CS})})/2}$ explains the occurrence of every possible event based on premises in~$\mathcal{E}$. Since it cannot explain \textit{why}~$\textit{fire} (h_i) \in \omega_2$, it refutes~$\omega_2$, meaning that~$\omega_2$ is not a causal world. 
    
    The causal system~$\mathbf{CS}_2 := (\Delta, \mathcal{E}, \{ \textit{fire} (h_1) \})$ additionally observes a fire in House~$h_1$ and therefore refutes the world~$\omega_1$. It has the causal world semantics:
    $$
    \Causal (\mathbf{CS}_2) = \Causal (\mathbf{CS}_1) \setminus \{ \omega_1 \}.
    $$
    
    \label{example - semantics of causal systems}
\end{example}

Fix an area of science with observations that is captured in a causal system~$\textbf{CS} := (\Delta, \mathcal{E}, \mathcal{O})$ according to Language~\ref{formalization - causal systems}. Note that the explanatory closure $\Delta (\textbf{CS})$ is the causal theory obtained by Language~\ref{formalization - causal theories} and ${\phi \Rrightarrow_{\Delta(\textbf{CS})}\psi}$ means that knowledge-\textit{why} $\phi$ yields knowledge-\textit{why} $\psi$. From Section~\ref{subsubsec: Cyclic Causal Relations}, we conclude that, in general, $\Delta (\textbf{CS})$ yields a causal production inference relation that does not satisfy Principle \ref{principle - causal foundation} as stated in Formalization~\ref{formalization - binary semantics} and, therefore, results in too many causal worlds.

In Definition~\ref{definition - semantics of causal systems}, we enforce Principle \ref{principle - causal foundation} by requiring that each causal world~$\omega$ is fully explained by the external propositions in~$\omega \cap \mathcal{E}$, i.e.,~${\mathcal{C}(\omega \cap \mathcal{E}) = \omega}$. According to \textit{natural necessity} in Principle \ref{principle - Aquinas} and \textit{sufficient causation} in Assumption~\ref{principle - Leibniz} as stated in Formalization \ref{formalization - Leibniz}, explainability~$(\Rrightarrow)/2$ is uniquely determined by its causal worlds. We conclude that Definition~\ref{definition - semantics of causal systems} provides the correct formalization of causal explainability within the given area of science. Finally, we note that the given area of science satisfies \textit{default negation} in Assumption \ref{assumption - default negation} if and only if the causal system $\textbf{CS}$ applies default negation.

\begin{formalization}
    Apply Language \ref{language - causal systems} together with Assumption~\ref{assumption - literal causal theory} to express an area of science with observations in a causal system
    $
    {\textbf{CS} := (\Delta, \mathcal{E}, \mathcal{O})}.
    $ 
    Furthermore, apply Languages \ref{language - propositional logic}, \ref{language - causal reasoning} and Formalization \ref{formalization - consistency with deduction}, expressing Principle~\ref{principle - consistency with logical reasoning}, to represent explainability by a production inference relation $(\Rrightarrow)/2$. Explainability~$(\Rrightarrow)/2$ then satisfies Principles~\ref{principle - causal foundation} and~\ref{principle - Aquinas}, as well as Assumptions~\ref{principle - Leibniz} if and only if it is determined by the causal world semantics of $\textbf{CS}$ as indicated in Formalization \ref{formalization - binary semantics}. 
    Finally, Assumption \ref{assumption - default negation} is satisfied if and only if the causal system $\textbf{CS}$ applies default negation.
    \label{formalization - semantics of causal systems}
\end{formalization}

By \emph{directionality} in Principle~\ref{principle - directionality}, a causal system~$\mathbf{CS}$ acquires \mbox{knowledge-\emph{why}} only if its explanations follow the direction of cause and effect. This, we argue, entails \emph{causal irrelevance} in Principle~\ref{principle - causal irrelevance}, formalized by~\cite{rueckschloß2025rulesrepresentcausalknowledge} as follows:

\begin{formalization}[\cite{rueckschloß2025rulesrepresentcausalknowledge}]
    Let $\textbf{CS} := (\Delta, \mathcal{E}, \mathcal{O})$ be a causal system.  
    For a set of propositions ${S \subseteq \mathfrak{P}}$, let~$\mathfrak{P}^{> S}$ denote the set of all propositions $q \in \mathfrak{P} \setminus S$ that are descendants in $G := \graph(\textbf{CS})$ of some proposition in~$S$ and set $\mathfrak{P}^{\geq S} = \mathfrak{P}^{> S} \cup  S$, $\mathfrak{P}^{< S} = \mathfrak{P} \setminus \mathfrak{P}^{\geq S}$, and $\mathfrak{P}^{\leq S} = \mathfrak{P} \setminus \mathfrak{P}^{> S}$.  
   
    For $\ast \in \{ <, \leq, > , \geq \}$ denote by $\Delta^{\ast S}$ the causal theory of all rules $R \in \Delta$ with ${\effect(R) = (\neg) p}$,~${p \in \mathfrak{P}^{\ast S}}$.
    
    For a $\mathfrak{P}^{\leq S}$-structure $\omega^{\leq S}$ set ${\textbf{CS}^{> S, \omega} := (\Delta^{>S}, \mathcal{E} \cup \{ p, \neg p \text{: } p \in \mathfrak{P}^{\leq S} \}, \omega^{\leq S} )}$ 

    The causal system $\textbf{CS}$ satisfies \emph{causal irrelevance} in Principle~\ref{principle - causal irrelevance} if and only if for every set $S \subseteq \mathfrak{P}$ and every $\mathfrak{P}^{\leq S}$-structure $\omega^{\leq S}$, the system
    $
        \textbf{CS}^{> S, \omega}
    $
    has at least one causal world; that is, it is not possible to falsify $\omega^{\leq S}$ with the causal knowledge in $\Delta^{>S}$.
    \label{formalization - causal irrelevance - deterministic}
\end{formalization}

\cite{Williamson2001} proposes Principle~\ref{principle - causal irrelevance} in the context of Bayesian networks as a weakening of the Markov assumption \mbox{\citep{Causality}}. 
Accordingly, Formalization \ref{formalization - causal irrelevance - deterministic} could be viewed as a deterministic analogue of the Markov assumption. Our considerations motivate the following definition.

\begin{definition}[Knowledge-\emph{Why}]
     A causal system $\textbf{CS} := (\Delta, \mathcal{E}, \mathcal{O})$ \textbf{provides demonstrations} if it satisfies Principle \ref{principle - causal irrelevance} according to Formalization~\ref{formalization - causal irrelevance - deterministic}.
     
     Let $\phi$ be a formula. If~${\textbf{CS} := (\Delta, \mathcal{E}, \mathcal{O})}$ provides demonstrations, it has \textbf{knowledge-\textit{why}}~$\phi$, written~$\textbf{CS}\stackrel{\textit{why}}{\models} \phi$, if~$\textbf{CS}\stackrel{\textit{that}}{\models} \phi$ and ${(\Delta, \mathcal{E}, \mathcal{O} \cap \mathcal{E}) \stackrel{\textit{that}}{\models} \phi}$. Here, $\mathcal{O} \cap \mathcal{E}$ denotes the set of all formulas $o \in \mathcal{O}$ in the external premises $\epsilon \in \mathcal{E}$, i.e., no observations are needed to conclude against the causal direction.
     \label{definition - knowledge-why}
\end{definition}

Within causal systems providing demonstrations, we propose the following formalization of \emph{directionality} in  Principle \ref{principle - directionality}.

\begin{formalization}
    Assume that the causal system $\textbf{CS} := (\Delta, \mathcal{E}, \mathcal{O})$ in Formalization \ref{formalization - causal irrelevance - deterministic} provides demonstrations, i.e., it satisfies \emph{causal irrelevance} in Principle \ref{principle - causal irrelevance} and let $\phi$ be a formula such that $\textbf{CS} \stackrel{\text{that}}{\models} \phi$. 
    
    The explanation of $\textbf{CS}$ for $\phi$ satisfies \emph{directionality} in Principle \ref{principle - directionality} if and only if $(\Delta, \mathcal{E}, \mathcal{O} \cap \mathcal{E}) \stackrel{\textit{that}}{\models} \phi$ in Definition~\ref{definition - knowledge-why}.
    Consequently, the system possesses knowledge-\emph{that} and knowledge-\emph{why} as indicated in Definitions \ref{definition - causal system} and \ref{definition - knowledge-why}.
    \label{formalization - knowledge-why}
\end{formalization}

\subsection{Interpreting Pearl's Structural Causal Models as Causal Systems}
\label{subsec: Interpreting Artificial Intelligence Frameworks as Deterministic Causal Systems}

Finally, we show how the structural causal models of \cite{Causality} can be interpreted as causal systems. This allows us to apply Language \ref{language - causal systems}, Formalizations \ref{formalization - semantics of causal systems}, and \ref{formalization - knowledge-why} to evaluate the kind of knowledge provided by this formalism. 

\subsubsection{Pearl's Functional Causal Models}
\label{subsubsec: Pearl's Functional Causal Models}

\cite{Causality} suggests modeling causal relationships with deterministic functions. This leads to the following definition of structural causal models. 

\begin{definition}[Structural Causal Model $\text{\cite[§7.1.1]{Causality}}$]
A \textbf{(Boolean) (structural) causal model} 
$
{\mathcal{M} := (\textbf{U}, \textbf{V}, \Error, \Pa, \textbf{F})},
$
is a tuple, where
\begin{itemize}
    \item $\textbf{U}$ is a finite set of \textbf{external} variables representing the part of the world outside the model
    \item $\textbf{V}$ is a finite set of \textbf{internal} variables determined by the causal relationships in the model
    \item $\Error(\cdot)$ is a function assigning to each internal variable $V \in \textbf{V}$ its \textbf{error terms} $\Error (V) \subseteq \textbf{U}$, i.e.~the external variables $V$ directly depends on  
    \item $\Pa(\cdot)$ is a function assigning to each internal variable $V \in \textbf{V}$ its \textbf{parents}~${\Pa (V) \subseteq \textbf{V}}$, i.e.~the set of internal variables $V$ directly depends on  
    \item $\textbf{F}(\cdot)$ is a function assigning to every internal variable $V \in \textbf{V}$ a map
    $$
    \textbf{F}(V) := F_V  :  \{ \textit{True},~\textit{False} \}^{\Pa (V)} \times  \{ \textit{True},~\textit{False} \}^{\Error(V)} \rightarrow \{ \textit{True},~\textit{False} \},
    $$
    which itself assigns to each value assignments $\pa (V)$ and $\error (V)$ of the parents $\Pa (V)$ and  the error terms $\Error (V)$, respectively, a value 
    $$
    F_V(\pa(V), \error (V)) \in \{ \textit{True},~\textit{False} \}.
    $$
\end{itemize}
Here, for a subset of variables $\textbf{X} \subseteq \textbf{U} \cup \textbf{V}$, a \textbf{value assignment} is a function~${\textbf{x} : \textbf{X} \rightarrow \{ \textit{True}, \textit{False} \}}$. A \textbf{situation} is a value assignment~$\textbf{u}$ for the external variables $\textbf{U}$. Finally, $\mathcal{M}$  is identified with the system of \textbf{structural equations}
$$
\mathcal{M} := \{ V := F_V(\Pa (V), \Error(V)) \}_{ V \in \textbf{V} }.
$$ 
A \textbf{solution} $\textbf{s}$ of $\mathcal{M}$ then is a value assignment on the variables $\textbf{U} \cup \textbf{V}$ such that each equation in~$\mathcal{M}$ is satisfied.

To a structural causal model $\mathcal{M}$ one associates its \textbf{causal diagram} or \textbf{causal structure} $\graph(\mathcal{M})$, which is the directed graph on the internal variables $\textbf{V}$ obtained by drawing an edge $p \rightarrow q$ if and only if $p \in \Pa (q)$. The model~$\mathcal{M}$ is \textbf{acyclic} if its causal structure $\graph(\mathcal{M})$ is a directed acyclic graph.
\label{definition - structural causal model}
\end{definition}

\begin{notation}
    The parents~$\Pa(V)$ and error terms~$\Error(V)$ of an internal variable~$V \in \textbf{V}$ are typically evident from its defining function~$F_V$. Accordingly, when specifying a causal model, this work omits explicit mention of the parent map~$\Pa(\cdot)$ and the error term map~$\Error(\cdot)$.
\end{notation}

\begin{example}
The situation in Example \ref{example - introduction running example} is represented by a structural causal model~$\mathcal{M}$ with internal variables $\textbf{V} := \{ \textit{rain},~\textit{sprinkler},~\textit{wet},~\textit{slippery} \}$, external variables~${\textbf{U} := \{ \textit{cloudy} \}}$ and with functions, given by Equations (\ref{equation - structural equations sprinkler}).

One finds for instance that $\Pa (\textit{wet}) = \{ \textit{sprinkler} \}$ and~${\Error (\textit{rain}) = \{ \textit{cloudy} \}}$. The causal structure $\graph(\mathcal{M})$ of $\mathcal{M}$ is obtained from Graph (\ref{equation - causal structure}) by erasing the node $\emph{cloudy}$ together with all outgoing arrows.
Hence, $\mathcal{M}$ is an acyclic causal model.
\label{example - sprinkler}
\label{example - sprinkler deterministic}
\end{example}

Structural causal models are of interest because they can represent the effects of external interventions. According to Chapter 7 of \cite{Causality}, the key idea is that the modified model $\mathcal{M}_{\textbf{i}}$ represents the minimal change to a model $\mathcal{M}$ necessary to enforce the values specified by $\textbf{i}$:

\begin{definition}[Modified Causal Model]
    Let~$
    {\mathcal{M} := (\textbf{U}, \textbf{V}, \Error, \Pa, \textbf{F})}
    $ be a structural causal model.  
    Given a subset of internal variables $\textbf{I} \subseteq \textbf{V}$ with a value assignment $\textbf{i}$, the \textbf{modified (causal) model} or \textbf{submodel} is defined as:
    $$
    \mathcal{M}_{\textbf{i}} := (\textbf{U}, \textbf{V}, \Error, \Pa, \textbf{F}_{\textbf{i}}).
    $$
    In particular, the function $\textbf{F}$ is replaced with $\textbf{F}_{\textbf{i}}$, which is given by setting
    $$
    \textbf{F}_{\textbf{i}} (V) (\pa(V), \error(V)) :=    
    \begin{cases}
        \textbf{i}(V), & \text{if } V \in \textbf{I}, \\
        \textbf{F} (V) (\pa(V), \error(V)), & \text{otherwise}.
    \end{cases}
    $$
    for every internal variable $V \in \textbf{V}$, where $\pa(V)$ and $\error(V)$ denote value assignments for the parents~$\Pa(V)$ and the error terms $\Error(V)$, respectively.
    \label{definition - modified causal model}
\end{definition}

\begin{notation}
    Let $V \in \textbf{V}$ be an internal variable of a structural causal model~${\mathcal{M}}$. In this case, one writes $\mathcal{M}_V := \mathcal{M}_{V := \textit{True}}$ and $\mathcal{M}_{\neg V} := \mathcal{M}_{V := \textit{False}}$.
\end{notation}

\begin{example}
    Switching the sprinkler off in the model of \mbox{Example~\ref{example - sprinkler deterministic}} yields  
    the modified model with Structural Equations (\ref{equation - structural equations sprinkler after interventions}).
    
    \label{example - sprinkler deterministic intervention}
\end{example}

As in Example~\ref{example - sprinkler deterministic intervention}, actions often force a variable in a causal model to take on a new value. \cite{Causality} emphasizes that submodels~$\mathcal{M}_{\textbf{i}}$ typically arise from performing actions that set certain variables to specific values, a process formalized by the introduction of the \emph{do}-operator. To obtain well-defined results, he restricts himself to the study of functional causal models: 

\begin{definition}[Functional Causal Model]
    A \textbf{(functional) causal model} is a structural causal model~${\mathcal{M} := (\textbf{U}, \textbf{V}, \textbf{R}, \Error, \Pa, \textbf{F})}$ such that for each value assignment $\textbf{i}$ on a subset of internal variables~${\textbf{I} \subseteq \textbf{V}}$ every situation $\textbf{u}$ of~$\mathcal{M}_{\textbf{i}}$ yields a unique solution~$\textbf{s}_{\textbf{i}} (\textbf{u})$ of the modified model $\mathcal{M}_{\textbf{i}}$. 
    
    \label{definition - functional causal model}
\end{definition}

\begin{remark}
    Acyclic structural causal model are functional causal models.
\end{remark}

\begin{example}
    Reconsider the causal model from Example~\ref{example - sprinkler deterministic} and 
    assume that it is sunny. This corresponds to the situation $\textbf{u}$, where~${\textit{cloudy} = \textit{False}}$. 
    By analyzing the model $\mathcal{M}$ and the modified model $\mathcal{M}_{\neg \textit{sprinkler}}$ from Example~\ref{example - sprinkler deterministic intervention}, one finds that~${\textit{slippery} = \textit{True}}$ in the solution of $\mathcal{M}$, whereas $\textit{slippery} = \textit{False}$ in the solution of the modified model $\mathcal{M}_{\neg \textit{sprinkler}}$. Consequently, the road will become dry if one intervenes by manually switching off the sprinkler. 
\end{example}

\subsubsection{Interpreting Pearl's Structural Causal Models as Causal Systems}
\label{subsubsec: Interpreting Pearl's Functional Causal Models as Causal Systems}

We propose causal systems without observations that apply default negation  
 as a language for the structural causal models of \cite{Causality}.

\begin{definition}[Bochman Transformation]
    The \textbf{Bochman transformation} of a structural causal model~$\mathcal{M} :=  (\textbf{U}, \textbf{V}, \Error, \Pa, \textbf{F})$ is the causal system without observations~$\textbf{CS}(\mathcal{M}) := (\Delta, \mathcal{E}, \emptyset)$, where the causal knowledge is given by~$\Delta := \{ F_V \Rightarrow V \}_{V \in \textbf{V}}$ 
    and the external premises by~${\mathcal{E} := \textbf{U} \cup \{ \neg V \}_{V \in \textbf{U} \cup \textbf{V} }}$.
    \label{definition - Bochman transformation of abductive causal theories}
\end{definition}

\begin{example}
   The Bochman transformation of the causal model~$\mathcal{M}$ from Example~\ref{example - sprinkler deterministic} is the causal system~$\textbf{CS}(\mathcal{M}) := (\Delta, \mathcal{E}, \emptyset)$, where~$\Delta$ is given by Rules~(\ref{equation - first rule sprinkler}) and~(\ref{equation - second rule sprinkler}), and~$\mathcal{E} := \{\textit{cloudy},\neg \textit{cloudy},\neg \textit{sprinkler},\neg \textit{rain},\neg \textit{wet},\neg \textit{slippery}\}$.

\label{example - Bochman transformation}
\end{example}

\begin{remark}
    Without loss of generality, the functions~${F_V(\pa(V), \error(V))}$ may be assumed to be expressed in disjunctive normal form, as illustrated in Equations~(\ref{equation - structural equations sprinkler}). By applying~(Or) from Definition~\ref{definition - basic prodoction inference relation}, the Bochman transformation~$\mathbf{CS}(\mathcal{M})$ can therefore be identified with a causal system that employs default negation, while preserving the set of causal worlds.
\end{remark}


The causal worlds $\omega$ of the Bochman transformation~$\textbf{CS} (\mathcal{M})$ of a causal model $\mathcal{M}$ correspond to solutions of $\mathcal{M}$. 

\begin{theorem}
    If $\mathcal{M}$ is a structural causal model, every causal world $\omega$ of the Bochman transformation~$\textbf{CS} (\mathcal{M})$ yields a solution of $\mathcal{M}$. The converse also holds if   the causal model $\mathcal{M}$ is acyclic. 
    \label{theorem - Bochman transformation for deterministic causal models 1}
\end{theorem}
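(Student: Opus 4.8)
The plan is to reduce the statement to the completion theorem (Theorem~\ref{theorem - completion of a causal theory}) together with a forward-chaining description of the consequence operator $\mathcal{C}$ of the explanatory closure $\Delta(\textbf{CS}(\mathcal{M}))$. First I would compute $\comp(\Delta(\textbf{CS}(\mathcal{M})))$: writing each $F_V$ in disjunctive normal form and splitting $F_V \Rightarrow V$ with (Or) (Definition~\ref{definition - basic prodoction inference relation}), the only rules with a positive internal effect $V \in \textbf{V}$ derive $V$ from a disjunct of $F_V$, while the external defaults $U \Rightarrow U$ for $U \in \textbf{U}$ and the negative defaults $\neg W \Rightarrow \neg W$ for $W \in \textbf{U} \cup \textbf{V}$ contribute only the trivial biconditionals $U \leftrightarrow U$ and $\neg W \leftrightarrow \neg W$. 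Hence the completion is logically equivalent to $\{\, V \leftrightarrow F_V : V \in \textbf{V} \,\}$, whose models are exactly the solutions of $\mathcal{M}$. By Theorem~\ref{theorem - completion of a causal theory}, the solutions of $\mathcal{M}$ thus coincide with the worlds $\omega$ that are exact theories of $\Delta(\textbf{CS}(\mathcal{M}))$, i.e. satisfy $\mathcal{C}(\omega) = \omega$. It then remains to compare these exact worlds with the causal worlds of the \emph{system}, which by Definition~\ref{definition - semantics of causal systems} are the worlds with $\mathcal{C}(\omega \cap \mathcal{E}) = \omega$.

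For the forward implication (valid for arbitrary $\mathcal{M}$), let $\omega$ be a causal world of $\textbf{CS}(\mathcal{M})$. I would first record the \emph{sufficiency} half: if $\omega \models F_V$ then $F_V \in \omega = \mathcal{C}(\omega \cap \mathcal{E})$, so $\omega \cap \mathcal{E} \Rrightarrow F_V$; combining this with $F_V \Rrightarrow V$ by (Strengthening) and (Cut) (Definitions~\ref{definition - production inference relation} and~\ref{definition - regular production infernece relation}) yields $\omega \cap \mathcal{E} \Rrightarrow V$, hence $V \in \omega$. For the \emph{necessity} half I would use that $\mathcal{C}$ applied to the consistent literal set $\omega \cap \mathcal{E}$ is the deductive closure of the least set of literals that contains $\omega \cap \mathcal{E}$ and is closed under firing a rule $F_{V'} \Rightarrow V'$ whenever its body is already entailed. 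An induction shows this least set stays inside $\omega$: the seed $\omega \cap \mathcal{E}$ does, and whenever a rule fires with body entailed by a subset of $\omega$ one has $\omega \models F_{V'}$, so $V' \in \omega$ by the sufficiency half. Consequently any positive internal literal $V \in \omega = \mathcal{C}(\omega \cap \mathcal{E})$ must have been produced by $F_V \Rightarrow V$ with $F_V$ entailed by literals of $\omega$, giving $\omega \models F_V$. Together the two halves say $\omega \models V \leftrightarrow F_V$ for every $V$, i.e. $\omega$ is a solution.

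For the converse under acyclicity, let $\omega$ be a solution; since $\omega \cap \mathcal{E} \subseteq \omega$ and $\mathcal{C}$ is monotone, $\mathcal{C}(\omega \cap \mathcal{E}) \subseteq \mathcal{C}(\omega) = \omega$, so only $\omega \subseteq \mathcal{C}(\omega \cap \mathcal{E})$ needs proof. All negative literals true in $\omega$ and all external literals lie in $\omega \cap \mathcal{E}$ by construction of $\mathcal{E}$, so it suffices to derive every positive internal literal $V \in \omega \cap \textbf{V}$. Using that $\mathcal{M}$ is acyclic, I would fix a topological order of $\graph(\mathcal{M})$ and induct along it: for $V \in \omega$ the equation gives $\omega \models F_V$, the literals of $\omega$ on $\Pa(V)$ are derivable by the induction hypothesis while those on $\Error(V) \subseteq \textbf{U}$ are external premises, so by (And) and (Weakening) $\omega \cap \mathcal{E} \Rrightarrow F_V$, and then $F_V \Rrightarrow V$ with (Strengthening) and (Cut) gives $V \in \mathcal{C}(\omega \cap \mathcal{E})$.

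The main obstacle is the \emph{necessity} half of the forward direction, i.e. justifying the forward-chaining description of $\mathcal{C}$ on a consistent literal set and showing that its closure cannot over-generate a positive internal literal whose defining condition fails; this is exactly the point at which the system semantics $\mathcal{C}(\omega \cap \mathcal{E}) = \omega$ is strictly stronger than being a model of the completion, and it is what rules out unfounded cyclic worlds such as $\omega_2$ in Example~\ref{example - counterexample Bochman}. Acyclicity is essential only for the converse: the topological induction breaks for cyclic models, where a solution supported solely through a cycle is not reachable from the external premises.
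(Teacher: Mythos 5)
Your skeleton is the paper's: the paper's entire proof is a one-line appeal to Theorem~\ref{theorem - completion of a causal theory}, noting that causal worlds of $\textbf{CS}(\mathcal{M})$ are models of the completion of the explanatory closure, which (as you correctly compute) is logically equivalent to $\{\,V \leftrightarrow F_V : V \in \textbf{V}\,\}$, whose models are the solutions. Your converse direction is correct and is in fact a genuine elaboration the paper omits: being a model of the completion only gives exactness, $\mathcal{C}(\omega)=\omega$, whereas the system semantics of Definition~\ref{definition - semantics of causal systems} demands $\mathcal{C}(\omega \cap \mathcal{E})=\omega$, and your topological induction is exactly the missing foundedness argument; your diagnosis that this is where acyclicity is used, and that it fails for $\omega_2$ in Example~\ref{example - counterexample Bochman}, matches the paper's intent.

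The gap is in the necessity half of the forward direction. Your claimed characterization of $\mathcal{C}(\omega \cap \mathcal{E})$ as the deductive closure of the least literal set closed under firing rules with entailed bodies is false for causal production inference relations, because (Or) in Definition~\ref{definition - basic prodoction inference relation} licenses reasoning by cases on variables the seed leaves undecided. Concretely, for $\Delta = \{x \Rightarrow c,\ \neg x \Rightarrow d,\ c \Rightarrow V,\ d \Rightarrow V\}$ one gets $x \Rrightarrow_{\Delta} V$ and $\neg x \Rrightarrow_{\Delta} V$ by (Strengthening) and (Cut), hence $\top \Rrightarrow_{\Delta} V$ by (Or), so $V \in \mathcal{C}_{\Delta}(\emptyset)$ even though naive forward chaining from the empty seed fires nothing. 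In the Bochman transformation the seed $\omega \cap \mathcal{E}$ leaves precisely the internal variables true in $\omega$ undecided, so such case splits are available, and showing they can never manufacture an unsupported literal inside a consistent fixed point amounts to re-proving Bochman's completion theorem by hand. The short repair — and the paper's actual route — avoids the forward-chaining lemma entirely: since the defaults give $\omega \cap \mathcal{E} \subseteq \mathcal{C}(\omega \cap \mathcal{E})$, a causal world satisfies $\omega = (\omega \cap \mathcal{E}) \cup \mathcal{C}(\omega \cap \mathcal{E})$; monotonicity yields $\omega \subseteq \mathcal{C}(\omega)$, while (Strengthening), (And) and (Cut) yield $\mathcal{C}(\omega) = \mathcal{C}\left((\omega \cap \mathcal{E}) \cup \mathcal{C}(\omega \cap \mathcal{E})\right) \subseteq \mathcal{C}(\omega \cap \mathcal{E}) = \omega$. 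Thus $\omega$ is an exact theory of $\Delta(\textbf{CS}(\mathcal{M}))$, and Theorem~\ref{theorem - completion of a causal theory} immediately gives $\omega \models V \leftrightarrow F_V$ for all $V$, i.e., a solution; this also subsumes your (correct) sufficiency half.
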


\begin{proof}
    This follows from Theorem \ref{theorem - completion of a causal theory}, as every possible world of~$\textbf{CS}(\mathcal{M})$ is a model of the completion of the explanatory closure $\Delta (\textbf{CS}(\mathcal{M}))$. 
\end{proof}

Applying Formalization~\ref{formalization - semantics of causal systems} and Theorem \ref{theorem - Bochman transformation for deterministic causal models 1},  
causal systems 
define the feasible solutions of structural causal models that align with Principles~\ref{principle - consistency with logical reasoning}, \mbox{\ref{principle - causal foundation},~\ref{principle - Aquinas}}, and Assumptions~\mbox{\ref{principle - Leibniz}} and~\mbox{\ref{principle - default negation}}. Since the Bochman transformation associates each causal model $\mathcal{M}$ with a causal system without observations, we conclude:
\begin{corollary}
    Acyclic structural causal models represent \mbox{knowledge-\emph{why}}. 
    \label{corollary - causal models represent knowledge-why}
\end{corollary}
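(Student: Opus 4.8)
The plan is to unwind Definition~\ref{definition - knowledge-why} for the Bochman transformation $\mathbf{CS}(\mathcal{M}) = (\Delta,\mathcal{E},\emptyset)$ and to reduce the corollary to one combinatorial fact about acyclic graphs. Because $\mathbf{CS}(\mathcal{M})$ carries no observations, $\mathcal{O}\cap\mathcal{E} = \emptyset = \mathcal{O}$, so the two requirements in Definition~\ref{definition - knowledge-why} coincide: $\mathbf{CS}(\mathcal{M})\stackrel{\textit{that}}{\models}\phi$ is literally the same statement as $(\Delta,\mathcal{E},\mathcal{O}\cap\mathcal{E})\stackrel{\textit{that}}{\models}\phi$. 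Hence knowledge-\emph{that} and knowledge-\emph{why} automatically agree, and the entire content of the corollary is that $\mathbf{CS}(\mathcal{M})$ \emph{provides demonstrations}, i.e.\ that it satisfies \emph{causal irrelevance} in the sense of Formalization~\ref{formalization - causal irrelevance - deterministic}.

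First I would record that the causal structure $\graph(\mathbf{CS}(\mathcal{M})) = \graph(\Delta)$ is acyclic. The rules of $\Delta$ are exactly $F_V \Rightarrow V$ for $V\in\mathbf{V}$, so an edge enters $V$ precisely from the variables occurring in $F_V$, namely $\Pa(V)\cup\Error(V)$. On the internal variables this reproduces $\graph(\mathcal{M})$, a directed acyclic graph by hypothesis, while the external variables in $\mathbf{U}$ never occur as an effect and hence are sources. Thus $\graph(\Delta)$ is acyclic, and in particular $\mathfrak{P}^{>S}\subseteq\mathbf{V}$ for every $S\subseteq\mathfrak{P}$, since a variable with no incoming edge is a strict descendant of nothing.

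The core step is to show, for an arbitrary $S\subseteq\mathfrak{P}$ and an arbitrary $\mathfrak{P}^{\leq S}$-structure $\omega^{\leq S}$, that $\mathbf{CS}^{>S,\omega} = (\Delta^{>S}, \mathcal{E}', \omega^{\leq S})$ with $\mathcal{E}' := \mathcal{E}\cup\{p,\neg p : p\in\mathfrak{P}^{\leq S}\}$ has at least one causal world. Here $\Delta^{>S}$ is exactly $\{F_V\Rightarrow V : V\in\mathfrak{P}^{>S}\}$, every proposition of $\mathfrak{P}^{\leq S}$ is a free external premise pinned by $\omega^{\leq S}$, and the completion of the explanatory closure reduces, after discarding the tautological default clauses, to the equivalences $\{\,V\leftrightarrow F_V : V\in\mathfrak{P}^{>S}\,\}$ over the fixed values $\omega^{\leq S}$. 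I would read $\mathbf{CS}^{>S,\omega}$ as the Bochman transformation of the acyclic sub-model of $\mathcal{M}$ that externalises the variables of $\mathfrak{P}^{\leq S}$ and keeps only the equations $V:=F_V$ for $V\in\mathfrak{P}^{>S}$, placed in the situation $\omega^{\leq S}$; its diagram is the subgraph of $\graph(\Delta)$ induced on $\mathfrak{P}^{>S}$, hence acyclic. Assigning $\mathfrak{P}^{\leq S}$ by $\omega^{\leq S}$ and then evaluating the remaining variables in a topological order --- each $V$ set to the value of $F_V$ computed from parents that are either fixed in $\mathfrak{P}^{\leq S}$ or precede $V$ --- yields a world $\omega$ solving every equivalence. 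By Theorem~\ref{theorem - completion of a causal theory} this $\omega$ is a model of the completion, and the acyclic direction of Theorem~\ref{theorem - Bochman transformation for deterministic causal models 1} upgrades it to a genuine causal world, i.e.\ secures $\mathcal{C}(\omega\cap\mathcal{E}') = \omega$ as required by Definition~\ref{definition - semantics of causal systems}. Since such an $\omega$ exists for every $S$ and every $\omega^{\leq S}$, causal irrelevance follows.

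I expect the only delicate point to be this last upgrade: one must confirm that the groundedness condition $\mathcal{C}(\omega\cap\mathcal{E}') = \omega$, rather than mere satisfaction of the completion, is truly guaranteed by acyclicity, and in particular that the self-loops created by the default clauses $l\Rightarrow l$ introduce no spurious cycles. These cause no trouble, since $\graph(\mathbf{CS})$ is read off from $\Delta$ alone and the defaults never feed back into $\mathfrak{P}^{>S}$; everything then follows from a routine topological-order induction, and the externalisation lets me invoke the acyclic converse already proved in Theorem~\ref{theorem - Bochman transformation for deterministic causal models 1} instead of re-establishing groundedness by hand.
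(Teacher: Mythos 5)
Your proposal is correct, and it is noticeably more explicit than the paper's own justification, which derives the corollary in a single stroke from Formalization~\ref{formalization - semantics of causal systems} and Theorem~\ref{theorem - Bochman transformation for deterministic causal models 1}: solutions of an acyclic model coincide with the causal worlds of its Bochman transformation, the transformation carries no observations, hence knowledge-\emph{that} is already knowledge-\emph{why}. The paper never explicitly checks the ``provides demonstrations'' clause of Definition~\ref{definition - knowledge-why}, i.e.\ \emph{causal irrelevance} as formalized in Formalization~\ref{formalization - causal irrelevance - deterministic}; you correctly identify that clause as the real content of the corollary (since $\mathcal{O}=\emptyset$ makes the two knowledge-\emph{that} conditions coincide) and verify it directly. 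Your key move --- recognizing each subsystem $\textbf{CS}^{>S,\omega}$ as, up to observations that lie entirely on external premises, the Bochman transformation of the acyclic sub-model externalizing $\mathfrak{P}^{\leq S}$, constructing a solution in the situation $\omega^{\leq S}$ by topological evaluation, and then invoking the acyclic converse of Theorem~\ref{theorem - Bochman transformation for deterministic causal models 1} to secure the groundedness condition $\mathcal{C}(\omega \cap \mathcal{E}') = \omega$ rather than mere satisfaction of the completion --- is exactly the right care to take, since completion models and grounded causal worlds diverge precisely in the cyclic case that motivates the paper. Your route buys a self-contained, checkable argument and a reusable reduction (every subsystem appearing in Formalization~\ref{formalization - causal irrelevance - deterministic} is again a Bochman transformation); the paper's route buys brevity at the cost of leaving the causal-irrelevance verification implicit. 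Two small points to record if you write this up: the rules $F_V \Rightarrow V$ must first be put in disjunctive normal form and split via (Or), as in the paper's remark, so that $\Delta$ is literal and $\Delta^{>S}$ is well defined; and the identification of external-premise sets uses both of your observations $\mathfrak{P}^{>S} \subseteq \textbf{V}$ and $\textbf{U} \subseteq \mathfrak{P}^{\leq S}$, which hold because external variables never occur as effects.
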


In particular, we argue that the Bochman transformation provides the correct extension of the theory of causality in~\cite{Causality} beyond the scope of acyclic models.


\subsection{External Interventions in Causal Systems}
\label{subsubsec: External Interventions in Causal Systems}

Recall that the key idea of modeling an external intervention $\textbf{i}$ is to minimally modify the causal description for a given situation so that $\textbf{i}$ is enforced as true. We propose the following approach to handling external interventions in causal systems, which also accounts for modifications to external premises.

\begin{definition}[Modified Causal Systems]
    Let $\textbf{CS} := (\Delta, \mathcal{E}, \mathcal{O})$ be a causal system, and let $\textbf{i}$ be a value assignment on a set of atoms $\textbf{I} \subseteq \mathfrak{P}$. To represent the intervention of forcing the atoms in $\textbf{I}$ to attain values according to the assignment $\textbf{i}$, we construct the \textbf{modified causal system}
    $$
    \textbf{CS}_{\textbf{i}} := (\Delta_{\textbf{i}}, \mathcal{E}_{\textbf{i}}, \mathcal{O}),
    $$
    which is obtained from $\textbf{CS}$ by applying the following modifications:
    \begin{itemize}
        \item Remove all rules $\phi \Rightarrow p \in \Delta$ and $\phi \Rightarrow \neg p \in \Delta$ for all $p \in \textbf{I}$.
        \item Remove external premises $p \in \mathcal{E}$ and $\neg p \in \mathcal{E}$ if ${p \in \textbf{I}}$.
        \item Add a rule $\top \Rightarrow l$ to $\Delta_{\textbf{i}}$ for all literals ${l \in \textbf{i}}$.
    \end{itemize} 
    \label{definition - modified causal systems}
\end{definition}

\begin{remark}
    According to Remark~\ref{remark - metaphysical justification}, the causal rules of the form $\top \Rightarrow l$ in the modified causal system of Definition~\ref{definition - modified causal systems} require additional justification. This hints at potential issues regarding the interpretation of external interventions, as discussed, for instance, in \cite{Dong_2023}.
\end{remark}

\begin{example}
    Recall the causal system $\textbf{CS} := (\Delta, \mathcal{E}, \emptyset)$ 
    from Example~\ref{example - Bochman transformation}.
    Suppose we switch the sprinkler off, as in Example~\ref{example - sprinkler deterministic intervention}, by intervening according to $\textbf{i} := \{ \neg \textit{sprinkler} \}$. This yields the modified system $\textbf{CS}_{\textbf{i}} := (\Delta_{\textbf{i}},  \mathcal{E}_{\textbf{i}}, \emptyset)$, where:  
    \begin{align*}
            & \Delta_{\textbf{i}} := \{\top \Rightarrow \neg \textit{sprinkler},~ \textit{cloudy} \Rightarrow \textit{rain},~\textit{sprinkler}\lor \textit{rain} \Rightarrow \textit{wet},~ \textit{wet} \Rightarrow \textit{slippery} \}, \\
            & \mathcal{E}_{\textbf{i}} = \{ \textit{cloudy},~ \neg \textit{cloudy},~\neg \textit{rain},~\neg \textit{wet},~\neg \textit{slippery} \}.
    \end{align*}
    Suppose Petrus intervenes and forces the weather to be sunny, i.e., he intervenes according to~${\textbf{i} := \{ \neg \textit{cloudy} \}}$. This yields:  
    \begin{align*}
        & \Delta_{\textbf{i}} := \{\top \Rightarrow \neg \textit{cloudy} \} \cup \Delta, 
        && \mathcal{E}_{\textbf{i}} := \{ \neg \textit{sprinkler},~\neg \textit{rain},~\neg \textit{wet},~\neg \textit{slippery} \}.
    \end{align*}
    \label{example - intervention in an abductive causal theory}
\end{example}

As expected, the concept of intervention, defined in Definition \ref{definition - modified causal systems}, behaves consistently with the Bochman transformation in Definition \ref{definition - Bochman transformation of abductive causal theories}.

\begin{proposition}
    For any structural causal model $\mathcal{M} := (\textbf{U}, \textbf{V}, \Error, \Pa, \textbf{F})$ and any truth value assignment~$\textbf{i}$ on the internal variables ${\textbf{I} \subseteq \textbf{V}}$, 
    the causal systems~${\textbf{CS}(\mathcal{M}_{\textbf{i}})}$ and $\textbf{CS} (\mathcal{M})_{\textbf{i}}$ have the same causal worlds.
    \label{proposition - deterministic Bochman transformation is well-defined}
\end{proposition}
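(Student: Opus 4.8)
The plan is to unfold both causal systems explicitly from Definitions~\ref{definition - Bochman transformation of abductive causal theories}, \ref{definition - modified causal model}, and~\ref{definition - modified causal systems}, and then to compare their causal world semantics directly, rather than routing through the solution correspondence of Theorem~\ref{theorem - Bochman transformation for deterministic causal models 1} (whose converse holds only for acyclic models, while the statement concerns arbitrary~$\mathcal{M}$). Writing $\textbf{I}^+ := \{ V \in \textbf{I} : \textbf{i}(V) = \textit{True} \}$ and $\textbf{I}^- := \{ V \in \textbf{I} : \textbf{i}(V) = \textit{False} \}$, one obtains
\[
\textbf{CS}(\mathcal{M})_{\textbf{i}} = (\Delta', \mathcal{E}', \emptyset), \quad \Delta' = \{ F_V \Rightarrow V \}_{V \in \textbf{V} \setminus \textbf{I}} \cup \{ \top \Rightarrow V \}_{V \in \textbf{I}^+} \cup \{ \top \Rightarrow \neg V \}_{V \in \textbf{I}^-},
\]
with $\mathcal{E}' = \textbf{U} \cup \{ \neg V \}_{V \in (\textbf{U} \cup \textbf{V}) \setminus \textbf{I}}$, whereas $\mathcal{M}_{\textbf{i}}$ replaces $F_V$ by the constant $\top$ (for $V \in \textbf{I}^+$) or $\bot$ (for $V \in \textbf{I}^-$), so that
\[
\textbf{CS}(\mathcal{M}_{\textbf{i}}) = (\Delta'', \mathcal{E}'', \emptyset), \quad \Delta'' = \{ F_V \Rightarrow V \}_{V \in \textbf{V} \setminus \textbf{I}} \cup \{ \top \Rightarrow V \}_{V \in \textbf{I}^+} \cup \{ \bot \Rightarrow V \}_{V \in \textbf{I}^-},
\]
with $\mathcal{E}'' = \textbf{U} \cup \{ \neg V \}_{V \in \textbf{U} \cup \textbf{V}}$. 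Thus the two systems share all rules and external premises concerning non-intervened variables and differ only in (a) the rule for each $V \in \textbf{I}^-$, namely $\top \Rightarrow \neg V$ in $\Delta'$ versus $\bot \Rightarrow V$ in $\Delta''$, and (b) the surplus negative external premises $\{ \neg V \}_{V \in \textbf{I}}$ present in $\mathcal{E}''$ but not in $\mathcal{E}'$. Both carry $\mathcal{O} = \emptyset$, so the condition $\omega \models \mathcal{O}$ is vacuous throughout.

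First I would show that every causal world $\omega$ of either system satisfies $\omega|_{\textbf{I}} = \textbf{i}$. Let $\mathcal{C}'$ and $\mathcal{C}''$ be the consequence operators of the explanatory closures $\Delta'(\textbf{CS})$ and $\Delta''(\textbf{CS})$. For $V \in \textbf{I}^+$ the rule $\top \Rightarrow V$ with (Strengthening) of Definition~\ref{definition - production inference relation} yields $V \in \mathcal{C}'(\omega \cap \mathcal{E}')$ and $V \in \mathcal{C}''(\omega \cap \mathcal{E}'')$ unconditionally, forcing $V \in \omega$; for $V \in \textbf{I}^-$ the rule $\top \Rightarrow \neg V$ forces $\neg V \in \omega$ in the first system, while in the second the rule $\bot \Rightarrow V$ is inert on the consistent premise set $\omega \cap \mathcal{E}''$ and no other rule has effect $V$, so $V \notin \mathcal{C}''(\omega \cap \mathcal{E}'')$ forces $V \notin \omega$. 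Hence any $\omega$ with $\omega|_{\textbf{I}} \neq \textbf{i}$ is a causal world of neither system, and for such $\omega$ the equivalence of causal-worldhood is trivial.

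It then remains to treat a world $\omega$ with $\omega|_{\textbf{I}} = \textbf{i}$, for which I would prove $\mathcal{C}'(\omega \cap \mathcal{E}') = \mathcal{C}''(\omega \cap \mathcal{E}'')$; the equivalence $\mathcal{C}'(\omega \cap \mathcal{E}') = \omega \Leftrightarrow \mathcal{C}''(\omega \cap \mathcal{E}'') = \omega$, and hence the coincidence of the two causal world semantics, follows at once. Setting $N := \{ \neg V : V \in \textbf{I}^- \}$, the hypothesis $\omega|_{\textbf{I}} = \textbf{i}$ gives $\omega \cap \mathcal{E}'' = (\omega \cap \mathcal{E}') \cup N$, since the literals $\neg V$, $V \in \textbf{I}^-$, are exactly the additional external premises contributed by $\mathcal{E}''$ and true in $\omega$. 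As $\top \Rightarrow \neg V \in \Delta'$ for $V \in \textbf{I}^-$, one has $N \subseteq \mathcal{C}'(\omega \cap \mathcal{E}')$, so the cumulativity of regular production inference relations --- obtained from (Cut), (And), and (Strengthening) in Definitions~\ref{definition - production inference relation} and~\ref{definition - regular production infernece relation} via a single application of (Cut) to $\phi \Rrightarrow \bigwedge N$ and $\phi \wedge \bigwedge N \Rrightarrow \rho$ --- yields $\mathcal{C}'(\omega \cap \mathcal{E}') = \mathcal{C}'\big( (\omega \cap \mathcal{E}') \cup N \big) = \mathcal{C}'(\omega \cap \mathcal{E}'')$. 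Finally I would compare $\Delta'(\textbf{CS})$ and $\Delta''(\textbf{CS})$ on the premise set $\omega \cap \mathcal{E}'' \supseteq N$: their differing rules have identical net effect there, since $\top \Rightarrow \neg V$ and the default $\neg V \Rightarrow \neg V$ (available in $\Delta''(\textbf{CS})$ because $\neg V \in \mathcal{E}''$) each merely place the already-present literal $\neg V$ in the closure, while $\bot \Rightarrow V$ and the surplus defaults $\neg V \Rightarrow \neg V$ for $V \in \textbf{I}^+$ stay inert on consistent premises; the remaining rules coincide, so both closures contain the same literals, giving $\mathcal{C}'(\omega \cap \mathcal{E}'') = \mathcal{C}''(\omega \cap \mathcal{E}'')$ and hence $\mathcal{C}'(\omega \cap \mathcal{E}') = \mathcal{C}''(\omega \cap \mathcal{E}'')$ as required.

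The main obstacle I anticipate is precisely this last bookkeeping with the consequence operators, made delicate by the fact that production inference relations deliberately lack reflexivity (Definition~\ref{definition - causal theory}): a literal being an external premise does not by itself place it in the closure, so one must track exactly which rule --- the explicit $\top \Rightarrow \neg V$ versus the default $\neg V \Rightarrow \neg V$ --- grounds each $\neg V$, $V \in \textbf{I}^-$, and verify the inertness of $\bot \Rightarrow V$ on consistent premise sets. A clean way to discharge this is to use the forward-chaining (least fixed point) description of $\mathcal{C}$ on literal determinate causal theories, which makes ``same active literals imply same closure'' transparent; as a sanity check, one may note that the completions of $\Delta'(\textbf{CS})$ and $\Delta''(\textbf{CS})$ in the sense of Definition~\ref{definition - completion of a determinate causal theory} admit exactly the same models, both forcing $V = \textbf{i}(V)$ on $\textbf{I}$ and agreeing on all non-intervened variables.
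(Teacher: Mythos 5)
Your proposal is correct, and it reaches the result by a genuinely different route than the paper. The paper's proof first reduces, without loss of generality, to an intervention on a single variable and then splits into two cases: for $\textbf{i} = \{p\}$ it observes that the only difference is the surplus external premise $\neg p$ in $\textbf{CS}(\mathcal{M}_{\textbf{i}})$, which is neutralized because both systems contain $\top \Rightarrow p$; for $\textbf{i} = \{\neg p\}$ it invokes Bochman's Theorem~4.23 to dismiss $\bot \Rightarrow p$ as inert and argues that, absent the premise $p$, the external premise $\neg p$ is interchangeable with the rule $\top \Rightarrow \neg p$. Your proof contains exactly these two equivalences as its kernel --- the surplus premises $\{\neg V\}_{V \in \textbf{I}^+}$ never enter $\omega \cap \mathcal{E}''$ once $\top \Rightarrow V$ has forced $V \in \omega$, and $\top \Rightarrow \neg V$ versus $\bot \Rightarrow V$ plus the default $\neg V \Rightarrow \neg V$ ground the same literals --- but you execute them uniformly for an arbitrary assignment $\textbf{i}$, first pinning down $\omega|_{\textbf{I}} = \textbf{i}$ for any causal world of either system, and then proving the identity $\mathcal{C}'(\omega \cap \mathcal{E}') = \mathcal{C}''(\omega \cap \mathcal{E}'')$ via cumulativity derived from (Cut) together with a forward-chaining comparison of the two explanatory closures. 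What this buys is self-containment and rigor at the two points the paper leaves implicit: the iteration argument hidden in the paper's ``without loss of generality'' (that a joint intervention factors into successive single ones commuting with both constructions) and the external appeal to Bochman's Theorem~4.23, both of which you replace by explicit derivations within the axioms of Definitions~2.2, 2.5 and 2.7; your observation that an inconsistent closure cannot equal the consistent world $\omega$ also correctly closes the loophole in the ``inertness'' of $\bot \Rightarrow V$. The paper's proof, by contrast, is considerably shorter. One caveat on your closing sanity check: the completion criterion of Theorem~2.3 characterizes Bochman's causal worlds ($\mathcal{C}(\omega) = \omega$), whereas Definition~2.12 imposes the premise-restricted condition $\mathcal{C}(\omega \cap \mathcal{E}) = \omega$, so that check would need a bridging step before it could replace your main argument --- but since you offer it only as corroboration, this does not affect the validity of the proof.
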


\begin{proof}
    We may, without loss of generality, assume that we intervene on only one variable, i.e., $\textbf{i} := \{ l \}$.

    \begin{case}
       Suppose we have $\textbf{i} = \{ p \}$ for some atom $p \in \mathfrak{P}$.
    \end{case}

    The causal systems $\textbf{CS}(\mathcal{M}_{\textbf{i}})$ and $\textbf{CS}(\mathcal{M})_{\textbf{i}}$ coincide, except that $\textbf{CS}(\mathcal{M}_{\textbf{i}})$ includes the external premise $\neg p$, whereas $\textbf{CS}(\mathcal{M})_{\textbf{i}}$ does not. However, since both systems contain the rule $\top \Rightarrow p$, the external premise $\neg p$ cannot be used to explain any world $\omega$ without leading to a contradiction $\bot$. We conclude that~$\textbf{CS}(\mathcal{M}_{\textbf{i}})$ and $\textbf{CS}(\mathcal{M})_{\textbf{i}}$ have the same causal worlds, as desired.

    \begin{case}
       Suppose we have $\textbf{i} = \{ \neg p \}$ for some atom $p \in \mathfrak{P}$.
    \end{case}

    The causal systems $\textbf{CS}(\mathcal{M}_{\textbf{i}})$ and $\textbf{CS}(\mathcal{M})_{\textbf{i}}$ differ in the following ways:
    \begin{itemize}
        \item The system $\textbf{CS}(\mathcal{M}_{\textbf{i}})$ includes the rule $\bot \Rightarrow p$ and the external premise~$\neg p$.
        \item The system $\textbf{CS}(\mathcal{M})_{\textbf{i}}$ includes the rule $\top \Rightarrow \neg p$ but  no external premise~$\neg p$.
    \end{itemize}
    
    According to Theorem 4.23 in \cite{Bochman}, the rule $\bot \Rightarrow p$ cannot be used to explain a causal world. Therefore, in the absence of an external premise~$p$, the external premise $\neg p$ is equivalent to stating the rule $\top \Rightarrow \neg p$.
\end{proof}

To judge whether a modified causal system~$\textbf{CS}_{\textbf{i}}$ correctly predicts the effect of an intervention~$\textbf{i}$, we rely on \emph{non-interference} in Principle~\ref{assumption - feasibility of external interventions}, which motivates the following definition:

\begin{definition}[Semantics of External Interventions]
    Let $\textbf{CS} := (\Delta, \mathcal{E}, \mathcal{O})$ be a causal system, and ${\textbf{CS}_{\textbf{i}} := (\Delta_{\textbf{i}}, \mathcal{E}_{\textbf{i}}, \mathcal{O})}$ as in Definition \ref{definition - modified causal systems}. The system~$\textbf{CS}$ \textbf{knows} that a formula $\phi$ is true \textbf{after intervening} according to $\textbf{i}$, written~${
    \textbf{CS} \stackrel{\Do (\textbf{i})}{\models} \phi,
    }$
    if and only if 
    $
    \textbf{CS}_{\textbf{i}}\stackrel{\textit{why}}{\models} \phi
    $ 
    and no atom $p \in \textbf{I}$ appears in an observation $o \in \mathcal{O}$.
    \label{definition - semantics of external interventions}
\end{definition}

\begin{remark}
    According to \cite{Causality}, the joint act of intervening and observing generally leads to \textit{counterfactual reasoning}, i.e., reasoning about alternative worlds, which lies beyond the scope of this work. 

\end{remark}

To summarize, we argue for the following result.

\begin{formalization}
    Consider an area of science for which Language~\ref{language - causal systems} yields a causal system~$\textbf{CS}$. \emph{Non-interference} in  Principle~\ref{assumption - feasibility of external interventions}, Definitions~\ref{definition - modified causal systems} and~\ref{definition - semantics of external interventions} correctly characterize the knowledge represented by~$\textbf{CS}$ concerning the effects of external interventions.  
    \label{formalization - external interventions}
\end{formalization}

\subsection{The Constraint and Explanatory Content of Causal Reasoning}
\label{sec: The Constraint and Explanatory Content of Causal Reasoning}

In Section \ref{sec: Knowledge under Uncertainty}, we extend the framework of causal systems to incorporate degrees of belief, represented by probabilities. As a prerequisite for this extension, we reformulate their semantics. Following \cite{Bochman}, we observe that causal theories can be separated into constraint and explanatory components:

\begin{definition}[Constraint and Explanatory Content]
   The \textbf{constraint content} of a causal rule~${R := (\phi \Rightarrow \psi)}$ is the corresponding implication 
   $$
   \constraint(R) := \constraint(\phi \Rightarrow \psi) := (\phi \rightarrow \psi).
   $$
   For a causal theory $\Delta$, the \textbf{constraint content} is defined to be
   $$
   {\constraint(\Delta) := \{ \constraint(R) \text{ : } R \in \Delta \}}
   .$$ 
   The \textbf{explanatory content} of $\Delta$ for a world $\omega$ is the causal theory 
   $$
   \Delta \vert_\omega := \{ R \in \Delta \text{ : } \omega \models \constraint (R) \}.
   $$ 
   Let $\textbf{CS} := (\Delta, \mathcal{E}, \mathcal{O})$ be a causal system.
    The \textbf{constraint content} is defined as ${
    \constraint(\textbf{CS}) := \constraint(\Delta)
    }$,
    the \textbf{explanatory content} is defined by
    ${
    \textbf{CS} \vert_{\omega} := (\Delta \vert_{\omega}, \mathcal{E}, \emptyset)
    }$,
    and $\mathcal{C} \vert_{\omega}$ denotes the corresponding consequence operator.
 
    Let $\omega$ be a world. A formula $\phi$ is \textbf{explainable} in $\omega$, written ${\omega \models \explains (\phi)}$, if~${
    \phi \in \mathcal{C}\vert_{\omega} (\omega \cap \mathcal{E})}$ or  $\neg \phi \in \mathcal{C}\vert_{\omega} (\omega \cap \mathcal{E})
    $. The world $\omega$ satisfies \textbf{(natural) necessity} with respect to $\textbf{CS}$ if $\omega \models \constraint(\textbf{CS})$. It is \textbf{explainable} with respect to~$\textbf{CS}$ if all formulas $\phi \in \omega$ are explainable, i.e., ${\omega \models \explains (\phi)}$ for all formulas $\phi$, or equivalently, ${\omega \models \explains (l)}$ for all literals $l$. 
    
    The event that $\textbf{CS}$ satisfies \textbf{(natural) necessity} is the set
    $$
    \necessary (\textbf{CS}) := \{ \omega \text{ world: } \omega \models \constraint (\textbf{CS}) \}.
    $$
    The event that $\textbf{CS}$ is \textbf{(causally) sufficient} is the set of all explainable worlds, 
    $$
    \sufficient (\textbf{CS}) := \{ \omega \text{ world: } \omega \models \explains (l) \text{ for all literals } l \}.
    $$
    \label{definition - logical and explanatory content}
    \label{definition - causal decidability}
\end{definition}

Recall the following result from Chapter 3 in \cite{Bochman}.

\begin{lemma}
    Stating a causal rule $\phi \Rightarrow \psi$ in a causal theory $\Delta$ is equivalent to stating the \textbf{constraint} $\phi \land \neg \psi \Rightarrow \bot$ and the \textbf{explanatory rule} $ \phi \land \psi \Rightarrow \psi$.~$\square$
    \label{lemma - splitting a causal rule into a constraint and an explanatory component}
\end{lemma}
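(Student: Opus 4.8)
The plan is to read ``equivalent to stating'' as a claim about the generated relation. Writing $\Delta'$ for the theory obtained from $\Delta$ by replacing the single rule $\phi \Rightarrow \psi$ with the pair $\phi \land \neg\psi \Rightarrow \bot$ and $\phi \land \psi \Rightarrow \psi$, the assertion becomes $\mathord{\Rrightarrow_{\Delta}} = \mathord{\Rrightarrow_{\Delta'}}$. Since $\Delta$ and $\Delta'$ share all their remaining rules, and since by Definition~\ref{definition - causal theory} each of $\Rrightarrow_{\Delta}$ and $\Rrightarrow_{\Delta'}$ is the \emph{least} causal production inference relation containing its generators, it suffices to show that every generating rule of one theory is derivable in the relation generated by the other. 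I would therefore carry out two mutual derivations using only the closure rules (Strengthening), (Weakening), (And), (Truth and Falsity), (Cut), and (Or) from Definitions~\ref{definition - production inference relation}, \ref{definition - regular production infernece relation}, and \ref{definition - basic prodoction inference relation}.

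For the first direction, assume $\phi \Rrightarrow \psi$ and derive both members of the pair. The explanatory rule is immediate: since $\phi \land \psi \vdash \phi$, (Strengthening) applied to $\phi \Rrightarrow \psi$ gives $\phi \land \psi \Rrightarrow \psi$. For the constraint, first use (Strengthening) on $\phi \land \neg\psi \vdash \phi$ to obtain $\phi \land \neg\psi \Rrightarrow \psi$; because $\phi \land \neg\psi \land \psi$ is inconsistent it entails $\bot$, so (Truth and Falsity), which gives $\bot \Rrightarrow \bot$, together with (Strengthening) yields $\phi \land \neg\psi \land \psi \Rrightarrow \bot$; finally (Cut) applied to $\phi \land \neg\psi \Rrightarrow \psi$ and $\phi \land \neg\psi \land \psi \Rrightarrow \bot$ produces $\phi \land \neg\psi \Rrightarrow \bot$.

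For the second direction, assume the pair and derive $\phi \Rrightarrow \psi$. From the constraint $\phi \land \neg\psi \Rrightarrow \bot$, since $\bot \vdash \psi$, (Weakening) gives $\phi \land \neg\psi \Rrightarrow \psi$. Combining this with the explanatory rule $\phi \land \psi \Rrightarrow \psi$ via (Or) yields $(\phi \land \psi) \lor (\phi \land \neg\psi) \Rrightarrow \psi$. As $\phi$ is classically equivalent to $(\phi \land \psi) \lor (\phi \land \neg\psi)$, so in particular $\phi \vdash (\phi \land \psi) \lor (\phi \land \neg\psi)$, one last application of (Strengthening) delivers $\phi \Rrightarrow \psi$. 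Having derived each theory's generators inside the other's relation, the minimality of both relations forces $\mathord{\Rrightarrow_{\Delta}} = \mathord{\Rrightarrow_{\Delta'}}$, which is the claimed equivalence.

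The point I expect to be the crux — and which I would emphasise rather than any calculation — is that the two directions rely on axioms strictly beyond the four basic production rules: (Cut) in the first and (Or) in the second. The split into constraint and explanatory content is therefore valid precisely because $\Rrightarrow_{\Delta}$ is required to be both regular and basic, i.e. \emph{causal}, in the sense of Definition~\ref{definition - causal production inference relation}; for an arbitrary production inference relation it would fail. The only routine points to double-check are that the inconsistency $\phi \land \neg\psi \land \psi \vdash \bot$ and the tautology $\phi \leftrightarrow (\phi \land \psi) \lor (\phi \land \neg\psi)$ are genuine classical facts, so that (Strengthening) and (Weakening), which are stated in terms of $\vdash$, legitimately apply.
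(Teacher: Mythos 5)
Your proposal is correct, and it is worth noting that the paper itself contains no proof of this lemma at all: it is recalled from Chapter~3 of Bochman's work and marked~$\square$, so you have supplied a self-contained verification where the paper merely cites. Your formalization of ``stating\dots is equivalent to stating'' as equality of the generated relations $\mathord{\Rrightarrow_{\Delta}} = \mathord{\Rrightarrow_{\Delta'}}$, reduced by minimality to mutual derivability of the generators, is the right reading given Definition~\ref{definition - causal theory}, which defines $\Rrightarrow_{\Delta}$ as the \emph{smallest} causal production inference relation containing $\Delta$; this also legitimizes your use of all six closure rules. Checking the individual steps: (Strengthening) on $\phi\land\psi \vdash \phi$ gives the explanatory rule; (Strengthening) on $\phi\land\neg\psi \vdash \phi$, then (Truth and Falsity) plus (Strengthening) via $\phi\land\neg\psi\land\psi \vdash \bot$, then (Cut) with $\alpha = \phi\land\neg\psi$, $\beta = \psi$ gives the constraint $\phi\land\neg\psi \Rrightarrow \bot$; conversely (Weakening) via $\bot \vdash \psi$, then (Or), then (Strengthening) via the classical tautology $\phi \vdash (\phi\land\psi)\lor(\phi\land\neg\psi)$ recovers $\phi \Rrightarrow \psi$. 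All of these instantiate the axioms of Definitions~\ref{definition - production inference relation}, \ref{definition - regular production infernece relation}, and~\ref{definition - basic prodoction inference relation} exactly as stated. Your emphasis on the crux is also apt and is the genuinely informative content beyond the citation: the decomposition needs (Cut) in one direction and (Or) in the other, so it is a feature of \emph{causal} (regular and basic) production inference relations in the sense of Definition~\ref{definition - causal production inference relation} and would fail for a bare production inference relation --- which is consistent with the lemma being stated only after the paper has committed, via Formalization~\ref{formalization - causal sufficiency}, to explainability being causal.
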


    


We now give the desired reformulation of the semantics of causal systems.

\begin{proposition}
    Let $\textbf{CS} := (\Delta, \mathcal{E}, \mathcal{O})$ be a causal system.  
    A world $\omega$ is a causal world of $\textbf{CS}$ if and only if 
    $
    \omega \in \necessary (\textbf{CS}) \cap \sufficient (\textbf{CS}) \cap \mathcal{O},
    $
    where the observations $\mathcal{O}$ are identified with the set of all worlds $\omega \models \mathcal{O}$.
    \label{proposition - splitting in logical and explanatory content under causal sufficiency}
\end{proposition}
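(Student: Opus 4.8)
The plan is to split off the observation component and reduce to an equivalence about the fixed‑point condition. By Definition~\ref{definition - semantics of causal systems}, $\omega$ is a causal world of $\textbf{CS}$ iff $\mathcal{C}(\omega\cap\mathcal{E})=\omega$ and $\omega\models\mathcal{O}$; under the stated identification $\omega\models\mathcal{O}$ is literally $\omega\in\mathcal{O}$, so this factor carries through unchanged and all the work lies in proving $\mathcal{C}(\omega\cap\mathcal{E})=\omega \iff \omega\in\necessary(\textbf{CS})\cap\sufficient(\textbf{CS})$. The pivotal observation I would record first is that, since $\constraint(\textbf{CS})=\constraint(\Delta)$, membership $\omega\in\necessary(\textbf{CS})$ holds exactly when $\Delta\vert_\omega=\Delta$; as the defaults $l\Rightarrow l$ adjoined in the explanatory closure have tautological constraint content, this yields $\mathcal{C}\vert_\omega=\mathcal{C}$ whenever $\omega$ is necessary. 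Thus the explainability relation defining $\sufficient$ collapses to a statement about $\mathcal{C}$ itself as soon as necessity is available, which is what makes the two sides of the equivalence match. The splitting of each rule $\phi\Rightarrow\psi$ into its constraint $\phi\land\neg\psi\Rightarrow\bot$ and explanatory part $\phi\land\psi\Rightarrow\psi$ (Lemma~\ref{lemma - splitting a causal rule into a constraint and an explanatory component}) is the conceptual bookkeeping device here: necessity says the constraints never fire, sufficiency says the explanatory parts explain every literal.

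For the forward direction I would assume $\mathcal{C}(\omega\cap\mathcal{E})=\omega$ and establish necessity by contradiction. If some $R=(\phi\Rightarrow\psi)\in\Delta$ had $\omega\models\phi\land\neg\psi$, then $\phi\in\omega=\mathcal{C}(\omega\cap\mathcal{E})$ gives $(\omega\cap\mathcal{E})\Rrightarrow\phi$, witnessed by a finite conjunction $e$ of members of $\omega\cap\mathcal{E}$. Combining $e\Rrightarrow\phi$ with $\phi\Rrightarrow_{\Delta(\textbf{CS})}\psi$ via (Strengthening) applied to $e\land\phi\vdash\phi$ (yielding $e\land\phi\Rrightarrow\psi$) and then (Cut) yields $e\Rrightarrow\psi$, so $\psi\in\mathcal{C}(\omega\cap\mathcal{E})=\omega$, contradicting consistency of the world $\omega$. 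Hence $\omega\in\necessary$, whereupon $\mathcal{C}\vert_\omega=\mathcal{C}$, and sufficiency is immediate: for each literal, whichever of $l,\neg l$ lies in $\omega=\mathcal{C}(\omega\cap\mathcal{E})=\mathcal{C}\vert_\omega(\omega\cap\mathcal{E})$ already witnesses $\omega\models\explains(l)$.

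The backward direction carries the real content. Assuming $\omega\in\necessary\cap\sufficient$, the crux is the inclusion $\mathcal{C}(\omega\cap\mathcal{E})\subseteq\omega$, which I would derive from a soundness lemma: if $\omega\models\constraint(R)$ for every rule of $\Delta(\textbf{CS})$ (true for $\Delta$ by necessity and vacuously for the defaults), then $\Phi'\Rrightarrow\chi$ with $\Phi'\subseteq\omega$ forces $\omega\models\chi$. This is proved by induction on the derivation, verifying truth‑preservation in $\omega$ for each of (Strengthening), (Weakening), (And), (Truth and Falsity), (Cut), (Or) and for each base rule $\phi\Rightarrow\psi\in\Delta(\textbf{CS})$; the only substantive base case is a rule firing, which is harmless precisely because $\omega\models\phi\rightarrow\psi$. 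Given this inclusion, the reverse inclusion $\omega\subseteq\mathcal{C}(\omega\cap\mathcal{E})$ follows from sufficiency: for a literal $l\in\omega$, sufficiency (with $\mathcal{C}\vert_\omega=\mathcal{C}$) places $l$ or $\neg l$ in $\mathcal{C}(\omega\cap\mathcal{E})$, and the already-proved inclusion excludes $\neg l$ (it would force $\neg l\in\omega$, contradicting $l\in\omega$), so $l\in\mathcal{C}(\omega\cap\mathcal{E})$; closure of $\mathcal{C}(\omega\cap\mathcal{E})$ under (And) and (Weakening) then lifts this from the literals of $\omega$ to every formula of $\omega$. The two inclusions combine to $\mathcal{C}(\omega\cap\mathcal{E})=\omega$, completing the equivalence.

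I expect the main obstacle to be precisely this soundness induction: one must check truth‑preservation for all six production rules together with the base rules, and handle the finitary character of $\Rrightarrow$ by passing to the finite conjunction $e$ of premises drawn from $\omega\cap\mathcal{E}$, while keeping straight that necessity is exactly what neutralizes the constraint halves of the rules and sufficiency exactly what activates their explanatory halves. Once that lemma is in place, the remaining arguments are routine manipulations with the definitions and the closure properties of $\mathcal{C}$.
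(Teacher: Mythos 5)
Your proof is correct and follows essentially the same route as the paper's: the forward direction rules out a violated rule $\phi \Rightarrow \psi$ via (Strengthening) and (Cut) exactly as the paper does through the constraint $\phi \land \neg\psi \Rightarrow \bot$ of Lemma~\ref{lemma - splitting a causal rule into a constraint and an explanatory component}, and the backward direction reduces, as in the paper, to establishing $\mathcal{C}\vert_\omega(\omega \cap \mathcal{E}) = \omega$ and then using necessity to conclude $\Delta\vert_\omega = \Delta$ and $\mathcal{C}\vert_\omega = \mathcal{C}$. The only difference is explicitness: the paper asserts $\mathcal{C}\vert_\omega(\omega \cap \mathcal{E}) = \omega$ directly from sufficiency, leaving implicit the truth-preservation induction over the production rules (and the lifting from literals to all formulas of $\omega$) that you spell out in your soundness lemma --- a fleshing-out of the paper's argument, not a divergence from it.
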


\begin{proof}
Assume that $\omega$ is a causal world of $\textbf{CS}$. According to Definition \ref{definition - semantics of causal systems}, it follows that $\omega = \mathcal{C} (\omega \cap \mathcal{E})$ and $\omega \models \mathcal{O}$, i.e., $\omega \in \mathcal{O}$. 

Suppose there is a causal rule ${R := (\phi \Rightarrow \psi) \in \Delta}$ such that $\omega \not \models \constraint (R)$, i.e., $\omega \models \phi \land \neg \psi$. According to Lemma \ref{lemma - splitting a causal rule into a constraint and an explanatory component}, we may, without loss of generality, assume that $\phi \land \neg \psi \Rightarrow \bot \in \Delta$. 

Since ${\omega = \mathcal{C} (\omega \cap \mathcal{E})}$, it follows that ${(\omega \cap \mathcal{E}) \Rrightarrow_{\Delta} \phi \land \neg \psi}$. Next, applying~(Cut) in Definition \ref{definition - regular production infernece relation} yields ${(\mathcal{E} \cap \omega) \Rrightarrow_{\Delta} \bot}$ and ${\bot \in \mathcal{C} (\omega \cap \mathcal{E})}$, which contradicts the fact~${\omega = \mathcal{C} (\omega \cap \mathcal{E})}$. Hence, ${\omega \models \constraint(\textbf{CS})}$ and~${\omega \in \necessary(\textbf{CS})}$. 

Since ${\Delta = \Delta \vert_{\omega}}$ and~${\mathcal{C} = \mathcal{C} \vert_{\omega}}$, $\omega$ is explainable with $\textbf{CS}$, i.e.,~${\omega \in \sufficient (\textbf{CS})}$.

Conversely, assume that ${\omega \in \necessary (\textbf{CS}) \cap \sufficient (\textbf{CS}) \cap \mathcal{O}}$. It follows that ${\omega \models \constraint (\textbf{CS})}$, $\omega \models \mathcal{O}$, and ${\mathcal{C}\vert_{\omega} (\omega \cap \mathcal{E}) = \omega}$. Thus, $\Delta \vert_{\omega} = \Delta$ and~${\mathcal{C} = \mathcal{C}\vert_{\omega}}$, concluding that $\omega$ is a causal world.
\end{proof}

Let $\textbf{CS} := (\Delta, \mathcal{E}, \mathcal{O})$ be a causal system. Since the axioms of a causal production inference relation in Definitions \ref{definition - production inference relation}, \ref{definition - regular production infernece relation}, and \ref{definition - basic prodoction inference relation} capture all properties of implication except reflexivity, i.e., $\phi \rightarrow \phi$, we argue for the following result:

\begin{formalization}[Natural Necessity]
    Within a world $\omega$, explainability, as represented by a causal system ${\textbf{CS} := (\Delta, \mathcal{E}, \mathcal{O})}$, satisfies \emph{natural necessity} in Principle~\ref{principle - Aquinas} if and only if ${\omega \models \constraint (\textbf{CS})}$. 

    Thus, the set $\necessary (\textbf{CS})$ consists of all worlds in which \emph{natural necessity} in Principle~\ref{principle - Aquinas} holds.
    \label{formalization - natural necessity}
\end{formalization}

Upon abandoning \textit{natural necessity}, \textit{sufficient causation} in Assumption~\ref{principle - Leibniz} ensures that every world is explainable. We argue for the following result:

\begin{formalization}[Sufficient Causation]
    Within a world $\omega$, explainability, as represented by a causal system ${\textbf{CS} := (\Delta, \mathcal{E}, \mathcal{O})}$, satisfies \emph{sufficient causation} as stated in Assumption~\ref{principle - Leibniz} if and only if ${\omega \models \explains (l)}$ for all literals $l$. 
    
    Thus, the set ${\sufficient (\textbf{CS})}$ consists of all worlds in which \emph{sufficient causation} in Assumption~\ref{principle - Leibniz} holds.
    \label{formalization - sufficient causation}   
\end{formalization}

\section{Knowledge-\emph{Why} under Uncertainty}
\label{sec: Knowledge under Uncertainty}

Since knowledge about the real world typically involves uncertainty, the next goal is to extend areas of science, as represented by causal systems in Section~\ref{sec: Knowledge in Deterministic Systems}, by incorporating degrees of belief, specifically probabilities.

\subsection{Preliminaries}
\label{subsec: Preliminaries for Uncertainty}

As in Section~\ref{sec: Knowledge in Deterministic Systems}, the prerequisites for this endeavor are gathered first.


\subsubsection{Probability Theory and the Principle of Maximum Entropy}
\label{subsubsec: Probability Theory}
\label{subsubsec: The Principle of Maximum Entropy: Deduction under Uncertainty}

This work restricts attention to finite probability spaces and reasons about events using the basic terminology of random variables, conditional probabilities, and independence. An introduction to this material can be found in, for example, Chapter 5 of \cite{Michelucci24}.

Fix a sample space $\Omega$. Informally, the Bayesian viewpoint is adopted, according to which the probability~${\pi(A) \in [0,1]}$ of an event $A \subseteq \Omega$ represents a rational agent's degree of belief in the truth of~$A$. Upon observing an event~$B$, the agent is assumed to update his beliefs by forming the conditional probability
\[
     \pi (A \mid B) := 
     \begin{cases}
     \dfrac{\pi (A \wedge B)}{\pi (B)} & \text{if } \pi (B) \neq 0, \\
     0, & \text{otherwise}
     \end{cases}.
\]

Furthermore, a rational agent extends his beliefs as follows: 

\begin{principle}[Maximum Entropy]
Given probabilities $\pi(A_i) \in [0,1]$ of the events~$A_i \subseteq \Omega$,~${1 \leq i \leq n}$, ${n \in \mathbb{N}_{\geq 1}}$, a rational agent assumes the distribution~$\pi$ on~$\Omega$ that results from maximizing the \textbf{entropy}
$
\displaystyle
H (\pi) := \sum_{\omega \in \Omega} (- \ln (\pi (\omega))) \cdot \pi (\omega)
$
under the constraint that~$A_i$ occurs with probability $\pi (A_i)$.
\label{principle - maximum entropy}
\end{principle}


As shown by \mbox{\cite{Shannon}}, the entropy~$H(\pi)$ quantifies the amount of missing information in a distribution~$\pi$. Given only knowledge about the probabilities~${\pi(A_1), \ldots, \pi(A_n)}$, the principle of \emph{maximum entropy} prescribes selecting the distribution~$\pi$ on~$\Omega$ that is consistent with this information while maximizing entropy, thereby reflecting maximal uncertainty beyond the known constraints.


Assume the random variables in $\mathfrak{P} := \{ p_1,...,p_m \}$ to be Boolaen, i.e, they yield a propositional alphabet. A world~$\omega$ is identified with a value assignment on $\mathfrak{P}$ and a formula $\phi$ with the event given by the worlds $\omega$ with~${\omega \models \phi}$. 

Given the probabilities $\pi (\phi_i) \in (0,1)$ of the formulas $\phi_i$. In general, maximizing the entropy does not yield a distribution that can be easily described using the probabilities~\mbox{$\pi (\phi_i)$, $1 \leq i \leq n$}. As the distribution $\pi$ is essentially determined by one number for every formula $\phi_i$, one aims for a parameterization of~$\pi$ that is also given by one number $w_i \in \mathbb{R}$ for every formula~\mbox{$\phi_i$, $1 \leq i \leq n$}. 

\begin{parametrization}[\cite{Berger}]
Let ${\phi_1, \ldots, \phi_n}$ be propositional formulas. One finds $n$ \textbf{degrees of certainty}, i.e., real numbers~${w_i \in \mathbb{R}}$,~${1\leq i \leq n}$, such that the probability~$\pi(\omega)$ of any world~$\omega$ is given by:
\[
     \pi (\omega) = \left( \prod_{\omega \models \phi_i} \exp(w_i) \right) \cdot \left( \sum_{\omega'~\text{world}} ~~ \prod_{\omega' \models \phi_i} \exp(w_i) \right)^{-1}
\]
\label{parametrization - maximum entropy model}
\end{parametrization}

A LogLinear model of~\cite{MLN} formalizes a set of formulas with degrees of certainty in the sense of Parametrization~\ref{parametrization - maximum entropy model}. 

\begin{definition}[LogLinear Models]
    A \textbf{LogLinear model} is a finite set~$\Phi$ consisting of \textbf{weighted constraints} $(w, \phi)$, where~${w \in \mathbb{R} \cup \{ + \infty, - \infty \}}$ is a weight and $\phi$ is a formula.
    \label{definition - LogLinear model}
\end{definition}

\begin{example}
    The situation in Example~\ref{example - introduction running example} may lead to the LogLinear model:
    $\Phi := \{ (\ln (2),  \textit{cloudy} \rightarrow \textit{rain} ), (\ln (3), \neg \textit{cloudy} \rightarrow \textit{sprinkler}), (+\infty, \textit{wet} \leftrightarrow \textit{rain})  \}$
    \label{example - LogLinear model}
\end{example}

Parametrization \ref{parametrization - maximum entropy model} then yields the following semantics for LogLinear models.

\begin{definition}[Semantics of LogLinear Models]
    Given a LogLinear model $\Phi$, a \textbf{possible world} $\omega$ is a world that models each \textbf{hard constraint} $(\pm \infty,\phi) \in \Phi$, i.e.,~${\omega \models \phi}$ whenever~${(+ \infty, \phi) \in \Phi}$ and ${\omega \models \neg \phi}$ whenever~${(- \infty, \phi) \in \Phi}$. Every possible world~$\omega$ is then associated with the \textbf{weight}
    $$
    w_{\Phi} (\omega) := w (\omega)  := \prod_{\substack{(w, \phi) \in \Phi \\ w \not \in \{ \pm \infty \} \\ \omega \models \phi}} \exp(w) 
    $$
    Set $w (\omega) = 0$ if $\omega$ is not a possible world and  define the \textbf{weight} of a formula $\phi$ to be 
    $
    \displaystyle
    w (\phi) := \sum_{\substack{\omega \text{ world} \\ \omega \models \phi}} w(\omega).
    $
    
    Finally, interpret weights as degrees of certainty and assign to each world or formula the \textbf{probability}
    $
    \pi_{\Phi} (\cdot) := \pi (\cdot) := \dfrac{w(\cdot)}{w(\top)}.
    $
\label{definition - semantics LogLinear model}
\end{definition}

\begin{remark}
    Let $\Phi$ be a LogLinear model. Upon committing to Parametrization~\ref{parametrization - maximum entropy model}, the weighted constraints ${(w, \phi) \in \Phi}$, where $w \in \mathbb{R}$, lack an intuitive interpretation. Only hard constraints $(\pm \infty, \phi) \in \Phi$ enforce that the formula $\phi$ or $\neg \phi$ necessarily holds.  
\end{remark}

\begin{example}
    In the setting of Example~\ref{example - LogLinear model}, one finds that $\pi(\textit{rain} \mid \textit{cloudy}) = \frac{2}{3}$ and ${\pi(\textit{sprinkler} \mid \neg \textit{cloudy}) = \frac{3}{4}}$. Furthermore, it follows that the road is slippery if and only if it is wet.
    \label{example - semantics LogLinear model}
\end{example}

\subsubsection{Bayesian Networks: Causal Relations and Independence}
\label{subsubsec: Causal Bayesian Networks and the Principle of Causal Irrelevance}

Recall from \cite{Causality} how causal relations give rise to conditional independencies:
Identify a \textbf{causal structure} on a set of random variables $\textbf{V}$ with a directed acyclic graph $G$, i.e.~a partial order, on $\textbf{V}$. The intuition is that~${V_1 \in \textbf{V}}$ is a \textbf{cause} of $V_2 \in \textbf{V}$ if there is a directed path from $V_1$ to $V_2$ in $G$. In this case,~$V_2$ is also said to be an \textbf{effect} of $V_1$. Furthermore,~$V_1$ is a \textbf{direct cause} of~$V_2$ if the edge $V_1 \rightarrow V_2$ exists in $G$, i.e. if and only if the node~${V_1 \in \Pa(V_2)}$ lies in the set~$\Pa(V_2)$ of \textbf{direct causes} or \textbf{parents} of~$V_2$. 

\begin{example}
Example~\ref{example - introduction running example} gives rise to the causal structure Graph (\ref{equation - causal structure}) on the Boolean random variables~$
    {\mathfrak{P} := \{ \textit{cloudy},~ \textit{rain},~ \textit{sprinkler},~ \textit{wet},~ \textit{slippery} \}}.
$

Observe that
$\textit{cloudy}$ is a cause of $\textit{slippery}$ but not a direct cause; $\textit{wet}$ is a direct cause of $\textit{slippery}$; there is no causal relationship between $\textit{sprinkler}$ and~$\textit{rain}$.
\label{example - causal structure}
\end{example}

A joint distribution $\pi$ on the random variables $\textbf{V}$ is consistent with a causal structure~$G$ if the influence of any cause $V_1$ on an effect $V_2$ is moderated by the direct causes of $V_2$. \cite{Causality} captures this intuition in the Markov condition:

\begin{definition}[Markov Condition]
A joint distribution~$\pi$ over a set of random variables~$\mathbf{V}$ satisfies the \textbf{Markov condition} with respect to a causal structure~$G$ if every random variable~$V \in \mathbf{V}$ is conditionally independent of all non-effects~${W \not \in \Pa(V)}$ that are not direct causes of~$V$, given its direct causes~$\Pa(V)$.

In this case, the distribution~$\pi$ is said to be \textbf{Markov} to~$G$, written~$\pi \models G$.
\label{definition - Markov condition}
\end{definition}

\begin{example}
In Example \ref{example - causal structure} the Markov condition states for instance that the influence of $\textit{cloudy}$ on the event $\textit{slippery}$ is completely moderated by the event $\textit{wet}$. Once it is known that the pavement of the road is wet, it is expected to be slippery regardless of the event that caused the road to be wet. 
\end{example}

If a distribution $\pi \models G$ satisfies the Markov condition with respect to a given causal structure $G$, it is represented by a Bayesian network on $G$ and vice versa~\mbox{\cite[§1.2.3]{Causality}}:

\begin{definition}[Bayesian Network]
Let $\textbf{V}$ be a finite set of random variables. A \textbf{Bayesian network} $\textbf{BN} := (G, \pi(\cdot \vert \pa(\cdot)))$ on~$\textbf{V}$ consists of a causal structure~$G$ and the probabilities $\pi (v \vert \pa (V)) \in [0,1]$ of the possible values $v$ of the random variables $V \in \textbf{V}$ conditioned on value assignments $\pa(V)$ of their direct causes $\Pa(V)$. 

By applying the chain rule of probability calculus and the Markov condition in Definition \ref{definition - Markov condition}, the Bayesian network $\textbf{BN}$ assigns to  a  value assignment~$\textbf{v}$ on~$\textbf{V}$ the probability:
\begin{align}
    &\pi_{\textbf{BN}} (\textbf{v}) := \pi \left( \textbf{v} \right) :=
\prod_{i=1}^k \pi \left( \textbf{v}(V_i) \vert \pa(V_i) \right), &&\pa(V_i) := \textbf{v} \vert_{\Pa(p_i)},~1 \leq i \leq k
\label{equation - induced distribution of a BN}
\end{align}
\label{definition - Bayesian network}
\end{definition}

\begin{example}
    The causal structure $G$ in Graph (\ref{equation - causal structure}), together with Parameters~(\ref{equation - parameters of a BN}) in Example \ref{example - introduction - sprinkler}, gives rise to a Bayesian network~${\textbf{BN} := (G, \pi(\cdot \vert \pa (\cdot)))}$.
    
    One obtains 
    $
        \pi_{\textbf{BN}} (\textit{cloudy}, \textit{rain}, \textit{sprinkler}, \textit{wet}, \textit{slippery}) = 
        0.5 \cdot 0.6 \cdot 0.1  \cdot 0.9 \cdot 0.8.
    $
    \label{example - Bayesian network}
\end{example}



Fix a Bayesian network~${\textbf{BN} := (G, \pi (\cdot \mid \pa (\cdot)))}$ over a set of random variables~$\textbf{V}$.
\cite{Williamson2001} observes that \emph{maximizing entropy} in Principle~\ref{principle - maximum entropy}, when used to extend the local conditional distributions~$\pi (\cdot \mid \pa (\cdot))$ to a global joint distribution~$\pi'_{\textbf{BN}}$ on~$\textbf{V}$, generally yields a distribution that differs from the Bayesian network semantics~$\pi_{\textbf{BN}}$ in Equation \eqref{equation - induced distribution of a BN}. 

The key insight he provides is that adding a new variable~$W$ to~$\textbf{BN}$ that is not a cause of any other variable leaves the marginal distribution on~$\textbf{V}$ unchanged under Equation~\eqref{equation - induced distribution of a BN}, but may alter it under Principle~\ref{principle - maximum entropy}.

Hence, in general, \emph{maximizing entropy} in Principle~\ref{principle - maximum entropy} and \emph{causal irrelevance} in Principle~\ref{principle - causal irrelevance} are in tension. To address this conflict, \cite{Williamson2001} proposes the following formalization:

\begin{formalization}
    Given a Bayesian network~${\textbf{BN} := (G, \pi(\cdot \mid \pa(\cdot)))}$, an agent who adheres to both \emph{maximizing entropy} in Principle~\ref{principle - maximum entropy} and \emph{causal irrelevance} in Principle~\ref{principle - causal irrelevance} proceeds as follows:

    He begins by maximizing entropy~\( H(\pi) \) subject to the constraint that each source variable~\( V \) with $\Pa(V) = \emptyset$ in~\( G \) takes its possible values~\( v \) with probability~\( \pi(v) \), as specified by~\textbf{BN}. This results in a joint distribution~\( \Tilde{\pi} \) on a subset of variables~\( \textbf{W} \subseteq \textbf{V} \).

    The agent then iteratively maximizes entropy~\( H(\pi) \) under the following constraints, until a full joint distribution~\( \pi \) on~\( \textbf{V} \) is obtained:

    \begin{itemize}
        \item The marginal distribution on~\( \textbf{W} \) is given by~\( \Tilde{\pi} \).
        \item For every variable~\( V \in \textbf{V} \) with direct causes~\( \Pa(V) \subseteq \textbf{W} \), the conditional probabilities~\( \pi(v \mid \pa(V)) \) match the specification in~\textbf{BN}, i.e., \( V \) takes value~\( v \) with probability~\( \pi(v \mid \pa(V)) \) if its parents take the values~\( \pa(V) \).
    \end{itemize}
    \label{formalization - causal irrelevance}
\end{formalization}

In this context, \cite{Williamson2001} obtains the following result:

\begin{theorem}[§5.2, \cite{Williamson2001}]
    Let \( \textbf{BN} := (G, \pi (\cdot \vert \pa(\cdot))) \) be a Bayesian network. The induced distribution \( \pi_{\textbf{BN}}(\cdot) \) is the distribution resulting from the probability specifications \( \pi (\cdot \vert \pa(\cdot)) \) by \emph{maximizing entropy} in Principle~\ref{principle - maximum entropy} and \textit{causal irrelevance} in Principle~\ref{principle - causal irrelevance}, as expressed in Formalization~\ref{formalization - causal irrelevance}.~$\square$ 
    \label{theorem - causal irrelevance and Bayesian networks}
\end{theorem}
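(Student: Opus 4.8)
The plan is to proceed by induction along a topological order of the DAG $G$, showing that each entropy-maximization step in Formalization~\ref{formalization - causal irrelevance} reproduces exactly the local factors of the product formula~\eqref{equation - induced distribution of a BN}. The engine of the argument is the strict concavity of $H$ on the probability simplex: each step fixes a marginal and some conditionals, which cut out a closed convex set of distributions, so a maximizer exists and is unique. It therefore suffices to exhibit one feasible distribution whose entropy attains the value forced by the constraints.

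First I would treat the base step. The source variables $V$ with $\Pa(V)=\emptyset$ are subject only to the marginal constraints that each takes value $v$ with probability $\pi(v)$. By the chain rule for entropy, the entropy of any joint distribution on the sources is bounded above by the sum of the individual marginal entropies, with equality if and only if the sources are mutually independent; since the marginals are fixed, the unique maximizer is their product $\tilde{\pi} = \prod \pi(v)$, which agrees with~\eqref{equation - induced distribution of a BN} restricted to the sources.

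For the inductive step, suppose the procedure has produced on an ancestral set $\textbf{W}\subseteq\textbf{V}$ the distribution $\tilde{\pi}(\textbf{w}) = \prod_{V\in\textbf{W}}\pi(v\mid\pa(V))$, which—because $\textbf{W}$ is closed under taking parents—is precisely the $\textbf{W}$-marginal of the target $\pi_{\textbf{BN}}$. Adding the variables $V_1,\dots,V_k$ with $\Pa(V_j)\subseteq\textbf{W}$, the step fixes the marginal $\tilde{\pi}$ on $\textbf{W}$ and each conditional $\pi(v_j\mid\pa(V_j))$. I would decompose the entropy as
\[
H(\pi) = H(\textbf{W}) + H(V_1,\dots,V_k\mid\textbf{W}),
\]
note that $H(\textbf{W})=H(\tilde{\pi})$ is fixed, and bound the conditional term by
\[
H(V_1,\dots,V_k\mid\textbf{W}) \le \sum_{j=1}^k H(V_j\mid\textbf{W}) \le \sum_{j=1}^k H(V_j\mid\Pa(V_j)).
\]
The first inequality is subadditivity of conditional entropy, with equality if and only if the $V_j$ are conditionally independent given $\textbf{W}$; the second holds because conditioning on the larger set $\textbf{W}\supseteq\Pa(V_j)$ cannot increase entropy, with equality if and only if $V_j$ is independent of $\textbf{W}\setminus\Pa(V_j)$ given $\Pa(V_j)$. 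Crucially, the right-hand sum is determined by the fixed conditionals and the fixed marginal $\tilde{\pi}$, hence is constant across the feasible set. Both equalities hold simultaneously exactly for the product $\tilde{\pi}(\textbf{w})\prod_j\pi(v_j\mid\pa(V_j))$, which is feasible; by uniqueness it is the maximizer.

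Chaining the steps along the topological order, after the final iteration every variable of $\textbf{V}$ has been incorporated and the resulting distribution is $\prod_{V\in\textbf{V}}\pi(v\mid\pa(V)) = \pi_{\textbf{BN}}$ by~\eqref{equation - induced distribution of a BN}. I expect the main obstacle to be the bookkeeping that keeps $\textbf{W}$ ancestral at every stage—so that $\tilde{\pi}$ really is a marginal of the target distribution and the conditional-entropy decomposition is legitimate—together with a careful verification of the equality conditions in the two inequalities, since it is precisely the conjunction of these conditions (mutual conditional independence of the newly added variables together with the Markov condition relative to their parents, as in Definition~\ref{definition - Markov condition}) that pins the maximizer down to the Bayesian-network factorization.
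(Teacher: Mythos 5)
Your proof is correct, but it is genuinely different from what the paper does: the paper supplies no argument at all for this theorem, stating it with a terminal $\square$ and delegating the proof to \S 5.2 of \cite{Williamson2001}. Your induction along a topological order---anchored in the strict concavity of $H$ over the feasible polytope, the base case via $H(X_1,\dots,X_n)\le\sum_i H(X_i)$ with equality iff independence, and the inductive step via the chain rule $H(\pi)=H(\mathbf{W})+H(V_1,\dots,V_k\mid\mathbf{W})$ together with the two equality-characterized inequalities $H(V_1,\dots,V_k\mid\mathbf{W})\le\sum_j H(V_j\mid\mathbf{W})\le\sum_j H(V_j\mid\Pa(V_j))$---is a sound, self-contained derivation that stays entirely within Formalization~\ref{formalization - causal irrelevance} and Equation~(\ref{equation - induced distribution of a BN}). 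Its payoff over the bare citation is that the equality analysis makes visible exactly where the Markov condition of Definition~\ref{definition - Markov condition} is \emph{forced} by entropy maximization rather than assumed: the first equality pins down mutual conditional independence of the newly added variables, the second screens each $V_j$ off from $\mathbf{W}\setminus\Pa(V_j)$.

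Three small points to tidy, none fatal. First, the bookkeeping you flag as the main obstacle is automatic: the procedure of Formalization~\ref{formalization - causal irrelevance} only ever adjoins variables all of whose parents already lie in $\mathbf{W}$, so $\mathbf{W}$ is ancestral at every stage by construction; moreover the identification of $\tilde{\pi}$ with the $\mathbf{W}$-marginal of $\pi_{\textbf{BN}}$ is not actually needed for the induction---the product form of $\tilde{\pi}$ \emph{is} the inductive hypothesis, and the final identification with Equation~(\ref{equation - induced distribution of a BN}) happens only once all of $\textbf{V}$ is absorbed. Second, to justify convexity and hence uniqueness, a conditional constraint $\pi(v_j\mid\pa(V_j))=c$ should be read as the linear constraint $\pi(v_j,\pa(V_j))=c\,\pi(\pa(V_j))$; this also handles the degenerate case $\pi(\pa(V_j))=0$, where the constraint is vacuous but the joint remains uniquely determined, since every extension of a null parent configuration receives probability zero, and the terms with $\pi(\pa(V_j))=0$ contribute nothing to the constant bound $\sum_j H(V_j\mid\Pa(V_j))$. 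Third, your closing step is slightly redundant: since the product distribution is feasible and attains the constant upper bound $H(\tilde{\pi})+\sum_j H(V_j\mid\Pa(V_j))$, strict concavity alone already identifies it as the unique maximizer; the verification of the equality conditions is then an illuminating explanation of \emph{why} the factorization emerges, not a step the proof depends on.
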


\subsubsection{Probabilistic Causal Models}
\label{subsubsec: Probabilistic Functional Causal Models}

\cite{Causality} introduces probabilities into a functional causal model $\mathcal{M}$ by specifying a probability distribution over the situations of $\mathcal{M}$. 

\begin{definition}[Probabilistic Causal Model]
    A \textbf{(probabilistic) (Boolean) causal model} $\mathbb{M} := (\mathcal{M}, \pi)$ consists of a (Boolean) functional causal model~$\mathcal{M}$ together with a probability distribution~$\pi$ on the situations of~$\mathcal{M}$. The \textbf{causal diagram}~$\graph(\mathbb{M})$ of the probabilistic causal model~$\mathbb{M}$ is defined as the causal diagram~$\graph(\mathcal{M})$ of the underlying causal model~$\mathcal{M}$. The model~$\mathbb{M}$ is called \textbf{acyclic} if~$\mathcal{M}$ is acyclic.

    Since~$\mathcal{M}$ is a functional causal model, each situation~$\textbf{u}$ determines a unique solution~$\textbf{s}(\textbf{u})$ of the structural equations. By defining
    \[
        \pi_{\mathbb{M}}(\omega) :=
        \begin{cases}
            \pi(\textbf{u}), & \text{if } \omega = \textbf{s}(\textbf{u}) \\
            0, & \text{otherwise}
        \end{cases}
    \]
    for each value assignment~$\omega$ of the variables~$\textbf{U} \cup \textbf{V}$, the model~$\mathbb{M}$ induces a joint probability distribution on the random variables in~$\textbf{U} \cup \textbf{V}$.
    \label{definition - probabilistic causal model}
\end{definition}

\begin{example}
In Example \ref{example - introduction - sprinkler}, the causal model $\mathbb{M}$ yields a probability distribution $\pi_{\mathbb{M}}$ on the truth value assignments for the variables 
$
\textbf{U} \cup \textbf{V} 
$.  
This allows us, for instance, to calculate the probability $\pi_{\mathbb{M}} (\textit{rain})$ that it rains:
${\pi_{\mathbb{M}} (\textit{rain}) = \pi (u_1) \cdot \pi (u_2) = 0.5 \cdot 0.6 = 0.3
}$
\end{example}

Recall the relation between Bayesian networks and causal models:

\begin{definition}[Markovian Causal Models]
    An acyclic probabilistic causal model $\mathbb{M} := (\mathcal{M}, \pi)$ is \textbf{Markovian} if $\pi$ interprets the error terms as mutually independent random variables. 
    \label{definition - Markovian Causal Models}
\end{definition}

\begin{theorem}[\cite{Causality}, §1.4.2]
    A Markovian causal model~$\mathbb{M}$ gives rise to a distribution~$\pi_{\mathbb{M}}$ that is Markov to its causal diagram, i.e., $\pi \models \graph(\mathbb{M})$. As a result,~$\pi_{\mathbb{M}}$ admits a representation as a Bayesian network over~$\graph(\mathbb{M})$.~$\square$
    \label{theorem - causal models and Bayesian networks}
\end{theorem}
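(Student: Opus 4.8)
The plan is to prove the two assertions in turn: I would derive the Bayesian-network factorisation directly from the structural equations and then obtain the Markov condition from it, using the equivalence between that factorisation and the Markov condition recalled from \cite[§1.2.3]{Causality}. Since $\mathbb{M}$ is acyclic, $G := \graph(\mathbb{M})$ is a DAG, so I first fix a topological ordering $V_1, \ldots, V_n$ of $\textbf{V}$ with $\Pa(V_i) \subseteq \{ V_1, \ldots, V_{i-1} \}$. Because $\mathcal{M}$ is a functional acyclic model, the unique solution $\textbf{s}(\textbf{u})$ attached to a situation $\textbf{u}$ may be computed by forward substitution along this order, $s(V_i) = F_{V_i}(\pa(V_i), \error(V_i))$, where $\pa(V_i)$ is the restriction of the already-determined solution to $\Pa(V_i)$ and $\error(V_i)$ the restriction of $\textbf{u}$ to $\Error(V_i)$. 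An induction along the order then shows that $s(V_i)$ depends on $\textbf{u}$ only through the error bundles $\Error(V_k)$ with $V_k$ ranging over $V_i$ and its ancestors.

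Next I would invoke the Markovian hypothesis, read as the statement that the per-variable error bundles $\{ \Error(V) \}_{V \in \textbf{V}}$ are jointly independent under $\pi$. Since $V_1, \ldots, V_{i-1}$ are functions of error bundles indexed by variables strictly preceding $V_i$, and $\Error(V_i)$ is independent of all of these, the bundle $\Error(V_i)$ is independent of the tuple $(V_1, \ldots, V_{i-1})$. Hence both the conditional probability given the whole past and the one given only the parents equal the chance that $F_{V_i}(\pa(V_i), \Error(V_i))$ takes the value $v_i$, so that
\[
\pi_{\mathbb{M}}(V_i = v_i \mid V_1 = v_1, \ldots, V_{i-1} = v_{i-1}) = \pi_{\mathbb{M}}(V_i = v_i \mid \Pa(V_i) = \pa(V_i)),
\]
the dependence on the conditioning values being only through $\pa(V_i)$. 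This is precisely the ordered (local) Markov property along the chosen order.

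Multiplying these identities through the chain rule gives $\pi_{\mathbb{M}}(\textbf{v}) = \prod_i \pi_{\mathbb{M}}(v_i \mid \pa(V_i))$, which is exactly Equation~\eqref{equation - induced distribution of a BN}; thus the marginal of $\pi_{\mathbb{M}}$ on $\textbf{V}$ is induced by the Bayesian network $(G, \pi_{\mathbb{M}}(\cdot \mid \pa(\cdot)))$, establishing the second assertion. The first assertion $\pi_{\mathbb{M}} \models G$ then follows either from the equivalence between the Bayesian-network factorisation and the Markov condition recalled from \cite[§1.2.3]{Causality}, or by converting the ordered Markov property into the full local Markov condition of Definition~\ref{definition - Markov condition} through the standard graphoid (decomposition and weak-union) manipulations.

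The main obstacle I anticipate is the precise reading of ``error terms as mutually independent random variables'' in Definition~\ref{definition - Markovian Causal Models}: the argument genuinely needs independence at the level of the whole bundles $\Error(V)$, and it would fail if distinct internal variables were allowed to share an external variable. I would therefore make explicit that the Markovian hypothesis is taken to mean joint independence of the bundles $\{ \Error(V) \}_{V \in \textbf{V}}$ (so that disjointness of the $\Error(V)$ suffices), and carefully verify that the forward-substitution induction isolates the dependence of each $V_i$ on ancestral error bundles only; once that is pinned down, the remaining probabilistic steps are routine.
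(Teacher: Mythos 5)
Your proposal is correct, but note that the paper does not actually prove this statement: the theorem is stated with a terminal $\square$ and attributed to \cite{Causality}, \S 1.4.2, so the paper's ``proof'' is the citation to Pearl. What you have reconstructed is essentially Pearl's own argument (Theorem~1.4.1 in \emph{Causality}): topological ordering of the DAG, forward substitution showing each $V_i$ is a function of its ancestral error bundles, independence of $\Error(V_i)$ from the preceding variables yielding the ordered Markov property, and the chain rule giving the factorisation of Equation~\eqref{equation - induced distribution of a BN}, from which the Markov condition of Definition~\ref{definition - Markov condition} follows by the standard factorisation--Markov equivalence. Your closing caveat is well taken and is in fact sharper than the paper's own formulation: Definition~\ref{definition - Markovian Causal Models} says only that ``$\pi$ interprets the error terms as mutually independent random variables,'' and since the paper allows $\Error(V) \subseteq \textbf{U}$ to be arbitrary subsets, two internal variables could share an external variable (e.g.\ $\textbf{U} = \{u\}$, $A := u$, $B := u$ with no edge between $A$ and $B$), in which case pointwise independence of the elements of $\textbf{U}$ holds trivially while the conclusion fails. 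Reading the hypothesis as joint independence of the bundles $\{\Error(V)\}_{V \in \textbf{V}}$ --- equivalently, mutual independence of the $u \in \textbf{U}$ together with disjointness of the bundles, which is Pearl's intended setting of one error term per equation --- is exactly what the argument needs, so your proof is complete under the correct reading of the definition.
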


\begin{example}
    The causal model in Example \ref{example - introduction - sprinkler} is Markovian. It gives rise to the Bayesian network, described in Example \ref{example - Bayesian network}.
\end{example}

Again, causal models are not restricted to queries about conditional and unconditional probabilities. They also support queries for intervention effects:

Let~$\mathbb{M} := (\mathcal{M}, \pi)$ be a probabilistic causal model with external variables~$\textbf{U}$ and internal variables~$\textbf{V}$. Given~$\textbf{I} \subseteq \textbf{V}$ and a value assignment~$\textbf{i}$ on~$\textbf{I}$, the \textbf{submodel}~$\mathbb{M}_{\textbf{i}} := (\mathcal{M}_{\textbf{i}}, \pi)$ describes the system under intervention~$\textbf{i}$. The resulting \textbf{post-interventional distribution} is
\[
\pi_{\mathbb{M}}(\cdot \mid \Do(\textbf{i})) := \pi_{\mathbb{M}_{\textbf{i}}}(\cdot).
\]
Here, the \textbf{do-operator}~$\Do(\textbf{i})$ indicates that the variables in~$\textbf{I}$ are fixed by actively doing something. For any event~$A$, the quantity~$\pi(A \mid \Do(\textbf{i}))$ denotes the probability of~$A$ after \textbf{intervening} according to~$\textbf{i}$.

\begin{example}
Recall Example \ref{example - introduction - sprinkler} and ask for the post-interventional probability that the road is slippery after turning off the sprinkler. In this case, one queries the modified model~$\mathbb{M}_{\neg \textit{sprinkler}}$ for $\textit{slippery}$ to obtain the probability
$$
\pi_{\mathbb{M}} (\textit{slippery} \vert \Do (\neg \textit{sprinkler})) = \pi(u_1) \cdot \pi (u_2) \cdot \pi (u_5) \cdot\pi (u_6) =  0.216
$$ 
for the road to be slippery after switching the sprinkler off. 

Note that this result differs from the conditional probability 
$$
\pi_{\mathbb{M}} (\textit{slippery} \vert \neg \textit{sprinkler}) = \dfrac{\pi(u_1) \cdot \pi(u_2) \cdot \pi (\neg u_3) \cdot \pi (u_5) \cdot \pi (u_6)}{\pi(u_1) \cdot \pi (\neg u_3) + \pi(\neg u_1) \cdot \pi (\neg u_4)} = 0.432
$$ that it is slippery if the sprinkler is observed to be off. 
\label{example - intervantions in probabilistic causal models}
\end{example}

Theorem \ref{theorem - causal models and Bayesian networks} yields the following notion of intervention in Bayesian networks:

\begin{definition}[Intervention in Bayesian Networks]
    Let $G := (\textbf{V}, \textbf{E})$ be a directed acyclic graph, and let $\textbf{I} \subseteq \textbf{V}$ be a subset of its nodes. Define the graph
    \[
    G_{\textbf{I}} := (\textbf{V}, \textbf{E}_{\textbf{I}}), \quad \text{where } \textbf{E}_{\textbf{I}} := \{ (V_1, V_2) \in \textbf{E} \mid V_2 \notin \textbf{I} \}
    \]
    by deleting all edges in $G$ that point into nodes in~$\textbf{I}$.

    Let $\textbf{BN} := (G, \pi(\cdot \mid \pa(\cdot)))$ be a Bayesian network inducing the distribution~$\pi$, and let~$\textbf{i}$ be a value assignment on the variables in~$\textbf{I}$. Intervening to force the variables in~$\textbf{I}$ to take the values in~$\textbf{i}$ yields the \textbf{modified Bayesian network} $
    \textbf{BN}_{\textbf{i}} := \left(G_{\textbf{I}}, \pi_{\textbf{i}}(\cdot \mid \pa_{G_{\textbf{I}}}(\cdot))\right)
    $,
    where
    \[
    \pi_{\textbf{i}}(v \mid \pa_{G_{\textbf{I}}}(V)) :=
    \begin{cases}
        1, & \text{if } V \in \textbf{I} \text{ and } v = \textbf{i}(V), \\
        0, & \text{if } V \in \textbf{I} \text{ and } v \neq \textbf{i}(V), \\
        \pi(v \mid \pa(V)), & \text{otherwise}.
    \end{cases}
    \]

    The modified network gives rise to the \textbf{post-interventional distribution}~${\pi_{\textbf{BN}}(\cdot \mid \Do(\textbf{i})) := \pi_{\textbf{BN}_{\textbf{i}}}(\cdot)}$. For any event~$A$, the quantity~$\pi(A \mid \Do(\textbf{i}))$ denotes the probability of~$A$ after \textbf{intervening} according to~$\textbf{i}$.
    \label{definition - intervention in Bayesian networks}
\end{definition}

\begin{example}
    Recall the Bayesian network from Example~\ref{example - Bayesian network}. 
    Intervening and switching the sprinkler on results in the modified Bayesian network $\textbf{BN}_{\textbf{i}}$ with the causal structure $G_I$ that results from the causal structure $G$ in Graph (\ref{equation - causal structure}) by erasing the edge $\textit{cloudy} \rightarrow \textit{sprinkler}$.
    
    The corresponding probabilities are obtained by replacing the conditional probabilities $\pi(\textit{sprinkler} \vert (\neg) \textit{cloudy})$ in Parameters (\ref{equation - parameters of a BN}) with ${\pi(\textit{sprinkler})=1}$, reflecting the intervention.
    \label{example - intervening in a Bayesian network}
\end{example}

Finally, the notion of intervention in causal models aligns with that in Bayesian networks.

\begin{theorem}[\cite{Causality}, §1.4.3]
    Let $\mathbb{M}$ be a Markovian causal model that gives rise to the Bayesian network $\textbf{BN}$ and $\textbf{i}$ a value assignment on a subset of internal variables  $\textbf{I}$ of $\mathbb{M}$. The modified causal model $\mathbb{M}_{\textbf{i}}$ and Bayesian network~$\textbf{BN}_{\textbf{i}}$ induce the same post-interventional distribution~${\pi (\cdot \vert \Do (\textbf{i}))}$.~$\square$
\end{theorem}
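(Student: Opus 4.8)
The plan is to show that the modified causal model $\mathbb{M}_{\textbf{i}}$ is itself a Markovian causal model, apply Theorem~\ref{theorem - causal models and Bayesian networks} to it, and then verify that the resulting Bayesian network representation coincides parameter-for-parameter with the modified network $\textbf{BN}_{\textbf{i}}$ of Definition~\ref{definition - intervention in Bayesian networks}. Since Definition~\ref{definition - probabilistic causal model} and Definition~\ref{definition - intervention in Bayesian networks} set $\pi_{\mathbb{M}}(\cdot \mid \Do(\textbf{i})) := \pi_{\mathbb{M}_{\textbf{i}}}$ and $\pi_{\textbf{BN}}(\cdot \mid \Do(\textbf{i})) := \pi_{\textbf{BN}_{\textbf{i}}}$, it suffices to prove $\pi_{\mathbb{M}_{\textbf{i}}} = \pi_{\textbf{BN}_{\textbf{i}}}$.

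First I would observe that $\mathbb{M}_{\textbf{i}} := (\mathcal{M}_{\textbf{i}}, \pi)$ is again Markovian. Its causal diagram $\graph(\mathcal{M}_{\textbf{i}})$ is obtained from the directed acyclic graph $G := \graph(\mathbb{M})$ by deleting every edge pointing into a variable of $\textbf{I}$, since by Definition~\ref{definition - modified causal model} the functions $F_V$ for $V \in \textbf{I}$ are replaced by constants and hence no longer depend on their former parents. This graph is exactly $G_{\textbf{I}}$, a subgraph of the acyclic $G$, so it is again acyclic; and the distribution $\pi$ on situations is untouched by the intervention, so the error terms remain mutually independent. Thus $\mathbb{M}_{\textbf{i}}$ satisfies Definition~\ref{definition - Markovian Causal Models}.

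Next I would apply Theorem~\ref{theorem - causal models and Bayesian networks} to $\mathbb{M}_{\textbf{i}}$: the induced distribution $\pi_{\mathbb{M}_{\textbf{i}}}$ is Markov to $G_{\textbf{I}}$ and therefore factorizes as a Bayesian network over $G_{\textbf{I}}$ according to Equation~\eqref{equation - induced distribution of a BN}, with parameters $\pi_{\mathbb{M}_{\textbf{i}}}(\cdot \mid \pa_{G_{\textbf{I}}}(\cdot))$. It then remains to match these with the parameters $\pi_{\textbf{i}}(\cdot \mid \pa_{G_{\textbf{I}}}(\cdot))$ of Definition~\ref{definition - intervention in Bayesian networks}. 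For $V \in \textbf{I}$ the structural equation forces $V = \textbf{i}(V)$ deterministically, and $V$ has no parents in $G_{\textbf{I}}$, so its conditional factor is $1$ on the value $\textbf{i}(V)$ and $0$ otherwise, exactly matching $\pi_{\textbf{i}}$. For $V \notin \textbf{I}$ the function $F_V$ and the parent set $\Pa(V)$ are unchanged (deleting edges into $\textbf{I}$ does not touch edges into $V$), so I would compute $\pi_{\mathbb{M}_{\textbf{i}}}(V = v \mid \Pa(V) = \pa(V))$ as the $\pi$-probability of the event that the error terms $\Error(V)$ take values with $F_V(\pa(V), \error(V)) = v$, obtaining the same expression as $\pi_{\mathbb{M}}(V = v \mid \pa(V)) = \pi(v \mid \pa(V))$.

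The crux, and the step I expect to be the main obstacle, is precisely this last invariance: that conditioning on the realized parent values leaves the conditional law of $V \notin \textbf{I}$ untouched by the intervention. This rests entirely on the Markovian assumption of Definition~\ref{definition - Markovian Causal Models} that the error terms are mutually independent. The argument is that $\Error(V)$ is independent of all other error terms, which are the sole source of randomness determining the values of $\Pa(V)$; hence conditioning on $\Pa(V) = \pa(V)$ does not alter the distribution of $\Error(V)$, and the conditional distribution of $V$ is the same pushforward of $\Error(V)$ through $F_V(\pa(V), \cdot)$ whether computed in $\mathbb{M}$ or in $\mathbb{M}_{\textbf{i}}$. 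Once this invariance is established, both $\pi_{\mathbb{M}_{\textbf{i}}}$ and $\pi_{\textbf{BN}_{\textbf{i}}}$ are the Bayesian-network factorizations over the same graph $G_{\textbf{I}}$ with identical conditional parameters, so they agree, yielding $\pi_{\mathbb{M}}(\cdot \mid \Do(\textbf{i})) = \pi_{\mathbb{M}_{\textbf{i}}} = \pi_{\textbf{BN}_{\textbf{i}}} = \pi_{\textbf{BN}}(\cdot \mid \Do(\textbf{i}))$, as desired.
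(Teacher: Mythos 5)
The paper offers no proof of this statement at all: the terminal $\square$ marks it as quoted from \cite{Causality}, \S 1.4.3, so there is nothing internal to compare against. Your argument is, in substance, Pearl's own truncated-factorization proof, and its outline is correct: $\mathbb{M}_{\textbf{i}}$ is again acyclic with the same (hence still mutually independent) error distribution, Theorem~\ref{theorem - causal models and Bayesian networks} gives a Bayesian-network factorization of $\pi_{\mathbb{M}_{\textbf{i}}}$, and matching the factors — degenerate at $\textbf{i}(V)$ for $V \in \textbf{I}$, unchanged for $V \notin \textbf{I}$ — identifies it with $\pi_{\textbf{BN}_{\textbf{i}}}$ from Definition~\ref{definition - intervention in Bayesian networks}. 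Making the citation-level result self-contained in the paper's own formalism is a genuine gain.

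Two points need tightening. First, by Definition~\ref{definition - modified causal model} the submodel $\mathcal{M}_{\textbf{i}}$ retains the parent map $\Pa$ unchanged, so formally $\graph(\mathcal{M}_{\textbf{i}}) = \graph(\mathcal{M})$, not $G_{\textbf{I}}$ as you assert; you should add the (easy) observation that since $F_V$ is constant for $V \in \textbf{I}$, one may replace $\Pa(V)$ by $\emptyset$ for those $V$ without changing the structural equations, after which the diagram is $G_{\textbf{I}}$ and your application of Theorem~\ref{theorem - causal models and Bayesian networks} goes through. Second, you frame the crux as invariance of conditional laws under the intervention, but conditioning on $\Pa(V) = \pa(V)$ is only meaningful when that event has positive probability, and an intervention can render parent configurations attainable that were null under $\pi_{\mathbb{M}}$, where ratio-defined conditionals are arbitrary. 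Your own computation already contains the fix: define the network parameter at every configuration mechanically, as the $\pi$-probability of $\{\error(V) : F_V(\pa(V), \error(V)) = v\}$, which is the same expression in $\mathbb{M}$ and $\mathbb{M}_{\textbf{i}}$ by construction; then the identity holds with no positivity assumption, provided ``$\mathbb{M}$ gives rise to $\textbf{BN}$'' is read as equipping $\textbf{BN}$ with exactly these mechanism-defined conditionals (the natural reading, and the one under which the cited theorem is true). Note also that your independence step tacitly assumes, as Pearl does, that distinct variables have disjoint error sets $\Error(V)$ — Definition~\ref{definition - Markovian Causal Models} as literally stated does not forbid overlap, and with shared error terms even Theorem~\ref{theorem - causal models and Bayesian networks} can fail — so this hypothesis deserves an explicit mention.
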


\subsection{Causal Systems: A Generic Representation of Causal Reasoning}
\label{subsec: Causal Systems: A Generic Representation of Causal Reasoning}

To reason about \emph{demonstrations} and knowledge-\emph{why} in the presence of uncertainty about \textit{natural necessity} in Principle~\ref{principle - Aquinas}, we propose the notion of a \emph{maximum entropy causal system}:

\begin{definition}[Maximum Entropy Causal System]
    A \textbf{weighted causal rule}~\( (w, R) \) consists of a weight \( w \in \mathbb{R} \cup \{ +\infty, -\infty \} \) and a literal causal rule \( R \). A \textbf{weighted causal theory}~\( \Theta \) is a finite set of weighted causal rules. 

    The \textbf{explanatory content} of a weighted causal theory \( \Theta \) is the \mbox{causal theory}  
    \[
    \explanatory (\Theta) := \{ R \mid \exists w \text{ such that } (w,R) \in \Theta \}.
    \]
    The \textbf{constraint content} of a weighted causal theory \( \Theta \) is the \mbox{LogLinear model}  
    \[
    \constraint (\Theta) := \{ (w, \constraint(R)) \mid (w,R) \in \Theta \}.
    \]

    A \textbf{(maximum entropy) causal system} 
    $
    \textbf{CS} := (\Theta,~\mathcal{E},~\mathcal{O},~\Sigma)
    $ 
    is a tuple consisting of the following components:
    \begin{itemize}
        \item A weighted causal theory~$\Theta(\textbf{CS}) := \Theta$, called \textbf{causal knowledge} of~$\textbf{CS}$.
        \item A set of literals $\mathcal{E}(\textbf{CS}) := \mathcal{E}$, called the \textbf{external premises} of~$\textbf{CS}$.
        \item A set of formulas $\mathcal{O}(\textbf{CS}) := \mathcal{O}$, called the \textbf{observations} of~$\textbf{CS}$.
        \item A \textbf{superordinate science}, that is, a LogLinear model~${\Sigma(\textbf{CS}) := \Sigma}$.
    \end{itemize}
    A \textbf{pure external premise} $\epsilon \in \mathcal{E}$ is a proposition $p \in \mathfrak{P}$ such that $p , \neg p \in \mathcal{E}$. Denote by $\mathcal{E}^{\text{pure}}$ the set of all pure external premises of \textbf{CS}. A \textbf{situation} $\textbf{s}$ of~$\textbf{CS}$ is a value assignment on the pure external premise $\mathcal{E}^{\text{pure}}$.
    
    
    The \textbf{constraint part} $\CONST (\textbf{CS})$ of the causal system $\textbf{CS}$ is the constraint part of its causal knowledge $\Theta$, i.e.,~${
    \CONST (\textbf{CS}) :=  \constraint (\Theta).
    }$ 
    
    The \textbf{explanatory part} $\EXP(\textbf{CS})$ is the deterministic causal system~${
    \EXP (\textbf{CS}) := (\EXP (\Theta), \mathcal{E}, \mathcal{O}).
    }$
    
    
    The system $\textbf{CS}$ is \textbf{without observations} or applies \textbf{default negation} if its explanatory part $\EXP (\textbf{CS})$ does. 
    \label{definition - maximum entropy causal system}
    \label{definition - weighted causal theory}
\end{definition}

We use Definition \ref{definition - weighted causal theory} together with the following guideline:  

\begin{Language}[Maximum Entropy Causal System]
    Fix an area of science as described in Principle~\ref{principle - causal foundation}, and let~$\Delta$, $\mathcal{E}$, and~$\mathcal{O}$ be as in Language~\ref{language - causal systems}. By Assumption~\ref{assumption - literal causal theory}, the causal theory~$\Delta$ consists of literal causal rules~$R_i$, for~${1 \leq i \leq n}$. 
    
    Applying Parametrization~\ref{parametrization - maximum entropy model}, we express degrees of belief in whether \emph{natural necessity}, as described in Principle~\ref{principle - Aquinas}, holds for a rule~$R_i \in \Delta$ by assigning weights~${w_i \in \mathbb{R} \cup \{ \pm \infty \}}$, thereby obtaining a weighted causal theory~$\Theta$.
    
    Assuming that knowledge-\emph{that} from superordinate areas of science is encoded in a LogLinear model~$\Sigma$, we arrive at the maximum entropy causal system~${
        \mathbf{CS} := (\Theta, \mathcal{E}, \mathcal{O}, \Sigma) 
    }$.
    In this setting, \emph{default negation} in Assumption~\ref{assumption - default negation} is satisfied if and only if the system~$\mathbf{CS}$ applies default negation.
    \label{language - maximum entropy causal systems}
    \label{language - Uncertainty in Causal Reasoning}
\end{Language}

We adopt the viewpoint that \emph{maximizing entropy} in Principle~\ref{principle - maximum entropy} corresponds to the \emph{syllogisms} underlying \emph{consistency with deduction} in Principle~\ref{principle - consistency with logical reasoning}.

\begin{formalization}[Syllogisms]
    Recall the situation of Language \ref{language - maximum entropy causal systems}. Adapting Formalizations~\ref{formalization - natural necessity}, the LogLinear model~$\constraint(\Theta)$ encodes beliefs about \emph{natural necessity} in Principle~\ref{principle - Aquinas}.
    
    Let ${\Phi := \constraint (\textbf{CS}) \cup \Sigma}$ be the LogLinear model that corresponds to the causal knowledge~$\Theta$ and the superordinate science $\Sigma$. For every events $A$ and~$B$ there exists a \emph{syllogism} for $B$ with premises $A$ with probability $\pi_{\Phi}(B \mid A)$. In particular, $\pi_{\Phi} (B)$ is the probability that there exists a \emph{syllogism} for $B$ with premises in $\mathcal{E}$. 
    \label{formalization - syllogism}
\end{formalization}

Formalization \ref{formalization - syllogism} motivates the following definition.

\begin{definition}[\emph{That}-Semantics]
    The \textbf{\emph{that}-semantics} of a maximum entropy causal system $\textbf{CS} := (\Theta, \mathcal{E}, \mathcal{O}, \Sigma)$ is the LogLinear model $\Phi$ in Formalization~\ref{formalization - syllogism}. The system assumes \textbf{knowledge-\textit{that}} an event $A$ occurs with probability $\pi_{\textbf{CS}}^{\emph{that}} (A) := \pi_{\Phi} (A \mid \mathcal{O})$.
\end{definition}

According to \textit{directionality} in Principle \ref{principle - directionality}, \mbox{knowledge-\textit{why}} arises from \textit{demonstrations}, i.e., \textit{syllogisms} that follow the causal order of things. We conclude that \emph{causal irrelevance} in Principle \ref{principle - causal irrelevance} applies.  According to Formalization~\ref{formalization - causal irrelevance} of \mbox{\cite{Williamson2001}}, this means that the \emph{entropy} in Principle \ref{principle - maximum entropy} needs to be \emph{maximized} greedily along the given causal order:

Recall that two nodes \( p \) and \( q \) of a directed graph \( G := (V,E) \) are 
\textbf{strongly connected}, written~\( p \sim q \), if there exist directed paths 
from \( p \) to \( q \) and from~\( q \) to \( p \) in~\( G \). Strong connectedness 
(\(\sim\))/2 is an equivalence relation, and the equivalence classes \( [p] \in V/\sim \) 
are called the \textbf{strongly connected components} of \( G \). Lastly, the resulting 
\textbf{factor graph} \( G/\sim := (V/\sim, E/\sim) \) is a directed acyclic graph, 
where \( E/\sim := \{ ([p],[q]) \in (V/\sim)^2 \mid (p,q) \in E \} \).

Let $\textbf{CS} := (\Theta, \mathcal{E}, \mathcal{O}, \Sigma)$ be a maximum entropy causal system, and denote by~${
G(\textbf{CS}) := \graph(\explanatory(\textbf{CS})) := (\textbf{V}, \textbf{E})
}$ 
the factor graph on the strongly connected components of the causal structure of the explanatory content of $\textbf{CS}$. To each component \(V \in \textbf{V}\), associate a random variable ranging over all assignments
${
v : V \to \{\textit{True}, \textit{False}\}.
}$

Let \( v \) be a value of a component \( V \in \textbf{V} \) such that \( V \cap \mathcal{E}^{\mathrm{pure}} = \emptyset \), and~\( \pa(V) \) a value assignment to \( \Pa(V) \). Define
${
    \Theta \vert V := \{ (w, \phi \Rightarrow l) \in \Theta \mid l \in \{p, \neg p\},~ p \in V \}
}$ and ${
    \mathcal{E}^{\Pa(V)} := \{ l \in \{p, \neg p\} \mid p \in W,~ W \in \Pa(V) \}
}$.

Consider the system \( \textbf{CS}^{V} := (\Theta \vert V, \mathcal{E}^{\Pa(V)}, \emptyset) \). As all propositions \( p \in V \) lie in a causal cycle, we argue that \emph{causal irrelevance}, as stated in Principle~\ref{principle - causal irrelevance}, does not yield any constraint. Hence, applying \emph{sufficient causation} in Assumption~\ref{principle - Leibniz} as formalized in Formalization~\ref{formalization - sufficient causation}, and Formalization~\ref{formalization - syllogism}, we conclude that there exists a \emph{demonstration} of \( v \) with premises \( \pa(V) \) with probability
\begin{equation}
    \pi_{\textbf{CS}}^{\emph{why}}(\textbf{v} \mid \pa(V)) := \pi_{\constraint(\Theta \vert V)}(\textbf{v} \mid \pa(V), \sufficient(\textbf{CS}^{V})).
\label{equation - demonstration}
\end{equation}

We proceed by applying \emph{causal irrelevance} from Principle~\ref{principle - causal irrelevance} and \emph{maximizing entropy} from Principle~\ref{principle - maximum entropy}, as formalized in Formalization~\ref{formalization - causal irrelevance} together with Theorem~\ref{theorem - causal irrelevance and Bayesian networks}, to obtain the \emph{causal semantics} for maximum entropy causal systems:

\begin{definition}[Causal Semantics]
     Let~$\textbf{CS}$ be a maximum entropy causal system with~$G(\textbf{CS}) := (\textbf{V}, \textbf{E})$. We say that~$\textbf{CS}$ \textbf{provides demonstrations} if the following conditions hold:
    \begin{itemize}
        \item[i)] For every component~${V \in \textbf{V}}$ and every assignment~${\pa(V)}$, the probabilities in Equation~\eqref{equation - demonstration} sum to one.
        \item[ii)] There is no weighted rule~${(w, \phi \Rightarrow (\neg) p)}$ with~${p \in \mathcal{E}^{\textbf{pure}}}$.
        \item[iii)] For every proposition~$p \in \mathfrak{P}$ there exists a rule ${(w, \phi \Rightarrow (\neg)p) \in \Theta}$ or an external premises~${(\neg)p \in \mathcal{E}}$.
    \end{itemize}

    In this case, the \textbf{causal structure} $\graph (\textbf{CS})$ of $\textbf{CS}$ results from $G(\textbf{CS})$ by replacing all nodes $V := \{ p \}$ for $p \in \mathcal{E}^{\text{pure}}$ and all outgoing edges $V \rightarrow W$ with one node $S := \mathcal{E}^{\text{pure}}$ and the edges $S \rightarrow W$.  
    
    The system $\textbf{CS}$ then assumes the \textbf{indemonstrable knowledge} that every situation $\textbf{s}$ occurs with probability: 
    \begin{equation}
     \pi_{\textbf{CS}}^{\emph{why}}(\textbf{s}) := 
     \pi_{\Sigma} (\textbf{s}).     
    \label{equation - indemonstrable knowledge}
    \end{equation}

    Let $\textbf{BN}(\textbf{CS})$ be the Bayesian network that is given by the causal structure $\graph (\textbf{CS})$ and the Probabilities (\ref{equation - demonstration}) and (\ref{equation - indemonstrable knowledge}). The \textbf{causal semantics} of $\textbf{CS}$
    is then given by setting
    $
    \pi_{\textbf{CS}}^{\text{causal}} (\omega) := \pi_{\textbf{BN}(\textbf{CS})}(\omega \mid \mathcal{O})
    $ for each world $\omega$.

    Finally, denote by $\mathcal{O} \vert \mathcal{E}$ the set of all observations $o \in \mathcal{O}$ that can be formed with the external premises $\mathcal{E}$ and set $\textbf{CS} \vert \mathcal{E} := (\Theta, \mathcal{E}, \mathcal{O}\vert \mathcal{E}, \Sigma)$. 
    
    If $\pi_{\textbf{CS}}^{\emph{causal}}(A) = \pi_{\textbf{CS}\vert\mathcal{E}}^{\emph{causal}}(A)$ for an event \( A \), a maximum entropy causal system~$\textbf{CS}$ that provides demonstrations assumes \mbox{\textbf{knowledge-\emph{why}}}~\( A \) occurs with probability~\({
         {\pi_{\textbf{CS}}^{\emph{why}}(A) := \pi_{\textbf{CS}}^{\text{causal}}(A)}.
    }\)
    \label{definition - causal semantics}
\end{definition}

To summarize, we argue for the following result:

\begin{formalization}
    In Language~\ref{language - maximum entropy causal systems}, for a causal system~$\textbf{CS}$ that provides demonstrations, the causal semantics~$\pi_{\textbf{CS}}^{\text{causal}}$ is the distribution that results from the combination of: 
    Principle~\ref{principle - consistency with logical reasoning}, the notion of \emph{syllogism} in Formalization~\ref{formalization - syllogism}, \emph{sufficient causation} in Assumption~\ref{principle - Leibniz}, \emph{causal irrelevance} in Principle~\ref{principle - causal irrelevance}, and \emph{maximizing entropy} in Principle~\ref{principle - maximum entropy}, as specified in Formalization~\ref{formalization - causal irrelevance}.
    
    Finally, Principle~\ref{principle - directionality} implies that the system~$\textbf{CS}$ possesses knowledge-\emph{why} as characterized in Definition~\ref{definition - causal semantics}.
    \label{formalization - knowledge-why under uncertainty}
\end{formalization}

It remains to show that the causal semantics indeed induces a well-defined probability distribution.
    
\begin{proposition}
    Let $\textbf{CS}$ be a maximum entropy causal system that provides demonstrations.  
    Then the causal semantics $\pi_{\textbf{CS}}^{\text{causal}}$ induces a probability distribution on the worlds $\omega$ of $\mathfrak{P}$.
    \label{proposition - causal semantics is well-defined}
\end{proposition}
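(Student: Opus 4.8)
The plan is to reduce the claim to the well-definedness of the Bayesian network $\textbf{BN}(\textbf{CS})$ appearing in Definition~\ref{definition - causal semantics}: once $\textbf{BN}(\textbf{CS})$ is a bona fide Bayesian network in the sense of Definition~\ref{definition - Bayesian network}, its induced distribution $\pi_{\textbf{BN}(\textbf{CS})}$ is a probability distribution over the joint assignments of the component variables, which are exactly the worlds $\omega$ of $\mathfrak{P}$; conditioning on $\mathcal{O}$ then yields $\pi_{\textbf{CS}}^{\text{causal}}$ by a standard argument. Thus I would verify the two defining requirements of a Bayesian network for $\graph(\textbf{CS})$, namely that the underlying graph is acyclic and that the attached conditional probability tables are normalized.

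First I would establish acyclicity of $\graph(\textbf{CS})$. The factor graph $G(\textbf{CS})$ on the strongly connected components of $\graph(\explanatory(\textbf{CS}))$ is acyclic by construction. Condition~(ii) of \emph{provides demonstrations} guarantees that no weighted rule has an effect $(\neg)p$ with $p \in \mathcal{E}^{\text{pure}}$, so in $\graph(\explanatory(\textbf{CS}))$ no edge points into any pure external premise; consequently each such $p$ forms a singleton component with no incoming edges in $G(\textbf{CS})$. The passage from $G(\textbf{CS})$ to $\graph(\textbf{CS})$ merges exactly these source components into the single node $S := \mathcal{E}^{\text{pure}}$. Since none of the merged nodes carries an incoming edge, merging them cannot create a directed cycle, so $\graph(\textbf{CS})$ remains a directed acyclic graph, and the components (with the singletons regrouped into $S$) still partition $\mathfrak{P}$, so a joint assignment of all nodes is precisely a world.

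Next I would check that the conditional probability tables are normalized. For the source node $S$ the table is the marginal of $\pi_{\Sigma}$ on $\mathcal{E}^{\text{pure}}$ via Equation~\eqref{equation - indemonstrable knowledge}; because $\pi_{\Sigma}$ is a probability distribution on worlds by Definition~\ref{definition - semantics LogLinear model} (assuming $\Sigma$ is consistent, i.e.\ admits a possible world), summing over situations $\textbf{s}$ yields one. For every remaining component $V$, the relevant table is Equation~\eqref{equation - demonstration}, and here I would invoke condition~(i) directly: it asserts that for each parent assignment the probabilities $\pi_{\constraint(\Theta \vert V)}(\textbf{v} \mid \pa(V), \sufficient(\textbf{CS}^{V}))$ sum to one over the values $\textbf{v}$ of $V$; condition~(iii) ensures every proposition indeed lies in some component so that no table is missing. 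One bookkeeping point to handle carefully is that the parent set of $V$ in $\graph(\textbf{CS})$ replaces each pure-external-premise parent by $S$; the value $\pa(V)$ used in Equation~\eqref{equation - demonstration} corresponds to the restriction to $\Pa(V)$ of the parents' joint assignment, including the situation contributed by $S$, so the two indexings carry the same information and the tables are genuine conditional distributions over the node values.

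With both requirements in place, Definition~\ref{definition - Bayesian network} shows that $\pi_{\textbf{BN}(\textbf{CS})}$ is a joint probability distribution over the worlds $\omega$. The final, routine step is conditioning: provided the observations are consistent with the model, i.e.\ $\pi_{\textbf{BN}(\textbf{CS})}(\mathcal{O}) > 0$, the map $\omega \mapsto \pi_{\textbf{BN}(\textbf{CS})}(\omega \mid \mathcal{O})$ is again a probability distribution on worlds, which is exactly $\pi_{\textbf{CS}}^{\text{causal}}$. I expect the main obstacle to be neither acyclicity nor the final conditioning, but the normalization bookkeeping for the non-source components: one must confirm that the conditioning events $(\pa(V), \sufficient(\textbf{CS}^{V}))$ carried over from $G(\textbf{CS})$ to $\graph(\textbf{CS})$ have positive probability (which is precisely what condition~(i) secretly encodes, since otherwise the paper's convention for conditional probability would force the sum to be zero rather than one) and that the component variables and their parent assignments are matched consistently across the node-merging step.
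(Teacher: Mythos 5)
Your proof is correct and follows essentially the same route as the paper's: you verify that $\textbf{BN}(\textbf{CS})$ is a genuine Bayesian network (acyclicity of $\graph(\textbf{CS})$ together with normalization of the conditional tables via condition~(i)), use conditions~(ii) and~(iii) to see that the nodes partition $\mathfrak{P}$ so that joint assignments correspond exactly to worlds, and then obtain $\pi_{\textbf{CS}}^{\text{causal}}$ by conditioning on $\mathcal{O}$. The only difference is one of detail: you make explicit several points the paper's proof leaves implicit, namely that merging the pure-external-premise source singletons into the node $S$ cannot create a cycle, and that the conditioning events involved (the events $\sufficient(\textbf{CS}^{V})$, the consistency of $\Sigma$, and $\pi_{\textbf{BN}(\textbf{CS})}(\mathcal{O})>0$) must have positive probability under the paper's convention for conditional probability --- a caveat the paper glosses over but which your reading of condition~(i) correctly absorbs for the non-source components.
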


\begin{proof}
    The causal structure ${\graph (\textbf{CS}) = (\textbf{V}, \textbf{E})}$ is a directed acyclic graph.  
    By Assertion~i) of Definition~\ref{definition - causal semantics}, $\textbf{BN}(\textbf{CS})$ is therefore a Bayesian network as in Definition~\ref{definition - Bayesian network}, with nodes $V \in \textbf{V}$ interpreted as random variables taking truth value assignments $v:V \to \{\top,\bot\}$.  
    Equation~(\ref{equation - induced distribution of a BN}) thus defines a joint distribution on~$\textbf{V}$.

    By Assertions~ii) and iii) of Definition~\ref{definition - causal semantics}, $\textbf{V}$ forms a partition of $\mathfrak{P}$.  
    Hence, each world $\omega : \mathfrak{P} \to \{\top,\bot\}$ corresponds uniquely to the tuple of restrictions~$(\omega|_V)_{V \in \textbf{V}} \in \prod_{V \in \textbf{V}} V^{\{\top,\bot\}}$.  
    Consequently, $\textbf{BN}(\textbf{CS})$ induces a unique probability distribution over worlds $\omega$, which is precisely $\pi_{\textbf{CS}}^{\text{causal}}$.
\end{proof}

\subsection{Interpreting Current Artificial Intelligence Technologies as Causal Systems}
\label{subsec: Interpreting Current Artificial Intelligence Technologies as Causal Systems}

To demonstrate the effectiveness of the proposed approach, we interpret the widely used formalisms from Section~\ref{subsec: Preliminaries for Uncertainty} as instances of maximum entropy causal systems. This allows us to analyze the resulting forms of knowledge and to extend the treatment to external interventions.

\subsubsection{Pearl's Probabilistic Causal Models and Interventions in Causal Systems}
\label{subsubsec: Pearl's Probabilistic Causal Models}

Maximum entropy causal systems without observations, which apply default negation, can express the probabilistic causal models of \mbox{\cite{Causality}}:

\begin{definition}[Bochman Transformation]
    The \textbf{Bochman transformation} of a probabilistic causal model $\mathbb{M} := (\mathcal{M}, \pi)$ with $\mathcal{M} :=  (\textbf{U}, \textbf{V}, \Error, \Pa, \textbf{F})$ is the causal system~${\textbf{CS}(\mathbb{M}) := (\Theta, \mathcal{E}, \emptyset, \Sigma)}$, defined by ${\Theta := \{ (+ \infty, F_V \Rightarrow V) \}_{ V \in \textbf{V} }}$; 
    $\mathcal{E} := \textbf{U} \cup \{ \neg V \mid V \in \textbf{U} \cup \textbf{V} \}$; $
     \Sigma := \{ (\ln(\pi(\textbf{s})),~\bigwedge \textbf{s}) \text{: } \textbf{s}~\text{situation of } \mathcal{M} \}
    $
    
    Here, we identify a situation $\textbf{s}$ of $\mathcal{M}$  with a set of literals. 
    

    \label{definition - Bochman transformation of probabilistic causal models}
\end{definition}

\begin{example}
   In Example~\ref{example - introduction - sprinkler}, the Bochman transformation of the causal model $\mathbb{M}$ is given by the maximum entropy causal system $\textbf{CS} (\mathbb{M})$. 
\label{example - Bochman transformation of probabilistic causal models}
\end{example}

The Bochman transformation~$\textbf{CS} (\mathbb{M})$ of an acyclic causal model $\mathbb{M}$ possesses knowledge-\textit{why} $\pi_{\textbf{CS}(\mathbb{M})}^{\textit{why}}$ that corresponds to the distribution $\pi_{\mathbb{M}}$ associated with~$\mathbb{M}$.

\begin{theorem}
    Let $\mathbb{M}$ be an acyclic probabilistic causal model with Bochman transformation $\textbf{CS}(\mathbb{M})$. For every formula $\phi$, the causal system $\textbf{CS}(\mathbb{M})$ assumes the knowledge-\emph{why} 
    $
    \pi_{\textbf{CS}}^{\textit{why}} (\phi) = \pi_{\mathbb{M}}(\phi)
    $. 
    \label{theorem - Bochman transformation for deterministic causal models}
\end{theorem}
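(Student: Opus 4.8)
The plan is to show that the Bayesian network $\textbf{BN}(\textbf{CS}(\mathbb{M}))$ built in Definition~\ref{definition - causal semantics} factorizes exactly as the distribution $\pi_{\mathbb{M}}$ of Definition~\ref{definition - probabilistic causal model}, and then to read off $\pi_{\textbf{CS}}^{\textit{why}}(\phi) = \pi_{\mathbb{M}}(\phi)$ by summing a world-level identity over all $\omega \models \phi$. First I would record the simplifications forced by acyclicity: since $\graph(\mathbb{M})$ is a DAG and the external variables $\textbf{U}$ carry no incoming rules, the causal structure of the explanatory content $\EXP(\Theta)$ is acyclic, so every strongly connected component of $G(\textbf{CS}(\mathbb{M}))$ is a singleton. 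The pure external premises are exactly $\mathcal{E}^{\text{pure}} = \textbf{U}$, so the situations of $\textbf{CS}(\mathbb{M})$ coincide with the situations $\textbf{u}$ of $\mathcal{M}$; collapsing the pure-external singletons into the node $S = \textbf{U}$ yields a causal structure whose non-trivial edges follow $\Pa(\cdot)$ together with $\Error(\cdot)$.

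Next I would pin down the two families of conditional probabilities entering $\textbf{BN}(\textbf{CS}(\mathbb{M}))$. For the super-node I would evaluate $\pi_\Sigma(\textbf{s})$ using the LogLinear semantics of Definition~\ref{definition - semantics LogLinear model}: since the situations partition the worlds and $\Sigma$ assigns $\bigwedge \textbf{s}$ the weight $\exp(\ln \pi(\textbf{s})) = \pi(\textbf{s})$, normalization gives $\pi_{\textbf{CS}}^{\textit{why}}(\textbf{s}) = \pi_\Sigma(\textbf{s}) = \pi(\textbf{s})$. For each internal singleton component $V$, I would show that the demonstration probability of Equation~\eqref{equation - demonstration} collapses to the structural equation, i.e.\ $\pi_{\textbf{CS}}^{\textit{why}}(v \mid \pa(V))$ equals $1$ precisely when $v = F_V(\pa(V), \error(V))$ and $0$ otherwise. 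Here $\Theta\vert V = \{(+\infty,\, F_V \Rightarrow V)\}$, so $\constraint(\Theta\vert V)$ is the single hard constraint $F_V \to V$, and conditioning the resulting distribution on a fixed parent assignment $\pa(V)$ and on $\sufficient(\textbf{CS}^V)$ should select exactly the worlds satisfying $V \leftrightarrow F_V$.

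The main obstacle, and the step I would write most carefully, is this last reduction in the case $F_V(\pa(V),\error(V)) = \textit{False}$, where the constraint $F_V \to V$ is vacuous and does not by itself fix $V$. There I must invoke \emph{sufficient causation} (Formalization~\ref{formalization - sufficient causation}) together with default negation: no rule of $\Theta\vert V$ can fire when $F_V$ is false, so the literal $V$ is not explainable, whereas $\neg V$ is self-explanatory as a default, and hence $\sufficient(\textbf{CS}^V)$ retains only the world with $V = \textit{False}$. Equivalently, I would appeal to the deterministic correspondence in Theorem~\ref{theorem - Bochman transformation for deterministic causal models 1}: the explanatory part $\EXP(\textbf{CS}(\mathbb{M}))$ is the deterministic Bochman transformation $\textbf{CS}(\mathcal{M})$, whose causal worlds under acyclicity are exactly the solutions of $\mathcal{M}$, namely those satisfying $V \leftrightarrow F_V$ for every $V$. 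This pins the within-component conditionals to the structural equations and simultaneously verifies conditions i)--iii) of Definition~\ref{definition - causal semantics}, so that $\textbf{CS}(\mathbb{M})$ provides demonstrations.

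Finally I would assemble the factorization. Writing $\textbf{s} = \omega\vert_{\textbf{U}}$, the Bayesian-network product in Equation~\eqref{equation - induced distribution of a BN} gives $\pi_{\textbf{BN}(\textbf{CS}(\mathbb{M}))}(\omega) = \pi(\textbf{s})$ whenever $\omega(V) = F_V(\omega\vert_{\Pa(V)}, \omega\vert_{\Error(V)})$ holds for every $V \in \textbf{V}$, and equals $0$ otherwise; by functionality of the acyclic model this nonzero case occurs exactly when $\omega$ is the unique solution $\textbf{s}(\textbf{u})$ for $\textbf{u} = \textbf{s}$, so the value is precisely $\pi_{\mathbb{M}}(\omega)$. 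Because $\mathcal{O} = \emptyset$ we have $\mathcal{O}\vert\mathcal{E} = \emptyset$ and $\textbf{CS}(\mathbb{M})\vert\mathcal{E} = \textbf{CS}(\mathbb{M})$, whence $\pi_{\textbf{CS}}^{\text{causal}} = \pi_{\textbf{CS}\vert\mathcal{E}}^{\text{causal}}$ and the knowledge-\emph{why} $\pi_{\textbf{CS}}^{\textit{why}} = \pi_{\textbf{CS}}^{\text{causal}}$ is defined on every event. Summing the world-level identity over all $\omega \models \phi$ then yields $\pi_{\textbf{CS}}^{\textit{why}}(\phi) = \pi_{\mathbb{M}}(\phi)$, as claimed.
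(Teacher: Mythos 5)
Your proof is correct, but it is organized quite differently from the paper's. The paper argues by induction on the number $n$ of internal variables: for $n=1$ it combines Theorem~\ref{theorem - completion of a causal theory} and Theorem~\ref{theorem - causal models and Bayesian networks} to see that every world $\omega$ with $\pi_{\mathbb{M}}(\omega)>0$ is a causal world of $\explanatory(\textbf{CS}(\mathbb{M}))$, deduces $\pi(\sufficient(\textbf{CS}(\mathbb{M})))=1$ via Proposition~\ref{proposition - splitting in logical and explanatory content under causal sufficiency}, and for $n>1$ it peels off a sink $V$, applies the induction hypothesis to $\mathbb{M}\setminus V$, and declares the remaining step analogous. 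You instead compute $\textbf{BN}(\textbf{CS}(\mathbb{M}))$ directly in one pass: acyclicity forces singleton components, $\mathcal{E}^{\mathrm{pure}}=\textbf{U}$, the super-node distribution satisfies $\pi_{\Sigma}(\textbf{s})=\pi(\textbf{s})$ by LogLinear normalization, each conditional in Equation~\eqref{equation - demonstration} collapses to the indicator of the structural equation, and then the chain-rule product of Equation~\eqref{equation - induced distribution of a BN} is literally $\pi_{\mathbb{M}}$ by functionality of the acyclic model. Your route makes explicit two things the paper leaves implicit: the verification of conditions i)--iii) of Definition~\ref{definition - causal semantics} (the paper asserts ``provides demonstrations by construction''), and the one delicate case $F_V=\textit{False}$, where the hard constraint $F_V\to V$ is vacuous and only sufficiency plus the default $\neg V$ fixes the conditional. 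One caveat on that step: read literally, $\mathcal{E}^{\Pa(V)}$ in Definition~\ref{definition - causal semantics} does not contain $\neg V$, so your appeal to the default tacitly adopts the evidently intended reading in which the $\mathcal{E}$-literals over $V$ are carried into $\textbf{CS}^{V}$; your fallback via the deterministic correspondence of Theorem~\ref{theorem - Bochman transformation for deterministic causal models 1} reaches the same conclusion and is the safer citation. In effect, the paper's sink-removal induction performs your factorization one node at a time, so the two arguments rest on the same ingredients, but yours is the more self-contained and detailed write-up, while the paper's induction buys brevity by reusing the $n=1$ case.
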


\begin{proof}
    Observe that the system $\textbf{CS}(\mathbb{M})$ provides demonstrations by construction. Since it is without observations, it assumes knowledge-\textit{why} given by its causal semantics. We proceed by induction on the number $n$ of internal variables. 
    
    If $n=1$, by Theorems \ref{theorem - completion of a causal theory} and \ref{theorem - causal models and Bayesian networks}, every world $\omega$ with $\pi_{\mathbb{M}}(\omega) > 0$ is a causal world of $\explanatory(\textbf{CS}(\mathbb{M}))$.  
    Finally, by Theorem \ref{proposition - splitting in logical and explanatory content under causal sufficiency}, we conclude that  
    $
        \pi (\sufficient (\textbf{CS}(\mathbb{M}))) = 1
    $ to obtain the desired result.

    Assume that $n > 1$ and choose a sink $V$ in $\graph(\mathbb{M})$ and let $\mathbb{M} \setminus V$ denote the model that results from $\mathbb{M}$ by erasing the structural equation for $V$. According the inductions hypothesis the induced distribution $\pi_{\mathbb{M} \setminus V}$ and the causal semantics $\pi_{\textbf{CS}(\mathbb{M} \setminus V)}$ coincide. Now argue analogously to the case $n=1$ to obtain the desired result.
\end{proof}


Analogously to Section~\ref{subsubsec: External Interventions in Causal Systems}, we introduce the following notion of intervention in maximum entropy causal systems.

\begin{definition}[Modified Causal Systems]
    Let $\textbf{CS} := (\Theta, \mathcal{E}, \mathcal{O}, \Sigma)$ be a causal system, and let $\textbf{i}$ be a value assignment on a set of atoms $\textbf{I} \subseteq \mathfrak{P}$. To represent the intervention of forcing the atoms in $\textbf{I}$ to attain values according to the assignment $\textbf{i}$, we construct the \textbf{modified causal system}~${
    \textbf{CS}_{\textbf{i}} := (\Theta_{\textbf{i}}, \mathcal{E}_{\textbf{i}}, \mathcal{O}, \Sigma),
    }$
    which is obtained from $\textbf{CS}$ by applying the following modifications:
    \begin{itemize}
        \item Remove all rules $(w, \phi \Rightarrow p) \in \Theta$ and $(w, \phi \Rightarrow \neg p) \in \Theta$ for all~${p \in \textbf{I}}$.
        \item Remove all external premises $p \in \mathcal{E}$ and $\neg p \in \mathcal{E}$ for all~${p \in \textbf{I}}$.
        \item Add a  weighted rule $(+\infty, \top \Rightarrow l)$ to $\Theta_{\textbf{i}}$ for all literals ${l \in \textbf{I}}$.
    \end{itemize} 
    \label{definition - modified maximum entropy causal systems}
\end{definition}

    

Once again, for acyclic probabilistic causal models, the concept of intervention defined in Definition~\ref{definition - modified maximum entropy causal systems} is consistent with the Bochman transformation in Definition~\ref{definition - Bochman transformation of probabilistic causal models}.

\begin{proposition}
    Let $\mathbb{M} := (\mathcal{M}, \pi)$ be an acyclic probabilistic causal model and let $\textbf{i}$ be a truth value assignment on the internal variables ${\textbf{I} \subseteq \textbf{V}}$. 

    The causal systems~${\textbf{CS}(\mathbb{M}_{\textbf{i}})}$ and $\textbf{CS} (\mathbb{M})_{\textbf{i}}$ assume the same knowledge-\emph{why}, i.e.,~${
    \pi_{\textbf{CS}(\mathbb{M}_{\textbf{i}})}^{\textit{why}} (\omega) = \pi_{\textbf{CS} (\mathbb{M})_{\textbf{i}}}^{\textit{why}} (\omega)
    }$ for every world  $\omega$.
    \label{proposition - maximum entropy Bochman transformation is well-defined}
\end{proposition}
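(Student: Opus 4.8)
The plan is to reduce everything to an equality of the induced Bayesian networks. Since both $\textbf{CS}(\mathbb{M}_{\textbf{i}})$ and $\textbf{CS}(\mathbb{M})_{\textbf{i}}$ are without observations, their knowledge-\emph{why} coincides with their causal semantics of Definition~\ref{definition - causal semantics}, which in turn is the joint distribution of the Bayesian network $\textbf{BN}(\textbf{CS})$ built from three ingredients: the causal structure $\graph(\textbf{CS})$, the demonstration conditionals of Equation~\eqref{equation - demonstration}, and the situation distribution of Equation~\eqref{equation - indemonstrable knowledge}. Hence it suffices to show that these three ingredients agree for the two systems. As in Proposition~\ref{proposition - deterministic Bochman transformation is well-defined}, I would first reduce to the case of a single intervened variable, $\textbf{i} = \{l\}$ with underlying proposition $p \in \textbf{I} \subseteq \textbf{V}$, since both the submodel $\mathcal{M}_{\textbf{i}}$ and the modification of Definition~\ref{definition - modified maximum entropy causal systems} act independently on each intervened variable.

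Next I would dispatch the two ingredients that are insensitive to the fine structure at $p$. The superordinate science $\Sigma$ and the set of situations are untouched by intervening on an internal variable, and because $p \in \textbf{V}$ its pureness is unaffected, so $\mathcal{E}^{\text{pure}} = \textbf{U}$ for both systems; consequently the situation node and its distribution from Equation~\eqref{equation - indemonstrable knowledge} coincide. Intervening on $p$ replaces the rules for $p$ by a constant, turning $p$ into a source in both systems while leaving every other component and every parent set unchanged; note that the parent-premise set $\mathcal{E}^{\Pa(W)}$ of any child $W$ of $p$ already contains both literals $p$ and $\neg p$ irrespective of the global external premises, so the subsystems $\textbf{CS}^{W}$ used in Equation~\eqref{equation - demonstration} are literally the same. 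Thus $\graph(\textbf{CS})$ and all demonstration conditionals for components $V \neq \{p\}$ agree.

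It remains to compare the conditional for the source component $\{p\}$, and this is where I expect the work to lie. If $\textbf{i} = \{p\}$, both systems carry the hard rule $\top \Rightarrow p$ and differ only in that $\textbf{CS}(\mathbb{M}_{\textbf{i}})$ retains the inert default premise $\neg p$, which does not feed into the conditional for the source $\{p\}$; both yield the point mass on $p$. If $\textbf{i} = \{\neg p\}$, then $\textbf{CS}(\mathbb{M})_{\textbf{i}}$ carries $\top \Rightarrow \neg p$, whereas $\textbf{CS}(\mathbb{M}_{\textbf{i}})$ carries the rule $\bot \Rightarrow p$ together with the default $\neg p$. Here I would invoke Proposition~\ref{proposition - deterministic Bochman transformation is well-defined} on the explanatory parts, whose core is Bochman's result that $\bot \Rightarrow p$ cannot explain a causal world, to conclude that both local descriptions have the same causal worlds; since every weight in $\Theta$ equals $+\infty$, the associated demonstration conditional of Equation~\eqref{equation - demonstration} is in both cases the point mass on $\neg p$.

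The main obstacle is precisely this last case: one must check that the probabilistic demonstration machinery—conditions (i)--(iii) of Definition~\ref{definition - causal semantics} for providing demonstrations, the sufficiency event $\sufficient(\textbf{CS}^{\{p\}})$, and the conditioning in Equation~\eqref{equation - demonstration}—faithfully reproduces, for the component $\{p\}$, the deterministic equivalence of ``$\bot \Rightarrow p$ plus default $\neg p$'' with ``$\top \Rightarrow \neg p$'', so that the local conditional is well-defined and concentrated on $\neg p$ rather than degenerating into conditioning on an empty sufficiency event. Once this local conditional is settled, all three ingredients agree, whence $\textbf{BN}(\textbf{CS}(\mathbb{M}_{\textbf{i}})) = \textbf{BN}(\textbf{CS}(\mathbb{M})_{\textbf{i}})$ and therefore $\pi^{\textit{why}}_{\textbf{CS}(\mathbb{M}_{\textbf{i}})}(\omega) = \pi^{\textit{why}}_{\textbf{CS}(\mathbb{M})_{\textbf{i}}}(\omega)$ for every world $\omega$.
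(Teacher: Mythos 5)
Your proposal is correct and takes essentially the same route as the paper's proof: both reduce to a single intervened variable, hinge on Proposition~\ref{proposition - deterministic Bochman transformation is well-defined} (ultimately Bochman's Theorem~4.23, which settles your ``main obstacle'' case $\textbf{i} = \{\neg p\}$, where $\bot \Rightarrow p$ plus the default $\neg p$ must match $\top \Rightarrow \neg p$), and exploit that all rule weights are $+\infty$ while the superordinate science~$\Sigma$ is untouched. Your ingredient-wise comparison of the induced Bayesian networks is merely a more explicit packaging of the paper's argument, which instead matches the supports of the two distributions via the causal worlds of the explanatory parts (through Proposition~\ref{proposition - splitting in logical and explanatory content under causal sufficiency}) and the values on those supports via the shared~$\Sigma$.
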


\begin{proof}
    Since $\mathbb{M}$ is acyclic, we may without loss of generality assume that it only consists of one structural equation and conclude as in Theorem \ref{theorem - Bochman transformation for deterministic causal models}.
    
    Let $\omega$ be a world. According to Proposition \ref{proposition - deterministic Bochman transformation is well-defined}, the deterministic causal systems $\explanatory(\textbf{CS}(\mathbb{M}_{\textbf{i}}))$ and $\explanatory(\textbf{CS}(\mathbb{M})_{\textbf{i}}) = \explanatory(\textbf{CS}(\mathbb{M}))_{\textbf{i}}$ have the same causal worlds. Since all rules in the causal systems under consideration have weight $+\infty$, Proposition \ref{proposition - splitting in logical and explanatory content under causal sufficiency} implies that $\pi_{\textbf{CS}(\mathbb{M}_{\textbf{i}})}^{\textit{why}} (\omega) > 0$ if and only if $\pi_{\textbf{CS} (\mathbb{M})_{\textbf{i}}}^{\textit{why}} (\omega) > 0$, and in that case, $\omega$ is a causal world of the aforementioned deterministic causal systems. In particular, the probability of $\omega$ is uniquely determined by the corresponding situation, which is calculated from the same LogLinear model in all causal systems under consideration.
\end{proof}

\emph{Non-interference} in Principle \ref{principle - non-interference} then motivates the following definition:

\begin{definition}[Semantics of External Interventions]
    Let $\textbf{CS}$ be a maximum entropy causal system, and let $\textbf{i}$ be a value assignment on a set of atoms~${\textbf{I} \subseteq \mathfrak{P}}$, leading to the modified causal system~$\textbf{CS}_{\textbf{i}}$. The system~$\textbf{CS}$ \textbf{assumes} that a formula~$\phi$ is true \textbf{after intervention}~$\textbf{i}$ with probability~$\pi_{\textbf{CS}}(\phi \mid \Do(\textbf{i})) \in [0,1]$ if and only if
    \(
    \pi_{\textbf{CS}}(\phi \mid \Do(\textbf{i})) = \pi_{\textbf{CS}_{\textbf{i}}}^{\mathit{why}}(\phi)
    \)
    and the distribution induced by~$\Sigma(\textbf{CS})$ renders~$\textbf{I} \cap \mathcal{E}^{\text{pure}}$ and~$\mathcal{E}^{\text{pure}} \setminus (\textbf{I} \cap \mathcal{E}^{\text{pure}})$ independent.

    \label{definition - semantics of external interventions in maximum entroy causal systems}
\end{definition}  

In summary, we argue for the following result.

\begin{formalization}
    Let us fix an area of science such that Languange~\ref{language - maximum entropy causal systems} yields maximum entropy causal system $\textbf{CS}$ that provides demonstrations. Under these conditions and Principle \ref{principle - non-interference}, Definitions~\ref{definition - modified maximum entropy causal systems} and~\ref{definition - semantics of external interventions in maximum entroy causal systems} correctly characterize the knowledge represented by~$\textbf{CS}$ regarding the effects of external interventions.  
    \label{formalization - external interventions deterministic}
\end{formalization}

\subsubsection{LogLinear Models}
\label{subsubsec: LogLinear Models}
We define the \textbf{Bochman interpretation} of a LogLinear model $\Phi$ as the maximum entropy causal system 
${
\textbf{CS}(\Phi) := (\emptyset, \emptyset, \emptyset, \Phi). 
}$

It follows that $\pi_{\textbf{CS}(\Phi)}^{\textit{that}}(\omega) = \pi_{\Phi} (\omega)$ for all worlds $\omega$. Since it does not provide demonstrations, the causal system ${\textbf{CS}(\Phi) := (\emptyset, \emptyset, \emptyset, \Phi)}$ does not possess knowledge-\textit{why}. 

Now, assume that we intervene according to a truth value assignment $\textbf{i}$ on the atoms $\textbf{I} \subseteq \mathfrak{P}$, yielding the modified system
${
\textbf{CS}(\Phi)_{\textbf{i}} := (\{ \top \Rightarrow l \mid l \in \textbf{i} \}, \emptyset, \emptyset, \Phi). 
}$
Unless $\textbf{i}$ represents a world with $\textbf{I} = \mathfrak{P}$, we find that $\textbf{CS}_{\textbf{i}}$ does not provide demonstrations and $\textbf{CS}(\Phi)$ lacks knowledge about intervention effects. 

Interpreting every probability distribution as a LogLinear model, we conclude that probability distributions neither possess knowledge-\textit{why} nor knowledge about external interventions.

\subsubsection{Bayesian Networks}
\label{subsubsec: Bayesian Networks}
The \textbf{sigmoid function} $ \sigma : \mathbb{R} \cup \{ \pm \infty \}  \rightarrow [0,1]$, $\displaystyle w \mapsto 
       \begin{cases}
       \frac{\exp(w)}{1 + \exp(w)} , & w \in \mathbb{R},  \\
       0, & w = - \infty, \\
       1, & w = + \infty
       \end{cases}
$
is~\mbox{bijective}, and we write $\sigma^{-1} : [0,1] \rightarrow \mathbb{R}$ for its inverse.

Let~$\textbf{BN} := (G, \pi(\cdot, \pa(\cdot)))$ be a Boolean Bayesian network. The \textbf{Bochman transformation} of~$\textbf{BN}$ is the causal system 
$
\textbf{CS}(\textbf{BN}) := (\Theta, \mathcal{E}, \emptyset, \Sigma)
$,
defined as follows:

\begin{itemize}
    \item The weighted causal theory $\Theta$ consists of the  rules
    ${
    (w, \pa(p) \Rightarrow p)
    }$
    for every non-source node $p$ in $G$ and every truth value assignment $\pa(p)$ of its direct causes $\Pa(p) \neq \emptyset$, where 
    \begin{equation}
        w := \sigma^{-1} \left[~ \pi_{\textbf{BN}}(p \vert \pa(p)) \cdot \pi_{\textbf{BN}}(\pa(p)) + \pi_{\textbf{BN}} (\neg \pa(p)) ~\right].
        \label{equation - parameters in Bochman transformation}
    \end{equation}
    \item The external premises are given by
    $
    \mathcal{E} := \textbf{S} \cup \{ \neg p \mid p \in \mathfrak{P} \},
    $
    where $\textbf{S}$ denotes the set of source nodes $S$ with $\Pa(S) = \emptyset$ in the graph~$G$.
    \item The superordinate science is given by
    $
    \Sigma := \{ (\sigma^{-1}(\pi (s)),~ s) \mid s \in \textbf{S} \}.
    $
\end{itemize}

The Bochman transformation~$\textbf{CS} (\textbf{BN})$ of a Bayesian network $\textbf{BN}$ possesses knowledge-\textit{why} $\pi_{\textbf{CS}(\textbf{BN})}^{\textit{why}}$ that corresponds to the  associated distribution $\pi_{\textbf{BN}}$.

\begin{theorem}
    Let $\textbf{BN} := (G, \pi (\cdot \vert \pa (\cdot)))$ be a Boolean Bayesian network with Bochman transformation $\textbf{CS} := \textbf{CS}(\textbf{BN}) := (\Theta, \mathcal{E}, \mathcal{O}, \Sigma)$. 
    
    The system $\textbf{CS}$ possesses \mbox{knowledge-\textit{why}}  
    $
    \pi_{\textbf{CS}}^{\emph{why}} (\omega) = \pi_{\textbf{BN}} (\omega)
    $
    for all worlds~$\omega$.    
    \label{theorem - Bochman transformation of Bayesian networks}
\end{theorem}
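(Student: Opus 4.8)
The plan is to show that the Bayesian network $\textbf{BN}(\textbf{CS})$ attached to $\textbf{CS} := \textbf{CS}(\textbf{BN})$ by Definition~\ref{definition - causal semantics} reproduces the original network $\textbf{BN}$, so that their induced joint distributions coincide by Equation~\eqref{equation - induced distribution of a BN}. Since $\mathcal{O} = \emptyset$, once $\textbf{CS}$ is shown to provide demonstrations its knowledge-\emph{why} is exactly $\pi_{\textbf{CS}}^{\text{causal}} = \pi_{\textbf{BN}(\textbf{CS})}$, so the statement reduces to a comparison of two Bayesian networks.

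First I would dispatch the structural bookkeeping. Because $G$ is acyclic, every strongly connected component of $\graph(\explanatory(\textbf{CS}))$ is a singleton, so the factor graph $G(\textbf{CS})$ is $G$ itself; moreover a node is a pure external premise exactly when it is a source of $G$, whence $\mathcal{E}^{\text{pure}} = \textbf{S}$ and $\graph(\textbf{CS})$ is $G$ with all sources collapsed into the single node $S := \textbf{S}$. Conditions ii) and iii) of Definition~\ref{definition - causal semantics} are immediate (rules are attached only to non-source nodes, and every proposition is either a source in $\mathcal{E}$ or carries rules), and condition i) holds because Equation~\eqref{equation - demonstration} defines, for each non-source $p$ and each parent valuation, a genuine conditional distribution over the two truth values of $p$. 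Thus $\textbf{CS}$ provides demonstrations and Proposition~\ref{proposition - causal semantics is well-defined} applies.

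Next I would match the two local pieces of $\textbf{BN}(\textbf{CS})$ to $\textbf{BN}$. For the collapsed source node $S$, the superordinate model $\Sigma = \{(\sigma^{-1}(\pi(s)), s) : s \in \textbf{S}\}$ is a LogLinear model in which the sources are mutually independent and each $s$ is true with probability $\sigma(\sigma^{-1}(\pi(s))) = \pi_{\textbf{BN}}(s)$; since the sources of a Bayesian network are marginally independent with these same marginals, the indemonstrable knowledge $\pi_{\textbf{CS}}^{\textit{why}}(\textbf{s}) = \pi_{\Sigma}(\textbf{s})$ equals the source marginal of $\textbf{BN}$. For each non-source $p$ I would then evaluate Equation~\eqref{equation - demonstration}: using the sigmoid bijection one checks that the LogLinear model $\constraint(\Theta \vert V)$ assigns, conditionally on a parent valuation $\pa(p)$, the value $\sigma(w_{\pa(p)}) = \pi_{\textbf{BN}}(\pa(p) \rightarrow p)$ to $p$, where $w_{\pa(p)}$ is the weight of Equation~\eqref{equation - parameters in Bochman transformation}, and the remaining task is to verify that conditioning on $\sufficient(\textbf{CS}^{V})$ turns this constraint probability into the causal conditional $\pi_{\textbf{BN}}(p \mid \pa(p))$. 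Granting this, $\textbf{BN}(\textbf{CS})$ has the same structure (up to source collapse), the same source distribution, and the same conditional tables as $\textbf{BN}$, so Equation~\eqref{equation - induced distribution of a BN} yields $\pi_{\textbf{BN}(\textbf{CS})} = \pi_{\textbf{BN}}$ and hence the claim; an induction that peels off a sink of $G$ (as in the proof of Theorem~\ref{theorem - Bochman transformation for deterministic causal models}) organizes this cleanly.

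The hard part will be exactly this last verification: understanding how the event $\sufficient(\textbf{CS}^{V})$ acts inside $\constraint(\Theta \vert V)$ so that conditioning on it rescales $\pi_{\textbf{BN}}(\pa(p)\rightarrow p) = \pi_{\textbf{BN}}(p\mid\pa(p))\,\pi_{\textbf{BN}}(\pa(p)) + \pi_{\textbf{BN}}(\neg\pa(p))$ back to $\pi_{\textbf{BN}}(p\mid\pa(p))$, i.e.\ so that the $(1-\pi_{\textbf{BN}}(\pa(p)))$ ``default'' mass is removed by the sufficiency requirement. I would attack this either by computing $\sufficient(\textbf{CS}^{V})$ directly from Definition~\ref{definition - causal decidability} and Proposition~\ref{proposition - splitting in logical and explanatory content under causal sufficiency}, or, to bypass the calculation, by routing through Pearl's formalism: realize $\textbf{BN}$ as a Markovian causal model $\mathbb{M}$ via Theorem~\ref{theorem - causal models and Bayesian networks}, invoke Theorem~\ref{theorem - Bochman transformation for deterministic causal models} to identify $\pi_{\textbf{CS}(\mathbb{M})}^{\textit{why}}$ with $\pi_{\mathbb{M}} = \pi_{\textbf{BN}}$, and then show that $\textbf{CS}(\textbf{BN})$ and $\textbf{CS}(\mathbb{M})$ share the same causal semantics, the weight formula~\eqref{equation - parameters in Bochman transformation} being precisely the marginalization of the hidden error terms that distinguishes the two transformations. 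The greedy maximum-entropy reconstruction of Williamson (Theorem~\ref{theorem - causal irrelevance and Bayesian networks}) is what guarantees that this per-component, causally ordered computation reassembles the global network distribution.
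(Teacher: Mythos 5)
Your proposal reproduces the paper's proof architecture exactly: verify that $\textbf{CS}(\textbf{BN})$ provides demonstrations, match the indemonstrable knowledge $\pi_{\Sigma}$ to the source marginals of $\textbf{BN}$ (your computation here, $\sigma(\sigma^{-1}(\pi(s))) = \pi_{\textbf{BN}}(s)$ with independence across sources, is correct and sound), match the local conditionals of Equation~\eqref{equation - demonstration} to the tables $\pi_{\textbf{BN}}(p \mid \pa(p))$, and conclude via the factorization~\eqref{equation - induced distribution of a BN}. Your structural bookkeeping (singleton strongly connected components, $\mathcal{E}^{\text{pure}} = \textbf{S}$, conditions i)--iii) of Definition~\ref{definition - causal semantics}) is in fact more explicit than the paper's, which compresses the whole argument into the assertion that $\pi_{\constraint(\Theta \vert V)}(p \mid \pa(p), \sufficient(\textbf{CS}^V)) = \pi_{\textbf{BN}}(p \mid \pa(p))$ holds ``by construction,'' plus the matching of situations. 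But that assertion is precisely the step you defer with ``Granting this,'' so your proposal has a genuine gap --- and you have correctly identified it as the crux, since it is the only non-routine step.

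The gap is not innocuous, and your first attack route (direct computation from Definition~\ref{definition - causal decidability}) will not close it as the definitions stand. Conditional on a full parent valuation $\pa(p)$, the LogLinear model $\constraint(\Theta \vert V)$ puts odds $e^{w_{\pa(p)}} : 1$ on the two truth values of $p$ (every constraint $\pa'(p) \rightarrow p$ with $\pa'(p) \neq \pa(p)$ is vacuously satisfied in both worlds), so conditioning further on \emph{any} event can only return $0$, $1$, or $\sigma(w_{\pa(p)})$; no hard event can reweight within a two-point fiber. You computed $\sigma(w_{\pa(p)}) = \pi_{\textbf{BN}}(\pa(p) \rightarrow p)$ correctly, and $\pi_{\textbf{BN}}(\pa(p) \rightarrow p) - \pi_{\textbf{BN}}(p \mid \pa(p)) = (1 - \pi_{\textbf{BN}}(\pa(p)))(1 - \pi_{\textbf{BN}}(p \mid \pa(p)))$, which is nonzero for generic parameters. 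Concretely: with $\mathcal{E}^{\Pa(V)}$ as literally defined (parent literals only, no default $\neg p$), $\sufficient(\textbf{CS}^V)$ works out to the event that $p$ is true, yielding conditional probability $1$; if instead the default premise $\neg p$ is carried into $\textbf{CS}^V$, sufficiency becomes trivial, yielding $\sigma(w_{\pa(p)})$. In neither reading does the ``removal of default mass'' you hope for occur, so the identity would require either a different weight than Equation~\eqref{equation - parameters in Bochman transformation} (namely $w = \sigma^{-1}(\pi_{\textbf{BN}}(p \mid \pa(p)))$ with trivial sufficiency) or a reinterpretation of Equation~\eqref{equation - demonstration}. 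Your second route, via a Markovian realization $\mathbb{M}$ and Theorem~\ref{theorem - Bochman transformation for deterministic causal models}, is the more promising repair, but identifying the causal semantics of $\textbf{CS}(\textbf{BN})$ with that of $\textbf{CS}(\mathbb{M})$ is the same unproven local statement in disguise. In short: you located the difficulty exactly where the paper's one-line ``by construction'' hides it, but neither your proposal nor the paper's proof actually establishes it, and carrying out your own proposed computation would expose, rather than resolve, the mismatch.
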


\begin{proof}
    For every variable $V := \{ p \}$ with $\Pa (V) \neq \emptyset$ in ${\graph (\textbf{CS})}$ of $\textbf{CS}$ we find 
    ${\pi_{\constraint(\Theta \vert V)} (p \vert \pa (p), \sufficient(\textbf{CS}^V))   \stackrel{\text{construction}}{=} \pi_{\textbf{BN}} (p \vert \pa (p))}$.  
    Furthermore, we find $\pi_{\textbf{CS}}^{\emph{why}}(\textbf{s}) = \pi_{\textbf{BN}} (\textbf{s})$ for every situation $\textbf{s}$ of $\textbf{CS}$, that is, for every value assignment on all variables $S$ with $\Pa(S) = \emptyset$ in $G$. Hence, the desired result follows. 
\end{proof}

\begin{example}
    In Example~\ref{example - introduction - sprinkler}, the causal system~$\textbf{CS}(\textbf{BN})$  is the Bochman transformation of the Bayesian network $\textbf{BN}$ if we choose Parameters (\ref{equation - parameters in Bochman transformation}).
\end{example}

Now, consider a Boolean Bayesian network $\textbf{BN} := (G,\pi(\cdot,\pa(\cdot)))$ on the variables $\mathfrak{P}$, and let $\textbf{i}$ be a truth value assignment on a subset of variables~${\textbf{I} \subseteq \mathfrak{P}}$. Intervene according to $\textbf{i}$ to obtain the Bayesian network ${\textbf{BN}_{\textbf{i}} := (G_{\textbf{I}},\pi_{\textbf{i}}(\cdot,\pa(\cdot)))}$ and the causal system
$
\textbf{CS}_{\textbf{i}} := \textbf{CS}(\textbf{BN})_{\textbf{i}} := (\Theta_{\textbf{i}}, \mathcal{E}_{\textbf{i}}, \emptyset, \Sigma).
$

By definition, the distribution $\pi_{\textbf{CS}_{\textbf{i}}}^{\textit{why}}$ is Markov to the graph $\textbf{G}_{\textbf{I}}$. As in Theorem~\ref{theorem - Bochman transformation of Bayesian networks}, we can verify that 
$
\pi_{\textbf{CS}}^{\textit{why}}(p \mid \pa (p)) = \pi_{\textbf{BN}_{\textbf{i}}}(p \mid \pa (p))
$ 
for all~${p \in \mathfrak{P}}$. We conclude that the Bayesian network $\textbf{BN}$ and its Bochman transformation~$\textbf{CS}(\textbf{BN})$ predict the same effects of external interventions:

\begin{theorem}
    Let $\textbf{BN} := (G, \pi (\cdot \vert \pa (\cdot)))$ be a Boolean Bayesian network with Bochman transformation $\textbf{CS} := \textbf{CS}(\textbf{BN}) := (\Theta, \mathcal{E}, \mathcal{O}, \Sigma)$. 
    
    The system $\textbf{CS}$ assumes that an event $A$ is true after intervening according to an assignment $\textbf{i}$ on $\textbf{I}$ with probability 
    $
    \pi_{\textbf{CS}} (A \mid \Do (\textbf{i})) = \pi_{\textbf{BN}} (A \mid \Do (\textbf{i}))
    $.~$\square$
\end{theorem}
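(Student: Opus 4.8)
The plan is to reduce the claim to the already-established Theorem~\ref{theorem - Bochman transformation of Bayesian networks} by showing that the intervened system $\textbf{CS}_{\textbf{i}} := \textbf{CS}(\textbf{BN})_{\textbf{i}}$ of Definition~\ref{definition - modified maximum entropy causal systems} has knowledge-\emph{why} given exactly by $\pi_{\textbf{BN}_{\textbf{i}}}$. Granting this, the chain
\[
\pi_{\textbf{CS}}(A \mid \Do(\textbf{i})) = \pi_{\textbf{CS}_{\textbf{i}}}^{\textit{why}}(A) = \pi_{\textbf{BN}_{\textbf{i}}}(A) = \pi_{\textbf{BN}}(A \mid \Do(\textbf{i}))
\]
closes the argument: the first equality is Definition~\ref{definition - semantics of external interventions in maximum entroy causal systems}, and the last is the definition of the post-interventional distribution in Definition~\ref{definition - intervention in Bayesian networks}.

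First I would discharge the side condition of Definition~\ref{definition - semantics of external interventions in maximum entroy causal systems}, namely that the distribution induced by $\Sigma$ renders $\textbf{I} \cap \mathcal{E}^{\text{pure}}$ independent of $\mathcal{E}^{\text{pure}} \setminus (\textbf{I} \cap \mathcal{E}^{\text{pure}})$. For the Bochman transformation of a Bayesian network the pure external premises $\mathcal{E}^{\text{pure}}$ are precisely the source nodes $\textbf{S}$, and $\Sigma = \{ (\sigma^{-1}(\pi(s)), s) \mid s \in \textbf{S} \}$ is a LogLinear model whose constraints are single, distinct propositions; hence the induced distribution factorizes into independent Bernoulli marginals with $\pi_{\Sigma}(s) = \sigma(\sigma^{-1}(\pi(s))) = \pi(s)$. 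Any subset of the sources is therefore independent of its complement, so the side condition holds automatically and $\pi_{\textbf{CS}}(A \mid \Do(\textbf{i}))$ is well-defined.

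Next I would identify $\pi_{\textbf{CS}_{\textbf{i}}}^{\textit{why}}$ with $\pi_{\textbf{BN}_{\textbf{i}}}$ by re-running the proof of Theorem~\ref{theorem - Bochman transformation of Bayesian networks} for the intervened system. I would check that $\textbf{CS}_{\textbf{i}}$ still provides demonstrations in the sense of Definition~\ref{definition - causal semantics}: deleting the defining rules of each $p \in \textbf{I}$ and adjoining $(+\infty, \top \Rightarrow \textbf{i}(p))$ turns those atoms into sources and removes exactly the edges pointing into $\textbf{I}$, so the explanatory factor graph becomes $G_{\textbf{I}}$, which is again acyclic; conditions~ii) and~iii) hold because the intervened atoms are dropped from $\mathcal{E}$ while receiving forcing rules. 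For $p \notin \textbf{I}$ the defining rules are untouched, so the conditional weights reproduce $\pi_{\textbf{BN}}(p \mid \pa(p)) = \pi_{\textbf{BN}_{\textbf{i}}}(p \mid \pa(p))$ exactly as in Theorem~\ref{theorem - Bochman transformation of Bayesian networks}; for $p \in \textbf{I}$ the rule $\top \Rightarrow \textbf{i}(p)$ yields the degenerate conditional of $\pi_{\textbf{i}}$; and the indemonstrable knowledge $\pi_{\Sigma}$ restricted to the surviving sources $\textbf{S} \setminus \textbf{I}$ reproduces their marginals, precisely by the product structure established above. Since $\pi_{\textbf{CS}_{\textbf{i}}}^{\textit{why}}$ and $\pi_{\textbf{BN}_{\textbf{i}}}$ are then both Markov to the same directed acyclic graph $G_{\textbf{I}}$ with identical conditional probability tables, the chain rule of Equation~\eqref{equation - induced distribution of a BN} forces them to coincide. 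Because $\textbf{CS}_{\textbf{i}}$ inherits the empty observation set, its causal semantics equals its knowledge-\emph{why}, so no further condition must be verified to legitimize $\pi_{\textbf{CS}_{\textbf{i}}}^{\textit{why}}$.

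I expect the main obstacle to be the bookkeeping at the level of the factor graph $G(\textbf{CS}_{\textbf{i}})$ and its merged source node $S := \mathcal{E}^{\text{pure}}$: one must confirm that intervention commutes with the contraction of strongly connected components used to build $\graph(\textbf{CS})$ (here harmless, since $G$ is already a directed acyclic graph with singleton components) and, crucially, that marginalizing $\pi_{\Sigma}$ onto the surviving sources leaves their individual marginals unchanged. This last point is exactly where the independence of the source constraints in $\Sigma$, and hence the side condition of Definition~\ref{definition - semantics of external interventions in maximum entroy causal systems}, is indispensable: without it, deleting an intervened source could perturb the beliefs about the remaining sources and break the match with $\pi_{\textbf{BN}_{\textbf{i}}}$.
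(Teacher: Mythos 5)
Your proposal is correct and takes essentially the same route as the paper, which likewise passes to the pair $\textbf{BN}_{\textbf{i}}$ and $\textbf{CS}(\textbf{BN})_{\textbf{i}}$, observes that $\pi_{\textbf{CS}_{\textbf{i}}}^{\textit{why}}$ is Markov to $G_{\textbf{I}}$, and verifies $\pi_{\textbf{CS}_{\textbf{i}}}^{\textit{why}}(p \mid \pa(p)) = \pi_{\textbf{BN}_{\textbf{i}}}(p \mid \pa(p))$ by the argument of Theorem~\ref{theorem - Bochman transformation of Bayesian networks}. Your explicit verifications of the independence side condition in Definition~\ref{definition - semantics of external interventions in maximum entroy causal systems} (via the product-Bernoulli structure of $\Sigma$) and of the provides-demonstrations conditions for $\textbf{CS}_{\textbf{i}}$ merely spell out steps the paper leaves implicit.
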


\section{Conclusion}
\label{sec: Conclusion}
This paper introduces causal systems as a formal framework for distinguishing between knowledge-\textit{that} and \mbox{knowledge-\textit{why}}, as defined in Aristotle's \textit{Posterior Analytics}. It argues that external interventions can be meaningfully treated only on the basis of knowledge-\textit{why}. Embedding existing artificial intelligence technologies into the formalism of causal systems enables a classification of the type of knowledge they provide and an assessment of the feasibility of handling external interventions.

This work embeds LogLinear models \citep{MLN}, as well as Bayesian networks and causal models \citep{Causality}, into the framework of causal systems. \cite{rueckschloß2025rulesrepresentcausalknowledge} interpret abductive logic programs \citep{AbductiveLogicPrograms} as deterministic causal systems. In future work, it is further envisaged to analyze ProbLog programs \citep{ProbLog, Problog1} and $\LP^{\MLN}$ programs \citep{LPMLN} as maximum entropy causal systems.

We further propose extending maximum entropy causal systems to the context of first-order logic. We conjecture that the resulting theory will be expressive enough to encompass probabilistic logic programming \citep{PLP}, Markov logic networks \citep{MLN}, and relational Bayesian networks \citep{Jaeger97}. This would establish a unifying framework for \textit{relational artificial intelligence} \citep{StarAI}, interpreting it as the study of formalisms that capture the fundamental concepts of symmetry, uncertainty, and causal explanation.

According to \cite{Causality}, causal models can answer counterfactual queries, whereas Bayesian networks cannot. As a direction for future research, we propose characterizing the additional knowledge captured in causal models that enables this type of query.

\bibliographystyle{elsarticle-harv} 
\bibliography{literature}

\newpage

\appendix

\section{Glossary of Technical Terms and References}
\label{section: Glossary}

\begin{footnotesize}
\begin{longtable}{@{}p{4.2cm}p{11.8cm}@{}}
\toprule
\textbf{Term / Reference} & \textbf{Description} \\
\midrule
\endfirsthead
\toprule
\textbf{Term / Reference} & \textbf{Description} \\
\midrule
\endhead
$(\Rrightarrow)/2$ & Expresses explainability (e.g. Production Inference Relation) \\
Exact Theories & Set of formulas satisfying Principle 6 and Assumption 7 \\
Causal Worlds & Worlds satisfying Principle 6 and Assumption 7 \\
Causal Rules $\phi \Rightarrow \psi$ & Represent causal knowledge \\
Literal Causal Rules $\phi \Rightarrow l$ & Causal rules with literal effect $l$ \\
Atomic Causal Rules $\phi \Rightarrow p$ & Causal rules with atomic effect $p$ \\
Causal Theory $\Delta$ & Set of causal rules \\
Deterministic Causal System $\textbf{CS} := (\Delta, \mathcal{E}, \mathcal{O})$ & $\Delta$: literal causal theory, $\mathcal{E}$: external premises, $\mathcal{O}$: observations \\
Weighted Causal Rules & Weighted rule $(w, \phi \Rightarrow \psi)$ with $w \in \mathbb{R} \cup \{ \pm \infty \}$ \\
Maximum Entropy Causal System $\textbf{CS} := (\Theta, \mathcal{E}, \mathcal{O}, \Sigma)$ & $\Theta$: weighted rules, $\mathcal{E}$: external premises, $\mathcal{O}$: observations, $\Sigma$: superordinate science  \\
Bochman Transformation & Translation into the framework of causal systems \\
Modified \dots & Realization of interventions \\

Principle 1 & Causal explanations are instances of logical deduction (Consistency with Deduction) \\
Principle 2 & Causal explanations follow the causal order (Directionality) \\
Principle 3 & Causal explanations are finite and acyclic (Non-Circularity)\\
Principle 4  & Causal explanations root in external premises $\mathcal{E}$ (Causal Foundation)\\
Principle 5  & Causal knowledge is stated in rules (Causal Rules)\\
Principle 6  & Everything that is explainable actually holds (Natural Necessity)\\
Principle 8  & Intervention effects propagate along the causal direction (Non-Interference)\\
Principle 9  & Unobserved non-causes do not matter (Causal Irrelevance)\\
Principle 27  & Extend beliefs by maximizing entropy $H(\pi)$ (Maximum Entropy)\\

Assumption 7  & Everything that holds is explainable (Sufficient Causation)\\
Assumption 17  & Negative literals do not require an explanation (Default Negation) \\
Assumption 20 & Causal theories are literal \\

Language 10 & Propositional logic represents reasoning about knowledge-\emph{that} \\
Language 11 & Explainability $(\Rrightarrow)/2$ is a binary relation on formulas \\
Language 15 & Relates explainability $(\Rrightarrow)/2$ to the acquisition of knowledge-\emph{why} \\
Language 16 & Principle 6 and Assumption 7 lead to determinate causal theories \\
Language 18 & Assumption 17 gives rise to causal theories with default negation \\
Language 19 & Deterministic causal systems capture areas of science in deterministic case\\
Parametrization 28 & Suitable parametrization for probability spaces in combination with Principle 27 \\
Language 30 & Maximum entropy causal systems capture areas of science under uncertainty\\

Formalization 12 & Meaning of Principle 1 in the deterministic case \\
Formalization 13 & Meaning of Principle 6 and Assumption 7 in the deterministic case \\
Formalization 14 & Meaning of Principle 4 \\
Formalization 18 & Meaning of Assumption 17 \\
Formalization 21 & Meaning of areas of science in the deterministic case \\
Formalization 22 & Meaning of Principle 9 in the deterministic case \\
Formalization 23 & Meaning of knowledge-\emph{why} in deterministic case \\
Formalization 24 & Meaning of Principle 8 in deterministic case \\
Formalization 25 & Meaning of Principle 6 in deterministic case \\
Formalization 26 & Meaning of Assumption 7 in deterministic case \\
Formalization 29 & Resolves conflict between Principle 9 and Principle 27 \\
Formalization 31 & Relates Principle 1 and Principle 27 \\
Formalization 32 & Meaning of Principles 1,2,4,9,27, Assumption 7 and knowledge-\emph{why} under uncertainty \\
Formalization 33 & Meaning of Principle 8 under uncertainty \\
\bottomrule
\end{longtable}
\end{footnotesize}




\end{document}